\documentclass{article}

\usepackage{microtype}
\usepackage{graphicx}
\usepackage{subcaption}
\usepackage{booktabs} 

\usepackage{hyperref}

\usepackage[accepted]{icml2026}

\usepackage{amsmath}
\usepackage{amssymb}
\usepackage{mathtools}
\usepackage{amsthm}

\usepackage[capitalize,noabbrev]{cleveref}

\theoremstyle{plain}
\newtheorem{theorem}{Theorem}[section]

\newtheorem{lemma}[theorem]{Lemma}

\newtheorem{example}[theorem]{Example}
\theoremstyle{definition}
\newtheorem{definition}[theorem]{Definition}

\theoremstyle{remark}
\newtheorem{remark}[theorem]{Remark}

\usepackage[textsize=tiny]{todonotes}

\usepackage{tikz}
\usepackage{enumitem}
\graphicspath{{figures/}}
\usepackage{bm}
\usepackage{tikzscale}
\usepackage{sidecap}

\usepackage{times}

\newlength{\dhatheight}

\newcommand{\Alg}{\mathcal{A}}

\newcommand{\Y}{\mathcal Y} 
\renewcommand{\H}{\mathcal H} 

\DeclareSymbolFont{bbold}{U}{bbold}{m}{n}
\DeclareSymbolFontAlphabet{\mathbbold}{bbold}
\newcommand{\ind}{\mathbbold{1}}

\renewcommand{\P}{\mathbb P} 
\newcommand{\nats}{\mathbb{N}} 

\newcommand{\E}{\mathbb{E}}

\newcommand{\DataX}{\mathbb{X}}
\newcommand{\DataY}{\mathbb{Y}}

\newcommand{\ignore}[1]{}
\newcommand{\oldstuff}[1]{}

\newcommand{\tR}{\text{R}}
\newcommand{\cH}{\mathcal{H}}
\newcommand{\cX}{\mathcal{X}}
\newcommand{\cY}{\mathcal{Y}}

\newcommand{\cB}{\mathcal{B}}
\newcommand{\cT}{\mathcal{T}}
\newcommand{\indic}[1]{\mathbb{I}\left[#1\right]}
\newcommand{\indicin}[1]{\mathbb{I}[#1]}
\newcommand{\vu}{\mathbf{u}}
\newcommand{\vv}{\mathbf{v}}
\newcommand{\vw}{\mathbf{w}}

\newcommand{\vt}{\mathbf{t}}
\newcommand{\fy}{\mathbf{y}}
\newcommand{\regret}{\text{Regret}}

\newtheorem{openq}{Open Question}
\newtheorem{condition}{Condition}

\newenvironment{prfsk}{%
  \proof}{\endproof}

\newsavebox{\savepar}

\makeatletter
\newcommand{\vast}{\bBigg@{3}}
\newcommand{\Vast}{\bBigg@{4}}
\makeatother

\begin{document}

\twocolumn[
  \icmltitle{Universal Multiclass Transductive Online Learning}



  \icmlsetsymbol{equal}{*}

  \begin{icmlauthorlist}
    \icmlauthor{Steve Hanneke}{PUCS}
    \icmlauthor{Hongao Wang}{PUCS}
  \end{icmlauthorlist}

  \icmlaffiliation{PUCS}{Department of Computer Science, Purdue University, West Lafayette, IN 47907, USA.}

  \icmlcorrespondingauthor{Hongao Wang}{wang5270@purdue.edu}

  \icmlkeywords{Machine Learning, ICML}

  \vskip 0.3in
]



\printAffiliationsAndNotice{}  


\begin{abstract}
We consider the problem of universal transductive online classification with a possibly unbounded label space. This setting considers online learning, with the sequence of instances (without labels) known to the learner in advance. We say a concept class $\mathcal{H}$ is learnable if there is a learning algorithm $\mathcal{A}$, such that for every realizable sequence, the number of mistakes made by $\mathcal{A}$ grows at most sublinearly with the number of predictions. We characterize the learnability of this setting and show that there are only two possible optimal rates for the learnable classes: either bounded or increasing logarithmically. We introduce a new combinatorial structure, called ``Level-Constrained-Littlestone-Littlestone (LCLL) tree'', which, along with the \emph{indifference} property, characterizes the learnability. We also extend the learnability result to the agnostic case and the case where only the stochastic process that generates the instance sequence is known.
\end{abstract}
\section{Introduction}

Online learning \citep{littlestone:88} is a sequential game. At each round, the adversary chooses an instance from the instance space $\cX$. After the learner makes its prediction, the adversary chooses the true label from the label space $\cY$. The goal is to minimize the number of mistakes made by the learner. Under this setting, the learner has to face two types of uncertainties: \emph{labeling-related} uncertainty and \emph{instance-related} uncertainty. It is an interesting question to figure out what specific role each type of uncertainties, especially the labeling-related uncertainty, plays in the complete picture.
Therefore, \citet{ben1997online} introduced a new learning model to remove the instance-related uncertainty, which is called offline learning. In that model, the adversary chooses the instance sequence but reveals it to the learner in advance. Recently, \citet{hanneke2023trichotomy} renamed this setting to transductive online learning, due to the similarity to transductive PAC learning \citep{vapnik2006estimation}. Both focus on investigating the benefits of knowing the unlabeled data beforehand.

Another motivation for investigating the universal transductive online learning is from the work of \citet{hanneke2024a}. 
They investigated the problem of optimistically universal online learning with general concept classes for binary classification. 
In that work, they introduced the assumption of concept classes into the model of learning under minimal assumptions on the stochastic process that generates the instance sequence. 
The assumption on the stochastic process they made is that online learning is possible if the learner knows that stochastic process, which is called ``the process admits universal online learning''. 
Therefore, the problem of when all processes admit universal online learning is equivalent to the universal online learnability problem, where only the stochastic process generating the instance sequence is known. 
This is closely related to the universal transductive online learning problem, which is the universal online learnability problem, where the instance sequence is known.


\textbf{Transductive Online Learning.} Just as online learning, transductive online learning is a sequential game as well. There are two players, the adversary and the learner. Before the game starts, the adversary may pick a sequence of instances $\DataX = \{X_t\}_{t\in\nats}$, so that for every $t$, $X_t\in\cX$, where $\cX$ is a non-empty set called instance space. The instance sequence $\DataX$ is revealed to the learner in advance. Then at each round $t$, the learner makes a prediction $\hat{Y}_t$ based on the instance sequence $\DataX$ and the history of the true label $Y_{< t} = \{Y_i\}_{i<t}$. Then the adversary reveals the true label $Y_t \in \cY$, which can be used to inform the future prediction. Here $\cY$ is a countable set called label space. The learner suffers a loss, defined by the indicator function $\indicin{\hat{Y}_t \neq Y_t}$. For any learning algorithm, we use the number of mistakes to measure its performance, which is $\sum_{t = 1}^T \indicin{\hat{Y}_t \neq Y_t}$, in the realizable setting.


This model was first introduced in the work of \citet*{ben1997online}. Their main goal is to figure out how the label-related uncertainty affects the online learning procedure. Recently, the work of \citet{hanneke2023trichotomy} proved a trichotomy and showed that transductive online binary classification only has three possible rates $O(1)$, $\Theta(\log T)$, and $\Omega(T)$. They proved that if the Littlestone dimension of $\cH$ is finite, there is an algorithm that only makes finite mistakes to learn the target concept. If $\cH$ has finite VC dimension but has an infinite Littlestone dimension, the best rate is $\log T$. And if $\cH$ has infinite VC dimension, there is an adversary that can push any algorithm to make a mistake every round.  More recently, the work of \citet{hanneke2024multiclass} extended the result from binary classification to multiclass classification with countably infinite label spaces. In that work, they also proved a trichotomy. There are only three types of different rates, which are $O(1)$, $\Theta(\log T)$, and $\Omega(T)$. And the characterizations are the level-constrained branching dimension and the level-constrained Littlestone dimension. There is a substantial amount of literature investigating this topic recently, such as \citet{hanneke2025a,chase2025optimal}. However, all of the previous works focus on the uniform rates of the transductive online learning model. In other words, the bound is independent of the sequences. 

\textbf{Universal Learning.} In the work of \citet{bousquet2021theory}, they considered the distribution-dependent learning rates and sequence-dependent learning rates, in PAC learning and online learning, respectively, which is called ``universal learning rates''.
We focus on the universal rates of online learning here. 
Unlike uniform rates, which are sequence-independent, the universal rates are described as follows: for every realizable sequence, there is a function that bounds the number of mistakes.
This is a more realistic measure of learning rate, as during one learning procedure, we only need to face one specific data sequence. 
There are many works investigating the performance of online learning algorithms under the universal setting, such as \citep{pmlr-v167-blanchard22a,pmlr-v178-blanchard22b,hanneke2023universal,hanneke2025for}. 
There is also a line of work investigating the universal PAC learning, such as \citet{hanneke2022universal,hanneke2023universal,hanneke2024universala,hanneke2024universalb}. 

\textbf{Multiclass Learning.} \citet{daniely2015multiclass} extended the online learning from binary classification to multiclass classification with unbounded label spaces and showed that the Littlestone dimension characterizes the online learnability in the realizable case. \citet{pmlr-v195-hanneke23b} show that the Littlestone dimension also characterizes the online learnability in the agnostic setting. We would like to discuss the cases in which the label space is unbounded, because many interesting scenarios happen when we try to extend the label space from finite to unbounded. In PAC learning, there is also a line of work extending the binary classification to multiclass classification \citep{natarajan1988two,natarajan1989learning,brukhim2022characterization}.


In this paper, we investigate the universal learning rates of transductive online multiclass learning with a countably infinite label space.\footnote{The results for the realizable setting can be extended to uncountable label spaces, but not in the agnostic setting, as we require the number of experts to be countable.}
For the realizable setting, we provide a trichotomy for the universal multiclass transductive online learning. 
To describe this trichotomy, we use two combinatorial structures: the indifferent Littlestone tree and the indifferent Level-Constrained-Littlestone-Littlestone (LCLL) tree, which will be defined later.
Informally, the trichotomy can be stated as follows.
\setlist{nolistsep}
\begin{itemize}
    \item If $\cH$ has no infinite indifferent Littlestone tree, it can be learned with a constant number of mistakes.
    \item If $\cH$ has an infinite indifferent Littlestone tree but has no infinite indifferent LCLL tree, it can be learned with $\Theta(\log T)$ number of mistakes.
    \item If $\cH$ has an infinite indifferent LCLL tree, it cannot be learned with $o(T)$ number of mistakes.
\end{itemize}
We also extend the learnability results to the agnostic case and provide a $\tilde{O}(\sqrt{T})$ upper bound for the concept classes with no infinite indifferent LCLL tree. $\tilde{O}(\sqrt{T})$ hides the poly-logarithmic factors. 

For binary classification, the results of \citet{hanneke2024a} imply that the VCL tree characterizes the universal transductive online learnability. 
Therefore, several combinatorial structures that extend the VCL tree to the multiclass setting may characterize the universal multiclass transductive online learnability, such as the DSL tree \citep{hanneke2023universal}, the LCL tree \citep{hanneke2024multiclass}, or the LCL-Littlestone tree. 
Surprisingly, it turns out that none of these choices is the right answer. 

Recall the proof of the universal transductive online learnability for binary classification. 
They use the VCL game, which is a Gale-Stewart game\footnote{A Gale-Stewart game is a perfect information two-player infinite game. More details are provided in \Cref{sec:g-s-game}.}, defined in \citet{bousquet2021theory} to prove the upper bound. 
Then, due to a lemma in \citet{BHMST23}, every concept class that has an infinite Littlestone tree (VCL tree) also has an infinite indifferent Littlestone tree (VCL tree). 
They say that the concept has an indifferent Littlestone tree if, for every node in the tree, all its descendants agree on the value of the instances that precede it in Breadth-First-Search order. 
This property allows the adversary to choose the true labels by performing a random walk starting from the root, even though the learner knows the instance sequence in advance. 

However, this nice lemma does not hold when label spaces are unbounded. 
In fact, there exists a concept class $\cH$ that has an infinite Littlestone (LCLL) tree but has no infinite indifferent Littlestone (LCLL) tree, shown in \Cref{eg:inf_LCLL}. 
This example also suggests that the indifferent Littlestone (LCLL) tree should be the correct characterization. 
However, the structure of the Gale-Stewart game is naturally related to the Littlestone-tree type of trees and is difficult to apply to indifferent trees.
At each round of the game, the adversary proposes a sequence of instances, and the learner picks a sequence of labels. 
Therefore, the winning strategy of the adversary automatically leads to an infinite Littlestone-type tree. 
This mismatch prompts us to develop a new method for designing a Gale-Stewart game that captures the indifferent property of the tree. 
It turns out that the adversary needs to provide all the possible label sequences for an interval of instances, i.e., the adversary needs to provide all possible labels for instances from $X_{t_k+1}$ to $X_{t_{k+1}}$ such that $X_{t_{k+1}}$ has two labels\footnote{Here we use the Littlestone tree as an example, as it is easier to describe than LCLL tree.}, instead of a sequence of instances. 
As far as we know, it is the first time to design such a type of Gale-Stewart game in the universal learning literature, and this design would be useful to solve any problem whose characterization is the infinite indifferent Littlestone-type trees. 
We want to highlight this design of the Gale-Stewart game as the main conceptual contribution of this work, and we believe it can be used as a tool for more multiclass learning problems in the online learning context.


As we mentioned before, the universal transductive online learnability problem is closely related to the characterization of the case where all processes admit universal online learning. 
Therefore, we also extend the learnability result to the case where only the stochastic process generating the instance sequence is known in advance, that is, the condition that all processes admit universal multiclass online learning.

\textbf{Organization of the paper.} In this paper, we first provide the notations and the main results in section \ref{sec:pre}. Then we provide several examples to show that some reasonable guesses are not the correct characterization in section \ref{sec:example}. After that, we show the high-level ideas of the proof of the trichotomy and the learnability results in \Cref{sec:real} and \Cref{sec:ag}. Due to the lack of space, we put our detailed proofs and the results for stochastic processes in the appendices.
\section{Preliminaries and Main Results}
\label{sec:pre}

In this section, we provide formal definitions, model settings, and a brief list of our main results. 

\textbf{Model Setting.} We formally provide our learning model here. $\cX$ and $\cY$ are both non-empty sets. $\cX$ is the instance space and $\cY$ is the label space.
Here we focus on the learning on $0$-$1$ loss: that is, $(y,y') \mapsto \indic{y\neq y'}$, where $\indic{\cdot}$ is the indicator function. 
The concept class $\cH \subseteq \cY^{\cX}$ is a non-empty set of measurable functions from $\cX$ to $\cY$. 
The instance sequence $\DataX = \{X_t\}_{t\in\nats}$ is a sequence of elements $X_t \in \cX$, and the label sequence $\DataY = \{Y_t\}_{t\in\nats}$ is a sequence of elements $Y_t \in \cY$. In this paper, we often use $(X_{\leq t},Y_{\leq t})$ to stand for $\{(X_i,Y_i)\}_{i\leq t}$. 

We also use the definition of a partial concept, which is a partial function from $\cX$ to $\cY$. 
A partial function allows the label of some instances to be undefined, and no matter what prediction was made on that instance, it is a mistake.

In the transductive online learning setting, the instance sequence and the label sequence are both chosen by the adversary.
However, the instance sequence $\DataX$ is revealed to the learner in advance, so that the learner can leverage the information of future instances to help design the learning algorithm.

The transductive online learning algorithm is a sequence of measurable functions $f_t:\cX^{\infty}\times\cY^{t-1}\rightarrow\cY$, where $t$ is a non-negative integer and it is usually represented by $\Alg$.
For convenience, we usually use $\hat{Y}_t$ as the prediction of the algorithm $\Alg$ at round $t$.

Following the tradition of universal learning, we define the set of realizable sequences as follows. 
\begin{definition}
    \label{def:realizable}
    For every concept class $\cH$, we can define the following set of processes $\tR(\cH)$:
    \begin{align*}
        \tR(\cH) := \{& (\DataX,\DataY) = \left\{(X_i,Y_i)\right\}_{i\in \nats}: \\
        &\forall n < \infty, \left\{(X_i,Y_i)\right\}_{i\leq n} \text{ realizable by }\cH\}.
    \end{align*}
\end{definition}

Following the tradition of online learning, we use the number of mistakes to measure the performance of the online learning algorithm in the realizable case, which is defined as follows. 
\begin{definition}
    The number of mistakes of the learning algorithm $\Alg$ with respect to the data sequence $(\DataX,\DataY)$ is $M(\Alg,(\DataX,\DataY),T) = \E[\sum_{t = 1}^T \indicin{Y_t \neq \hat{Y}_t}]$.
\end{definition}
However, in the agnostic case, we do not have the assumption that the label sequences are all realizable, instead, they may be arbitrary. 
Therefore, the number of mistakes made by the learning algorithm can be arbitrarily large and it is not an appropriate measure of learnability. 
Hence, we use \emph{regret} to measure the learnability, which is the difference between the performance of our algorithm and the best realizable sequence. 
Formally,
\begin{definition}
    The regret of the learning algorithm $\Alg$ with respect to the data sequence $(\DataX,\DataY,\DataY^*)$, where $(\DataX,\DataY^*) \in \tR(\cH)$ is $\regret(\Alg,(\DataX,\DataY,\DataY^*),T) = \E[\sum_{t = 1}^T (\indicin{Y_t \neq \hat{Y}_t} - \indicin{Y_t \neq Y^*_t})]$.
\end{definition}
If there is an algorithm $\Alg$, such that $\regret(\Alg,(\DataX,\DataY,\DataY^*),T) = o(T)$ for every sequence $(\DataX,\DataY,\DataY^*)$, where $(\DataX,\DataY^*) \in \tR(\cH)$, we say $\cH$ is universally transductively online learnable for the agnostic case.

It is common and traditional to use combinatorial structures related to the concept class to build the mistake or regret bound in the online learning setting. We follow this tradition and provide the combinatorial structures we use here. 

We first define the Littlestone tree for multiclass learning. 
\begin{definition}[Littlestone Tree\citep{hanneke2023universal}]
    A Littlestone Tree for $\cH$ of depth $d \leq \infty$ is a collection
    \begin{equation*}
        \{x_{u}\in \cX:0\leq k < d, u \in \{0,1\}^k\}
    \end{equation*}
    and a sequence of functions $\fy_i:\{0,1\}^i\rightarrow \cY$ satisfy
    \setlist{nolistsep}
    \begin{enumerate}
        \item For every $i<d$, $\fy_{i}(u_{< i},0) \neq \fy_{i}(u_{< i},1)$, where $u_i\in \{0,1\}$ and $u_{< i} = (u_1,\ldots,u_{i-1})$. 
        \item There exists a concept $h\in\cH$, such that for every $i\leq d$, $h(x_{u_{<i}}) = \fy_i(u_{<i},u_i)$, where $u_i\in \{0,1\}$ and $u_{< i} = (u_1,\ldots,u_{i-1})$.
    \end{enumerate}
\end{definition}
Then we provide a combinatorial dimension used in the multiclass transductive online learning, an extension of VC dimension in the multiclass setting, which is called the Level-Constrained Littlestone dimension. 
Intuitively, it is a Littlestone tree, but every node in the same depth are labeled with the same instance. 
Check Figure~\ref{fig:lcl} for illustration.
\begin{definition}[Level-Constrained Littlestone Tree (LCL tree)]
    A Level-Constrained Littlestone tree (LCL tree) for $\cH$ is a complete binary tree of depth $d \leq \infty$ whose internal nodes in depth $k$ are labeled by element $x_k \in \cX$, and whose two edges connecting a node to its children are labeled by $y_1, y_2\in \cY$ ($y_1\neq y_2$), such that every finite path emanating from the root is consistent with a concept $h \in \cH$. 

    If $\cH$ has an LCL tree with depth $d$, but doesn't have one with depth $d+1$, we say the LCL dimension of $\cH$ is $d$.
\end{definition}

\begin{figure}
\centering
\includegraphics{lcl}
\caption{A Level-Constrained Littlestone tree of depth $3$. Every branch is 
consistent with a concept $h\in\mathcal{H}$. We take $\cH\subseteq \nats^{\cX}$ for convenience. 
The only restriction is that the two edges connecting two children of the same node should be labeled with different labels.
This is illustrated here for one of the branches.\label{fig:lcl}}
\end{figure}

We then need to define the LCL-Littlestone tree, which is an extension of the VCL tree to multiclass learning. 
Intuitively, it is a Littlestone tree, but every node at depth $k$ is not labeled by an instance but by a sequence of $k+1$ instances, which can form a depth $k+1$ LCL tree, where the leaves are not labeled. 
Each node has not just $2$ children but $2^{k+1}$ children, and each edge is not labeled by a label but by a sequence of labels, which is a path from the root to one leaf in the depth-$k+1$ LCL tree. 
Check Figure~\ref{fig:lcll} for illustration. 

\begin{definition}[LCL-Littlestone Tree]
\label{def:lcll}
    An LCL-Littlestone tree for $\cH$ of depth $d\leq \infty$ is a collection
    \begin{equation*}
        \{x_{u} \in \cX^{k+1}: 0\leq k<d,u\in \{0,1\}^{1}\times \{0,1\}^2\times \cdots\times \{0,1\}^k\}
    \end{equation*}
    and a sequence of functions $\fy_{n,i}: \{0,1\}^{1}\times \{0,1\}^2\times \cdots\times \{0,1\}^{n}\times \{0,1\}^{i}\to \cY$ satisfy
    \setlist{nolistsep}
    \begin{enumerate}
        \item $\fy_{n,i}((u_{\leq n},(u_{n+1}^{\leq i-1},0))) \neq \fy_{n,i}((u_{\leq n},(u_{n+1}^{\leq i-1},1)))$, for every $n<d$ and $i\leq n+1$. Here $u_n \in \{0,1\}^n$, $u_{\leq n} = (u_1,u_2\ldots,u_n)$ and $u_{n+1}^{\leq i-1} = (u_{n+1}^1,\ldots,u_{n+1}^{i-1})$.
        \item There exists a concept $h\in\cH$, such that $h(x_{u\leq k}^i) = \fy_{k,i}((u_{\leq k},(u_{k+1}^1,\ldots,u_{k+1}^i)))$ for all $1\leq i\leq k+1$ and $0\leq k\leq n$, where we denote
        \begin{align*}
            &u_{\leq k} = (u_1^1,(u_2^1,u_2^2),\dots,(u_k^1,\dots,u_k^{k})),\\
            &x_{u_{\leq k}} = (x_{u_{\leq k}}^1,\dots,x_{u_{\leq k}}^{k+1})
        \end{align*}
    \end{enumerate}
    
    We say that $\cH$ has \textbf{an infinite LCL-Littlestone tree} if it has an LCLL tree of depth $d=\infty$.
\end{definition}
\begin{figure}
\centering
\includegraphics[height = 0.3\textheight]{lcll}
\caption{An LCLL tree of depth $3$. 
Every branch is 
consistent with a concept $h\in\mathcal{H}$. This is 
illustrated here for one of the branches. Due to a lack of space, not all 
external edges are drawn. 
In this figure, we take $\cH\subseteq \nats^{\cX}$ for convenience.
This figure is modified from the work of \citet{bousquet2021theory}.
\label{fig:lcll}}
\end{figure}
Then we need to define the \emph{indifferent} property of the tree, which can be applied to the Littlestone tree or the LCLL tree defined before. This definition is extended to unbounded label spaces from the same definition in the work of \citet{BHMST23}, which only defines the property on binary label spaces. 
Intuitively, this property says all the descendants of a node agree on the value of the node that comes before that node in Breadth-First-Search Order.
\begin{definition}[Indifferent Tree. Extended from the work of~\citet{BHMST23}]
\label{def:indifference}
    Let $\cX$ be a set and $\cH \subseteq \cY^{\cX}$ be a hypothesis class, and let
    \begin{equation*}
        T = \left\{x_{\vu} \in \cX: \vu\in(\{0,1\})^*\right\}\text{ and }\{\fy_i\}_{i\in\nats\cup \{0\}}
    \end{equation*}
    be an infinite perfect binary tree that is shattered by $\cH$. This implies the existence of a collection
    \begin{equation*}
        \cH_T = \left\{h_{\vu} \in \cH: \vu \in (\{0,1\})^* \right\}
    \end{equation*}
    of consistent functions.
    
    We say such a collection is \textbf{\emph{indifferent}} if for every $\vv,\vu,\vw \in (\{0,1\})^*$, if $\vv$ comes before $\vu$ in Breadth-First-Search order, and $\vw$ is a descendant of $\vu$ in the tree $T$, then $h_{\vu}(x_{\vv}) = h_{\vw}(x_{\vv})$. In other words, the functions for all the descendants of a node that appears after $\vv$ agree on $\vv$.
    We say that $T$ is \textbf{\emph{indifferent}} if it has a set $\cH_T$ of consistent functions that are indifferent.
\end{definition}

Then we can provide the trichotomy for the realizable case and the learnability result for the agnostic case.
\begin{theorem}
\label{thm:realizable}
    For the realizable case, against any adversary (represented by $(\DataX,\DataY)\in\tR(\cH)$), we have the following trichotomy:
    \setlist{nolistsep}
    \begin{enumerate}
        \item If and only if $\cH$ does not have an infinite indifferent Littlestone tree, there exists an algorithm $\Alg$, such that $M(\Alg,(\DataX,\DataY),T) = O(1)$.
        \item If and only if $\cH$ has an infinite indifferent Littlestone tree but does not have an infinite indifferent LCLL tree, there exists an algorithm $\Alg$, such that $M(\Alg,(\DataX,\DataY),T) = \Theta(\log T)$.
        \item If $\cH$ has an infinite indifferent LCLL tree, no learners can guarantee sublinear mistakes.
    \end{enumerate}
\end{theorem}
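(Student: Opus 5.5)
The proof splits into three upper bounds and three lower bounds, organized around two Gale--Stewart games, one per tree type. The games are designed so that an adversary's winning strategy directly yields an \emph{indifferent} tree. Each game runs between an adversary $P_A$ and a learner $P_L$. In round $k$ of the Littlestone-type game, $P_A$ proposes an interval of instances $X_{t_{k-1}+1},\dots,X_{t_k}$ taken from the known sequence $\DataX$. It also proposes \emph{all} label assignments on that interval that are consistent with the history and the version space, and the last instance $X_{t_k}$ must receive at least two distinct labels. $P_L$ then picks one continuation. $P_A$ wins if the resulting finite history stays realizable forever. The LCLL game is analogous, except that the interval in round $k$ must contain $k$ instances forming a level-constrained subtree.

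The argument for each game has two directions.
\begin{itemize}
\item If $P_A$ has a winning strategy, we unfold it into an infinite tree. Because $P_A$ commits to all labels of the instances preceding the branching instance before $P_L$ moves, every descendant agrees on earlier nodes in BFS order. This is exactly the indifference property of Definition~\ref{def:indifference}.
\item If $\cH$ has no infinite indifferent tree, then $P_A$ has no winning strategy. The games are open (a loss for $P_A$ is witnessed at a finite time), so by the Gale--Stewart theorem $P_L$ has a winning strategy. This strategy is universally measurable, but in the transductive setting measurability is not an issue, since $\DataX$ is fixed.
\end{itemize}

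\emph{Upper bounds.} For item~1, we follow the scheme of \citet{bousquet2021theory}, adapted to the transductive setting. The learner simulates the game along the known $\DataX$ using $P_L$'s winning strategy. At each step it predicts with the standard ``label that keeps the game alive'' rule, so each mistake advances the game by one round. Since $P_L$ wins, the game ends after finitely many rounds, and the number of rounds depends only on $(\DataX,\DataY)$. This gives $O(1)$ mistakes per sequence.

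For item~2, the learner uses $P_L$'s winning strategy in the LCLL game to obtain a sequence of ``patterns'' for finite blocks. After finitely many rounds, which again depend only on the sequence, the strategy outputs a bound $k$: the version space, restricted to the remaining instances of $\DataX$, admits no LCL tree of depth $k$ among the instances. We then invoke the uniform result of \citet{hanneke2024multiclass}, under which finite LCL dimension gives $O(\log T)$ mistakes through a halving-type argument over level-constrained trees. We run that learner on the suffix after the game has stabilized, restarting the $\log T$ learner whenever a mistake advances the game; there are only finitely many restarts. This yields $O(\log T)$.

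\emph{Lower bounds.} Each lower bound uses a randomized adversary that walks down the infinite indifferent tree. The instance sequence is the BFS enumeration of the tree's nodes (for LCLL, the concatenated level instances), fixed in advance. Labels are chosen by a random walk: at each branching node the adversary flips a fair coin. Indifference guarantees that the labels of all nodes preceding the current one in BFS order are already determined by the chosen path, so the labeled sequence is realizable for every $n$. Off-path nodes receive labels consistent with $h_{\vu}$, which is forced.
\begin{itemize}
\item In the Littlestone case, path nodes appear at BFS positions $2^k$, so the learner makes $\Omega(\log T)$ expected mistakes. Hence $o(\log T)$, and in particular $O(1)$, is impossible for this instance sequence.
\item In the LCLL case, the depth-$k$ block consists of $k+1$ consecutive instances, each with a fair coin. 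Summing gives a constant fraction of $T$ over blocks, up to the BFS overhead; we will re-order the enumeration so that only the on-path blocks appear, which indifference permits. The result is $\Omega(T)$ expected mistakes.
\end{itemize}
To pass from the expectation over the random walk to a single deterministic sequence $(\DataX,\DataY)\in\tR(\cH)$ that defeats any fixed learner, we use a Fatou or Borel--Cantelli argument, as in \citet{bousquet2021theory}.

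The main obstacle is the game-to-tree step: showing that $P_A$'s winning strategy induces a genuinely indifferent tree, and that a winning strategy for $P_L$ can be run on a fixed sequence whose intervals $P_A$ does not choose. For this we will let the learner consider the intervals dictated by $\DataX$ itself. Whenever a mistake occurs at $X_t$, we treat the interval since the last mistake as the adversary's move, with the set of all consistent labelings as the proposed assignments. We also need to check that realizability of that whole set holds with respect to the current version space.
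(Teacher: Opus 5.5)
Your overall architecture (one Gale--Stewart game per tree type, unfolding $P_A$'s winning strategy in BFS order, random-walk lower bounds) matches the paper's. However, the game you define cannot drive the upper bounds, and the LCLL lower bound fails as sketched.

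\textbf{The game.} In your game $P_A$ offers \emph{all} consistent labelings and $P_L$ only ever picks a consistent one. So the history never becomes non-realizable, and $P_L$ wins only by steering the version space into a state where $P_A$ has no legal move. That is a statement about labels $P_L$ chooses, not labels an adversary chooses. Concretely, let $\cH=\cH_T\cup\{h^*\}$, where $\cH_T\subseteq\{0,1\}^{\cX}$ witnesses an infinite indifferent Littlestone tree $T$ and $h^*(x_\varnothing)=2$. On the BFS sequence of $T$, $P_L$ wins in round one by choosing $h^*$'s labeling, yet no learner makes $O(1)$ mistakes on that sequence. Operationally, your strategy returns a single labeling of the whole interval. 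With infinitely many labels, a mistake therefore need not advance the game: the truth may be a third label, or the strategy's labeling may disagree with the already observed prefix. Likewise, what becomes bounded after stabilization is not the version space, which can keep shattering arbitrarily long later blocks forever even for classes with no infinite Littlestone tree.

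\textbf{The paper's fix.} In the paper's game, $P_A$ proposes two branches that share the labels of $X_{t_{k-1}+1},\dots,X_{t_k-1}$ and differ only at $X_{t_k}$. For LCLL, $P_A$ proposes a level-constrained tree $C_k$ of $2^k$ branches whose intermediate labels are fixed given the earlier branching choices. $P_B$ picks one branch and wins when the version space is empty. The learner advances the game whenever the observed labels equal $g_U$'s pick for some offer, and predicts from the class $\cH^{g_U}$ of labelings never picked. Offered two labels for $X_t$, $g_U$ must pick one, so at most one label of $X_t$ is never picked. Hence every mistake advances the game, which can happen only finitely often. For LCLL, a length-$k$ subsequence shattered by $\cH^{g_U}$ in the sense of Definition~\ref{def:shattering} would itself be a legal offer all of whose branches avoid $g_U$, which is impossible. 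So after stabilization $\cH^{g_U}$ has bounded shattering length. This partial class is then learned by the weight-halving argument of Lemma~\ref{lem:partial_concept}, not by the uniform result for total classes.

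\textbf{The LCLL lower bound.} The BFS overhead is the whole difficulty, not a lower-order term. Depth $d$ alone has $2^{d(d+1)/2}$ nodes with $d+1$ instances each, so on-path instances form a vanishing fraction of the sequence. Listing only the on-path blocks is not allowed. $\DataX$ is revealed to the learner in advance and must not depend on the walk; otherwise the identity of the depth-$(d+1)$ block reveals which edge, i.e.\ which label sequence, the walk took at depth $d$, and the learner makes no mistakes. The paper keeps the full BFS enumeration but first modifies the tree, promoting descendants so that the $K$-th node in BFS order holds $2^{K-1}$ instances. The on-path node with BFS index $K_d$ is then half of the prefix ending at time $2^{K_d}-1$, so the expected error fraction at those times is at least $1/4$. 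Reverse Fatou then gives a $\limsup$ average error bounded away from $0$.
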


\begin{theorem}
\label{thm:agnostic}
    A concept class $\cH$ is agnostically universally transductively online learnable, if and only if $\cH$ has no infinite indifferent LCLL tree. 
\end{theorem}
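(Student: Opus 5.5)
The statement to prove is \Cref{thm:agnostic}, and the plan splits into two directions.

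\textbf{Necessity.} If $\cH$ has an infinite indifferent LCLL tree, then by item 3 of \Cref{thm:realizable} there is an adversary producing a realizable sequence $(\DataX,\DataY)\in\tR(\cH)$ on which every learner makes a linear number of mistakes. Concretely, the instance sequence is the BFS-ordered concatenation of the tree's nodes, and the labels come from a random walk; indifference makes the labels of earlier instances consistent along the walk. Setting $\DataY^*=\DataY$ gives regret equal to the mistake count, which is $\Omega(T)$. So agnostic learnability fails. The only care needed is that the lower bound is stated for a fixed sequence, not just in expectation over the random walk. I would handle this by noting that the expected mistakes over the random walk exceed $cT$ for infinitely many $T$, and then extracting a single branch by the standard Fatou/reverse-Fatou argument, as in the realizable lower bound.

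\textbf{Sufficiency.} Assume $\cH$ has no infinite indifferent LCLL tree. The plan is to build a countable family of experts and run a weighted-majority style algorithm with growing, unequal priors.

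1. By the realizable analysis (item 2 of \Cref{thm:realizable}), there is a learner $\Alg$ that, on every realizable sequence $(\DataX,\DataY^*)$, makes at most $C(\DataX,\DataY^*)\log T$ mistakes. This learner comes from the winning strategy in the interval-type Gale--Stewart game, followed by a halving-style transductive learner on the induced partial class.

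2. Experts are indexed by a finite set $S\subseteq\nats$ of ``mistake times''. Expert $E_S$ simulates $\Alg$, feeding it, at each time $t\in S$, the label that $\Alg$ did \emph{not} predict. Here the true realizable label must be guessed, so the index also records that label, and countability of $\cY$ keeps the family countable. At all other times, the expert feeds $\Alg$ its own prediction. Since $\DataX$ is known in advance, each expert is a deterministic function of $\DataY^*$'s mistake pattern, and for the true $\DataY^*$ the expert whose $S$ equals $\Alg$'s actual mistake set (with the correct labels) predicts exactly $Y^*_t$.

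3. Run exponential weights over the experts with prior weight roughly $2^{-|S|\log t-\sum\log(\text{label index})}$, restricting at horizon $T$ to sets $S\subseteq[T]$ and using a doubling trick. Regret against the good expert is then $O(\sqrt{T\log(1/\pi(E_S))})$.

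4. With $|S|\le C\log T$, we have $\log(1/\pi)=O(C\log^2 T)$ plus a term depending on the fixed labels, which gives $\tilde O(\sqrt{T})=o(T)$.

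\textbf{Main obstacle.} The delicate step is 2. Expert simulation has to coincide with $\Alg$'s run on $\DataY^*$ even though $\Alg$ may be randomized, and the growing horizon must not destroy the prior bound. To handle randomization, I would first derandomize $\Alg$ by fixing its internal seed, or use a deterministic version from the game argument. To handle the horizon, I would use a time-dependent prior $\pi_t$ together with an anytime exponential-weights analysis. This keeps the $\mathrm{poly}\log$ factor dependent only on the sequence, which suffices for the universal guarantee.
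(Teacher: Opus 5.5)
Your necessity direction is fine and is what the paper uses: the realizable lower bound of \Cref{lem:all_necessity} with $\DataY^*=\DataY$. The gap is in steps 2--4 of sufficiency. You hardcode $Y^*_t$ at every mistake time $t\in S$ into the expert's index and then treat $\sum_{t\in S}\log(\text{label index})$ as ``a term depending on the fixed labels''. It is not fixed. The expert you need at horizon $T$ must record the labels at all of $\Alg$'s mistakes up to $T$, this set keeps growing, and nothing ties the indices of the newly recorded labels to $T$. Countability of $\cY$ makes the family countable, but it does not lower-bound the prior of the expert you need.

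Concretely, let $\cX$ be the dyadic rationals in $(0,1)$, $g:\cX\to\nats$ an injection, $\cY=\{0,1\}\times\nats$, and $\cH=\{x\mapsto(\indicin{x>a},g(x)):a\in[0,1]\}$. This is thresholds in disguise. Its LCL dimension is $1$, so it has no infinite LCLL tree. It does have an infinite indifferent Littlestone tree (a binary search tree), and the nodes can be chosen so that the node at BFS position $k$ carries labels of index at least $2^{2^k}$ in your enumeration of $\cY$. As in \Cref{lem:logT_lower}, there is a branch on which $\Alg$ errs infinitely often, always at visited nodes, hence always at labels of doubly exponential index. At each such mistake time $T$, your good expert has $\log(1/\pi)$ of order at least $2^T$, so the bound $\sqrt{T\log(1/\pi)}$ is nowhere near $o(T)$.

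The paper avoids this by never hardcoding labels after the game phase. An expert $e_{i,j}$ stores only two things:
\begin{itemize}
\item a prefix $I=(X_{\leq k},Y^*_{\leq k})$, whose index is a constant because the game stops advancing after finitely many rounds (\Cref{lem:winning_1});
\item a set $J$ of times.
\end{itemize}
At $t\in J$ the expert updates its simulated learner with the label $Y_t$ \emph{revealed by the adversary}. Take $J$ to be the times where the simulated prediction differs from $Y^*_t$ and $Y_t=Y^*_t$. Every update is then consistent with $\DataY^*$, so the halving argument of \Cref{lem:partial_concept} gives $|J\cap\{1,\ldots,T\}|=O(\log T)$. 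At the remaining times where the expert disagrees with $Y^*_t$, we have $Y_t\neq Y^*_t$, so the comparator errs as well. Hence the expert's loss exceeds that of $\DataY^*$ by only $O(\log T)$, at index cost $\log j_T=O(\log^2 T)$. Squint with prior $\pi_{i,j}$ then gives $O(\sqrt{T}\log T)$ regret. With this change, which also removes any need to guess the label $\Alg$ did not predict, the rest of your plan goes through.
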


We also extend the learnability results to the case where only the stochastic process generating the instance sequence is known to the learner in advance. 
This result answers the condition when all processes admit universal online learning. 
We provide the details and discussion of this setting in the appendices. 
\section{Examples}
\label{sec:example}

In this section, we provide some insightful examples to show that several natural combinatorial structures are not the right choices for characterization.





\begin{example}[$\cH$ with infinite LCLL tree, but learnable.]
\label{eg:inf_LCLL}
    Let $\cX$ be the instance space, $\cY = \nats\cup\{0\}$ be the label space. Let $u = (u_1^1,(u_2^1,u_2^2),\ldots,(u_k^1,\ldots,u_k^k)) \in \{0,1\}\times\{0,1\}^2\times\cdots\times\{0,1\}^k$, $k\in\nats$. Consider the following concept class:
    \begin{align*}
        \cH = \{&h_u: \forall u, h_u(X_{u_{\leq k-1}}^i) = u_k^i, \\
        &\text{ and all the other } X\in \cX, h_u(X) = |u|+1\},
    \end{align*}
    where $u_{\leq k-1} = (u_1^1,(u_2^1,u_2^2),\ldots,(u_{k-1}^1,\ldots,u_{k-1}^{k-1}))$, $X_{u_{\leq k-1}} \in \cX^k$, and $|u|$ is the number of different tuples in $u$. For example, $|(u_1^1,(u_2^1,u_2^2),\ldots,(u_k^1,\ldots,u_k^k))| = k$.
\end{example}

First, we want to claim that $\cH$ has an infinite LCLL tree. It is easy to see that for the following collection:
\begin{equation*}
    T = \{X_u \in \cX^{k+1}: u\in \{0,1\}\times\{0,1\}^2\times\cdots\times\{0,1\}^k\}
\end{equation*}
and the sequence of functions $\fy_{k,i}(u) = u_{k+1}^i$, for any $k$, any $u$, and any $j\leq k$, any $i\leq j+1$, $h_u(X_{u_{\leq j}}^i) = \fy_{j,i}((u_{\leq j},(u_j^1,\ldots,u_j^i)))$.
That means there is an infinite LCLL tree shattered by the concept class $\cH$.

However, for any fixed instance sequence $\DataX$, we can learn it with a finite number of mistakes by using the following method.
For convenience, we can assume the target concept is $h_u$. 
If there is an $X_t\notin \{X_{u_{\leq k}}^i, \forall k\leq |u|, i\leq k+1\}$, by witnessing its label, we know $|u|+1$. 
There will be only a finite number of different functions; thus, we can learn the target function by making finite mistakes. 
Then we know every $X_t \in \{X_{u_{\leq k}}^i, \forall k\leq |u|, i\leq k+1\}$, and we can make the prediction of $X_t$ by following $u$.
Thus, this example demonstrates that a concept class with an infinite LCLL tree remains universally transductively online learnable.

\begin{remark}
    We built this example very carefully to make sure we only use countably infinite labels. If we allow uncountable label space, we can change $|u|+1$ by $u$. 
\end{remark}
This example also shows that the DSL tree is not the correct answer, as an infinite LCLL tree is also an infinite DSL tree.
A similar construction shows that there is a concept class that has an infinite Littlestone tree but does not have an infinite indifferent Littlestone tree.

\begin{example}[$\cH$ with no infinite LCL tree, but not learnable.\citep{bousquet2021theory}]
\label{eg:no_inf_LCL}
    Let $\cX$ be the instance space, $\cY = \{0,1\}$ be the label space. Let $u = (u_1^1,(u_2^1,u_2^2),\ldots,(u_k^1,\ldots,u_k^k)) \in \{0,1\}\times\{0,1\}^2\times\cdots\times\{0,1\}^k$, $k\in\nats$. Consider the following collection:
    \begin{equation*}
        T = \{X_u \in \cX^{k+1}: u\in \{0,1\}\times\{0,1\}^2\times\cdots\times\{0,1\}^k\}
    \end{equation*}
    Then for every $u$, we define
    \begin{align*}
        \cH_u = \{&h: h(X_{u_{\leq k}}^i)= u_{k+1}^i, h(X_u^i) = 0 \text{ or }1,\\
        &h(X) = 1 \text{ for all the other} X.\}
    \end{align*}
    and $\cH = \bigcup_u \cH_u$. Here $u_{\leq k} = (u_1,u_2,\ldots,u_k)$.
\end{example}
It is easy to notice that for every $d \in \nats$, for every $k\leq d$, we have the collection $T$ which is shattered by $\cH$. 
And for every node in the LCLL tree, all the functions for its descendants agree on the nodes that come before it in lexicographic order (labeled with $1$). 
However, for any given sequence $\DataX = \{X_t\}_{t\in\nats}$, as $X_1\in X_u$ for some $u$, the depth of the LCL tree defined by this sequence is at most $|u|+1$. 
Here $|u|$ is the number of different tuples in $u$. 
For example, $|(u_1^1,(u_2^1,u_2^2),\ldots,(u_k^1,\ldots,u_k^k))| = k$.
Thus, $\cH$ does not have an infinite LCL tree.
\section{Realizable Setting}
\label{sec:real}

In this section, we focus on the realizable setting and provide the high-level idea of how to prove Theorem~\ref{thm:realizable}. 
\subsection{No Sublinear Mistake Bound}
We first prove the lower bound when $\cH$ has an infinite indifferent LCLL tree.
\begin{lemma}
\label{lem:all_necessity}
    If $\cH$ has an infinite indifferent LCLL tree, the adversary can choose a data sequence to force any learner to make more than sublinear mistakes.
\end{lemma}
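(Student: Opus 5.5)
The plan is to mimic the lower bound of \citet{bousquet2021theory} and \citet{hanneke2024a} for VCL trees, replacing the Littlestone-type random walk with a random walk on the indifferent LCLL tree. Concretely, fix an infinite indifferent LCLL tree $T=\{x_u\}$ with labelling functions $\fy_{k,i}$ and an indifferent collection of consistent concepts $\cH_T=\{h_u\}$. The adversary first fixes the instance sequence $\DataX$ obtained by listing, in Breadth-First-Search order over the nodes $u$ of the tree, the tuples $x_u=(x_u^1,\dots,x_u^{k+1})$ one after another. It then draws a random branch $u=(u_1,u_2,\dots)$ with $u_k$ uniform on $\{0,1\}^k$, all coordinates independent, and sets the label of each instance $x_v^i$ in the listing to $h_{w}(x_v^i)$, where $w$ is any node of the random branch lying after $v$ in BFS order. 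Indifference (\Cref{def:indifference}) guarantees that this value does not depend on which such $w$ is chosen. The adversary reveals $\DataX$ in advance; the randomness lies only in the labels.

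Next I would check two things. \emph{Realizability}: for each finite $n$, the first $n$ labelled pairs are consistent with $h_w$ for a node $w$ on the branch far enough along in BFS order. This follows directly from indifference, together with the consistency of $h_w$ on the nodes of its own path (condition 2 of \Cref{def:lcll}). Hence the resulting sequence lies in $\tR(\cH)$ almost surely.

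\emph{Mistakes}: for a node $v$ on the random branch at depth $k$, the labels of $x_v^1,\dots,x_v^{k+1}$ are $\fy_{k,i}((u_{\leq k},(u_{k+1}^1,\dots,u_{k+1}^i)))$. By condition 1 of \Cref{def:lcll} these take two distinct values according to the fresh fair bit $u_{k+1}^i$. That bit is independent of everything revealed before time $i$ within the block and of all earlier blocks on the branch. So any learner, even one knowing $\DataX$, errs with probability at least $1/2$ on each of these $k+1$ rounds. Nodes off the branch contribute labels that are deterministic given the past, which is harmless. The main obstacle is that the listing contains all nodes in BFS order, so the on-branch blocks are sparse. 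Depth $k$ has $N_k=\prod_{j\le k}2^j=2^{k(k+1)/2}$ nodes, each with $k+1$ instances, so the total length up to depth $k$ is $T_k\approx 2^{k^2/2}$, yet only about $k^2/2$ on-branch rounds occur. With the naive listing this gives only about $\log T$ mistakes, not linear.

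To fix this, I would instead use the standard trick from \citet{bousquet2021theory}: the adversary does not list every node but only the nodes along the branch. Since the instance sequence must be fixed before the labels, the randomness of the branch is moved into the choice of adversary. Fix any learner $\Alg$ and draw the random branch $u$. Let $\DataX^u$ be the concatenation $x_{u_{\le 0}},x_{u_{\le1}},\dots$, with labels given by the path, so this sequence is realizable by consistency. Blocks grow linearly, and the labels of block $k$ are determined by $u_{k+1}$. However, the learner now sees $\DataX^u$ and hence learns the branch from the instances. This is exactly where indifference is needed: I would argue that the instances $x_{u_{\le k}}$ reveal $u_{\le k}$ but not $u_{k+1}$, so every label of block $k$ is still a fresh fair bit, giving expected mistakes at least $\frac12\sum_{k\le K}(k+1)\approx T/2$ on prefixes ending at block boundaries. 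Averaging over $u$ yields, for every learner, a fixed realizable sequence with $\limsup_T M/T\ge 1/4$ after choosing a deterministic $u$ by the probabilistic method. I would use a Fatou-type argument over the infinitely many block boundaries, which is the most delicate step: the sequence must be fixed independently of $T$ while forcing linear mistakes along infinitely many $T$. Within this argument, indifference is used to ensure that the labels assigned in different blocks are mutually consistent with a single concept in $\cH$ for each finite prefix.
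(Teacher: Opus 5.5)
Your analysis of the naive listing is correct: if every node of the LCLL tree is listed in BFS order with its native $k+1$ instances, only about $\log T$ of the first $T$ rounds are on-branch, which gives $\Omega(\log T)$ but not linear. The repair you adopt, listing only the nodes on the random branch, does not work in the transductive model. The learner receives all of $\DataX^u=x_{u_{\le 0}},x_{u_{\le 1}},\dots$ before round $1$, so it sees block $x_{u_{\le k+1}}$ before it predicts on block $x_{u_{\le k}}$. Distinct children of a node generally carry distinct instance tuples. Hence $x_{u_{\le k+1}}$ identifies $u_{k+1}$, and with it every label $\fy_{k,i}((u_{\le k},(u_{k+1}^1,\dots,u_{k+1}^i)))$ of block $k$. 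A learner with the tree hard-coded therefore makes no mistakes. Your claim that the instances reveal $u_{\le k}$ but not $u_{k+1}$ is false. The trick of \citet{bousquet2021theory} works only because instances arrive online there. A symptom of the problem is that indifference does no work in your construction: realizability of the prefixes of a single branch is already condition 2 of \Cref{def:lcll}. If the argument were valid, it would show that every class with an infinite LCLL tree is unlearnable, which contradicts \Cref{eg:inf_LCLL}.

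The missing idea, used in the paper, is to keep the full BFS listing and fix the sparsity problem by reshaping the tree instead. The full listing makes $\DataX$ independent of the random walk, and indifference is exactly what lets off-branch nodes be labelled consistently. To reshape, process the nodes in BFS order and replace the $K$-th one by a descendant at depth $2^{K-1}-1$, which holds $2^{K-1}$ instances. Because these depths strictly increase with $K$, BFS order, and hence indifference, is preserved. The on-branch node at depth $d$ then sits at BFS position $K_d$ and accounts for $2^{K_d-1}$ of the first $2^{K_d}-1$ instances. Given the past, each of these is mispredicted with probability at least $1/2$, because its label depends on a fresh bit $u_{k+1}^i$ and condition 1 of \Cref{def:lcll} makes the two possible labels differ. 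So the expected mistake fraction at time $2^{K_d}-1$ is at least $1/4$ for every $d$. Reverse Fatou then gives $\E[\limsup_T \frac{1}{T}\sum_{t\le T}\indic{\hat{Y}_t\neq Y_t}]\ge 1/4$, and hence a deterministic realizable sequence. Your first construction's random-walk labelling and realizability check, together with your Fatou step, go through unchanged once the tree is reshaped this way.
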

For brevity, we only provide a proof sketch here, and the complete proof is in \Cref{sec:omitted-proofs}.
\begin{prfsk}
    First, we can modify the indifferent infinite LCLL tree such that it has the property that the number of elements contained by the $k$-th node in the Breadth-First-Search (BFS) order is $2^{k-1}$. The instance sequence is all the instances that come in the lexical order in each node and the BFS order among different nodes. Then we take a random walk on this tree to choose the true label for each instance. The instances in the node visited by the random walk are labeled by the label on the edge adjacent to it in the path. The instances in an off-branch node are labeled by the label decided by its descendants. (We can do this as the tree is indifferent.) Thus, when reaching a node on the path, no matter what the algorithm predicts, it makes mistakes with probability $\frac{1}{2}$. Thus, it makes a quarter mistake in expectation. Then, by Fatou's lemma, for each learning algorithm, we get a realizable process such that the algorithm does not make a sublinear loss almost surely.
\end{prfsk}
\subsection{Logarithmic Mistake Bound}
Then we show the $\log T$ upper bound, if $\cH$ has no infinite indifferent LCLL tree.
\begin{lemma}
\label{lem:all_sufficiency}
    If a concept class $\cH$ does not have an infinite indifferent LCLL tree, there exists an algorithm $\Alg$, such that, for every $(\DataX,\DataY)\in\tR(\cH)$, $M(\Alg,(\DataX,\DataY),T) = O(\log T)$.
\end{lemma}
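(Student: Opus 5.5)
Since $\DataX$ is known in advance, I will reduce the problem to a Gale--Stewart game, in the spirit of \citet{bousquet2021theory} and \citet{hanneke2024a}, but designed so that a winning adversary strategy produces an \emph{indifferent} LCLL tree. The game runs in rounds $k=1,2,\dots$. In round $k$ the adversary proposes a finite window of the sequence, $X_{t_k+1},\dots,X_{t_{k+1}}$, together with \emph{all} label sequences on this window that are consistent with $\cH$ and with the earlier moves. The window must contain $k$ designated instances forming a depth-$k$ LCL pattern: every one of the $2^k$ branches must be realizable. The non-designated instances are labeled by the labeling forced by the consistent continuations. The learner answers with a pattern $\eta_k\in\{0,1\}^k$, i.e.\ a branch, and this fixes the labels on the window. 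The adversary wins if the game lasts forever, that is, if for every $k$ the history remains realizable after the learner's answer. The winning set is closed (or at least Borel), since it is defined by finite realizability at every stage. So the Gale--Stewart theorem gives a winner, and by the standard measurable-strategy arguments of \citet{bousquet2021theory} a universally measurable winning strategy.

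The key steps are as follows.
\begin{enumerate}
\item I will show that a winning adversary strategy yields an infinite indifferent LCLL tree. Each round's LCL pattern becomes a node of size $k+1$, and the children correspond to the learner's branches. Indifference comes from the fact that the adversary commits to labels on all off-pattern instances of each window, and these labels are shared by every continuation. This is precisely why the adversary is required to supply full label sequences over intervals rather than just instances. Consequently, if $\cH$ has no infinite indifferent LCLL tree, the learner has a winning strategy.
\item I will convert the learner's winning strategy into an online algorithm, following the ``pattern avoidance'' reduction of \citet{bousquet2021theory}. Since $\DataX$ is known, the algorithm simulates the game along the actual data. Whenever the realized labels reveal that the adversary's current move is ``alive'', the strategy produces a forbidden pattern on $k$ points. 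The game then terminates after finitely many rounds, say $K$, which depends on the sequence but not on $T$.
\item After the last round, the remaining data avoid a depth-$(K+1)$ LCL pattern, so the class of consistent partial concepts has finite LCL dimension on the sequence. There I will run the transductive multiclass algorithm of \citet{hanneke2024multiclass} (halving over the levels), which guarantees $O(\log T)$ mistakes, with the constant depending on $K$.
\item Combining the finite number of mistakes made before the game stabilizes with the $O(\log T)$ bound from step 3 gives $M(\Alg,(\DataX,\DataY),T)=O(\log T)$.
\end{enumerate}

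I expect the main obstacle to be step 1 together with making the game well defined when the label space is unbounded. The adversary's move must be restricted to finitely many relevant label sequences per window so that it is still a legal move. At the same time, the tree extracted from a winning adversary strategy must be indifferent with respect to BFS order rather than merely shattered. My first attempt will be to define the move as the set of consistent labelings restricted to the window, and to show by induction on BFS position that fixing the learner's earlier choices pins down the labels on all preceding nodes for every descendant. If that fails, I will strengthen the move, for example by requiring the adversary to commit to a labeling of all earlier instances.
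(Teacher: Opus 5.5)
\textbf{There is a genuine gap: the move of the game is never defined, and the natural reading of your text breaks the step that bounds the mistakes.}

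Your architecture is the paper's. The adversary supplies labels on whole intervals of a Gale--Stewart game played on the fixed sequence. A winning adversary strategy is turned into an infinite indifferent LCLL tree. The learner's winning strategy is simulated on the data until it stops advancing, and a halving learner runs afterwards. But the part you defer, what a move is, is exactly the paper's new ingredient, and your tentative version is not well defined:
\begin{itemize}
\item ``all consistent labelings of the window'' is not a finite move;
\item a choice $\eta_k$ on the designated points does not determine the other labels of the window;
\item in general no labeling of the non-designated points is ``forced''.
\end{itemize}
In the paper, $P_A$'s round-$k$ move is a depth-$k$ level-constrained tree laid over the whole contiguous interval following the previous round. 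At each designated time $t_k^i$ it branches into two distinct labels. Each non-designated instance between $t_k^{i-1}$ and $t_k^i$ receives a single label, which may depend on the bits $u_{<i}$ already chosen in this window. So $C_k$ consists of exactly $2^k$ sequences, and $P_B$'s choice of $u\in\{0,1\}^k$ fixes every label in the window.

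Step 1 also needs an ingredient you do not mention. Sibling subtrees of the game tree start their windows at the same time. To get indifference across branches in BFS order, each new node must be taken from a later round of the adversary's strategy, followed along a fixed continuation, whose window starts after all BFS-earlier nodes. Induction along the learner's own choices on the path does not give this.

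It is essential that the off-pattern labels may depend on the branch. Suppose they are committed independently of the branch, as your words ``forced'' and ``shared by every continuation'' suggest. Then step 1 survives, but step 3 fails. Take the class of all binary labelings with $h(X_{2i})=h(X_{2i-1})$. No window admits a realizable $2$-pattern with branch-independent intermediate labels, so the learner wins at round $2$, yet every learner makes linearly many mistakes. The reason is that halving rests on the merge property: if $S\subseteq X_{>t}$ is shattered by $\cH'_{(X_t,y^0)}$ and by $\cH'_{(X_t,y^1)}$, then $\{X_t\}\cup S$ is shattered by $\cH'$. This requires the two subtrees to be free to label intermediate instances differently.

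For the same reason, step 3 may use only what pattern avoidance gives: no $K$ points are shattered in the sense matching the moves. For the paper's moves that sense is Definition~\ref{def:shattering}, and it does not bound the ordinary LCL dimension. Take $h_{n,b}$ equal to $b\in\{0,1\}^n$ on a block $D_n$, to a code of $(n,b)$ at one earlier point $c_n$, and to $0$ elsewhere; this class has no infinite Littlestone tree. Along the all-zero labeling, every later $D_n$ stays LCL-shattered by the hypotheses consistent with the history. Yet no move with realizable branches can branch inside $D_n$, because $c_n$ precedes $D_n$ and cannot carry a label that depends on those branches. Hence there is a winning strategy whose pattern avoidance removes none of these hypotheses.

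So you cannot invoke the finite-LCL-dimension algorithm as a black box. You must rerun the weight argument of Lemma~\ref{lem:partial_concept} with the shattering notion of Definition~\ref{def:shattering}. Ordinary LCL shattering would correspond to a legal move only if each branch could carry an arbitrary labeling of the whole window, and you have not set the game up that way.
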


To prove this lemma, we need to design a transductive online learning algorithm. 
First, we provide the following definition.
\begin{definition}
\label{def:shattering}
    For a level-constrained tree, whose root and internal nodes are labeled by the elements of a sequence $\DataX$. 
    Each node of the tree may have one child or two children.
    Then for every $k\in\nats$, we say a subsequence $(X_{t_1},\ldots,X_{t_k})$ is embedded in the level-constrained tree if:
    \setlist{nolistsep}
    \begin{enumerate}
        \item There is a subtree, whose root is labeled by $X_{t_1}$ and has two children.
        \item For every $1\leq i\leq k$, subsequence $(X_{t_i},\ldots,X_{t_k})$ is embedded in the subtree whose root is the child of $X_{t_{i-1}}$\footnote{There are $2^{i-1}$ subtrees in total.}.
    \end{enumerate}

    If the level-constrained tree is shattered by a concept class $\cH$, that is, every finite path emanating from the root is consistent with a concept $h \in \cH$. We say the subsequence $(X_{t_1},\ldots,X_{t_k})$ is shattered by $\cH$ as well.

\end{definition}
The main idea of the algorithm is as follows. 
First, by looking at the history, the learner can detect a situation such that the length of the longest subsequences that are shattered by the partial concept class is $d$.
Then we can use the algorithm to learn that partial concept class and ensure the algorithm only makes $O(\log T)$ mistakes. 

We define the LCLL game as follows. 
The game has two players, $P_A$ and $P_B$.
At each round $k$, each player can take the following actions:
\setlist{nolistsep}
\begin{itemize}
    \item $P_A$ choose a sequence of instances of length $k$, $\vt_k = (t_{k}^{1},\ldots,t_{k}^{k})$ such that $t_k^1 > t_{k-1}^{k-1}$ and $C_k = \{(y_{t_{k-1}^{k-1}+1,\varnothing}, \ldots, y_{t_{k}^1,\varnothing}^{u_1},y_{t_k^1+1,u_1},\ldots, y_{t_k^2,u_1}^{u_2}$
    $,\ldots,y_{t_k^k,u_{<k}}^{u_k}): u\in\{0,1\}^k\}$, where $y_{k,u_{<i}}^{0} \neq y_{k,u_{<i}}^{i}$ for all $i\leq k$. Here $u_{<i} = (u_1,\ldots,u_{i-1})$ and $u_{<1} = \varnothing$. 
    Intuitively, at round $k$, $P_A$ proposes a level-constrained tree, with $(X_{t_k^1},\ldots,X_{t_k^k})$ embedded in it.
    \item $P_B$ choose $g_{U_{k-1}}(\vt_k,C_k) = (Y_{t_{k-1}^{k-1}},\ldots,Y_{t_k^k}) \in C_k$.  
    Intuitively, $P_B$ chooses a path from the level-constrained tree proposed by $P_A$.
    \item Update $U_k = U_{k-1}\cup \{(\vt_k, C_k, (Y_{t_{k-1}^{k-1}},\ldots,Y_{t_k^k}))\}$. 
    \item $P_B$ wins the game in round $k$, if $\cH_{U_k} = \emptyset$, where $\cH_{U_k} = \{h \in \cH: h(X_t) = Y_t, \forall t \leq t_k^k. \}$.
\end{itemize}
Following the tradition of game theory, we have the definition of \emph{strategy} and \emph{winning strategy}. 
A \emph{strategy} is a way of playing that can be fully determined by the foregoing plays. 
A \emph{winning strategy} is a strategy that necessarily causes the player to win no matter what action one's opponent takes.

The LCLL game is fully determined, as the membership of the winning set of $P_B$ is witnessed by a finite subsequence. 
Then we need the following lemma:
\begin{lemma}
\label{lem:lcl_lcll}
    If $\cH$ has no infinite indifferent LCLL tree, $P_B$ has a winning strategy.
\end{lemma}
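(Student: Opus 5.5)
The plan is the standard Gale--Stewart route: since the LCLL game is determined (the winning set of $P_B$ is open, i.e.\ witnessed at a finite round), exactly one of $P_A$, $P_B$ has a winning strategy. We argue by contraposition. Suppose $P_B$ has no winning strategy; then $P_A$ has one, and we will use it to build an infinite indifferent LCLL tree for $\cH$, contradicting the hypothesis of \Cref{lem:lcl_lcll}.

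\textbf{Step 1: unfold $P_A$'s strategy into a tree.} Fix a winning strategy for $P_A$. At round $k$, given the history $U_{k-1}$, it outputs indices $\vt_k$ and a family $C_k$ of label sequences indexed by $u\in\{0,1\}^k$. Because $C_k$ lists labels for \emph{every} instance in the interval $(t_{k-1}^{k-1}, t_k^k]$, $P_B$'s reply is fully described by the choice of $u\in\{0,1\}^k$. The history after $n$ rounds is therefore encoded by a tuple $u_{\le n}\in\{0,1\}^1\times\cdots\times\{0,1\}^n$, which is exactly the LCLL indexing of \Cref{def:lcll}. We define the node $x_{u_{\le n-1}}\in\cX^{n}$ to be $(X_{t_n^1},\ldots,X_{t_n^n})$ under that history, and we read the edge labels $\fy_{n-1,i}$ off $C_n$ at the branching positions. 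Condition 1 of \Cref{def:lcll} then holds because $C_n$ requires distinct labels at each branching instance $t_n^i$.

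\textbf{Step 2: consistency.} Since the strategy is winning for $P_A$, no finite play makes $\cH_{U_k}$ empty. So for every finite branch $u_{\le n}$ there is $h\in\cH$ agreeing with all labels $Y_t$, $t\le t_n^n$, along that play. In particular $h$ satisfies the branching labels on every node of the branch, which gives Condition 2.

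\textbf{Step 3: indifference.} For a node $u$, choose as consistent function $h_u$ any $h\in\cH_{U}$ for the play leading to $u$ extended one further round. For a node $v$ preceding $u$ in BFS order, every instance of $x_v$ lies at an index $t$ at most the interval of $u$'s round. We need to show that all descendants $w$ of $u$ assign the same label to $x_v$. This is the main obstacle. The concern is that $v$ may be off the branch to $u$, so a priori nothing constrains $h_w(x_v)$. What resolves it is that the game is transductive over a single sequence $\DataX$: all nodes at a given round share the same indices, because $\vt_k$ depends on the history, but every instance with index $\le t_{k}^k$ has been assigned a label in the history of $u$, including the "filler" labels $y_{t,\cdot}$ for non-branching instances. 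So $h_w$, being consistent with the history of $w$, which extends that of $u$, agrees with the history labels of $u$ on every earlier instance. If $x_v$ is an instance occurring in the sequence before $u$'s interval, its label is therefore fixed along all descendants of $u$.

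Two points need care. First, different histories may produce different index sequences, so the BFS-earlier $x_v$ might not sit at an earlier index of $\DataX$. I would first try to fix this by choosing the instance order so that the rounds occupy disjoint consecutive intervals $(t_{k-1}^{k-1},t_k^k]$. Then the round-$k$ interval always lies after round $k'<k$, which matches BFS order across depths. Second, siblings at the same depth use the same round interval, so the instances of $x_v$ at the same depth as $u$ are among $u$'s own history labels. If needed, we pass to a subsequence, or re-embed the tree as in the proof sketch of \Cref{lem:all_necessity}, so that the BFS predecessors are genuinely labeled in $U$.

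These steps produce an infinite indifferent LCLL tree, which contradicts the hypothesis. Hence $P_B$ has a winning strategy.
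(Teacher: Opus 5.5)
There is a genuine gap in Step 3, exactly at the point you call the main obstacle.

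\textbf{Why the direct unfolding fails.} If you unfold $P_A$'s winning strategy with one tree level per game round, the indices $\vt_k$ at a node depend on the whole history. So nodes on different branches can interleave arbitrarily in $\DataX$, and BFS order need not agree with index order. For example, nothing in the game prevents the following:
the two children of the root use indices $(10,11)$ and $(100,101)$;
the next node on the first branch uses $(20,21,22)$;
the round after that on the same branch uses $X_{100}$ as a branching instance.
The node $v$ at $(100,101)$ precedes the node $u$ at $(20,21,22)$ in BFS order. Yet two descendants of $u$ are forced by their edge labels to disagree on $X_{100}$, so no choice of consistent functions is indifferent. Your claim that every instance of a BFS-earlier $x_v$ lies at an index before $u$'s interval is therefore unjustified.

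Your proposed repairs do not supply it. $\DataX$ is fixed, and along a single play the rounds already occupy consecutive disjoint intervals. The trouble is across plays, which no ``choice of instance order'' touches. Your assertion that siblings share a round interval is false: children of a common parent share only the left endpoint $t_k^k+1$, since $t_{k+1}^1,\ldots,t_{k+1}^{k+1}$ depend on $P_B$'s reply. ``Passing to a subsequence or re-embedding'' is where the real work lies, and it is left unspecified.

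\textbf{The missing idea.} What the paper does is decouple tree depth from game round. Build the tree node by node in BFS order. When placing a node $v$, continue the game along $v$'s branch, with $P_B$ replying arbitrarily in the skipped rounds and extending the truncated path chosen at $v$'s parent arbitrarily to a full reply. Stop once $P_A$ proposes a sequence whose first index exceeds every index used by the nodes already placed. This happens after finitely many rounds, because indices strictly increase along a play.

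Label $v$ by an initial segment of that proposal of the length required at $v$'s depth, together with the correspondingly truncated level-constrained tree. Any prefix of a path in $C_n$ extends to a full path, so every branch stays realizable because $P_A$'s strategy is winning.

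Since $C_n$ assigns a label to every index of its interval, the history at $v$ fixes the label of every $X_t$ before $v$'s first instance. This now includes all BFS-earlier nodes, and all descendants of $v$ are built on extensions of this history. Taking each $h_w$ consistent with some finite play through $w$ then gives consistency and indifference simultaneously.
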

To prove this lemma, we need to show that we can build an infinite indifferent LCLL tree based on the winning strategy of $P_A$. 
Then, due to the Borel Determinancy Theorem\footnote{This is a classical result in game theory. We introduce those results in \Cref{sec:g-s-game}. } and contrapositive, we know the lemma above is true. 
For brevity, we put the complete proof in \Cref{sec:omitted-proofs}.

Notice that the winning strategy of $P_B$ is fully decided by $U$, thus, we can use $g_U$ to describe the winning strategy. 
Then we can write down Algorithm~\ref{alg:model-select-1}\footnote{If the $\arg\max$ in the algorithm has multiple choices, it randomly outputs one of the choices. We use this tie-breaking rule for all $\arg\max$ in all of the algorithms. } and prove that for every sequence and a concept class with no infinite indifferent LCLL tree, it learns the target function with $O(\log T)$ mistakes. 
\begin{algorithm}
\caption{Learning algorithm for $\cH$ with no infinite indifferent LCLL tree. }
\label{alg:model-select-1}
\begin{algorithmic}
        \STATE $k \gets 1$, $U \gets \{\}$, $t'\gets 0$.
        \FOR{$t = 1,2,3,\dots$}
            \IF{$\exists t' \leq t_1<t_2<\cdots<t_k \leq t$ and $C_k = \{($
            $y_{t_{k-1}^{k-1}+1,\varnothing}, \ldots, y_{t_{k}^1,\varnothing}^{u_1},y_{t_k^1+1,u_1},\ldots, y_{t_k^2,u_1}^{u_2}\ldots,y_{t_k^k,u_{<k}}^{u_k}): u\in\{0,1\}^k\}$ such that $g_U((t_1,\dots,t_k),C_k) = (Y_{t'},\dots,Y_t)$} 
                \STATE Advance the game:
                \STATE $U\gets U\cup\{((t_1,\dots,t_k,C_k),g_U((t_1,\dots,t_k),C_k))\}$.
                \STATE $k\gets k+1$.
                \STATE $L\gets\emptyset$.
                \STATE $t'\gets t$.
            \ENDIF
                \STATE Predict 
                    $\hat{Y_t} = \arg\max_y w\left(\cH^{g_U}_{L\cup\{(X_t,y)\}},X_{>t}\right)$.\\
                \IF{$Y_t \neq \hat{Y}_t$}
                    \STATE $L \gets L\cup \{(X_t,Y_t)\}$.
                \ENDIF
        \ENDFOR   
\end{algorithmic}
\end{algorithm}

To describe the weight function we use, we need several extra definitions. 
In the algorithm, we define the weight function as follows.
\begin{equation*}
    w(\cH',X_{\geq t}) = \sum_{\substack{S: S\subseteq X_{\geq t}\\ \text{ such that }S\text{ shattered by }\cH'}}\frac{1}{n(S)^{d+1}}.
\end{equation*}
where $n(S)$ is the index of the last element in $S$, and $d$ largest length of the subsequences that is shattered by $\cH'$.
For example, if $S = \{X_{t_1},\ldots,X_{t_k}\}$, then $n(S) = t_k$.
Here, $\cH^{g_U}$ is the partial concept class induced by $g_U$, which is defined as follows.
\begin{align*}
    &\cH^{g_{U_{k-1}}} =\\
    &\{h: \forall \vt_k = (t_k^1, \cdots, t_k^k), t_{k-1} < t_k^1 < \cdots < t_k^k,\\ 
    &\quad\forall C_k, (h(X_{t_{k-1}^{k-1}}),\ldots,h(X_{t_k^k}))\neq g_U((t_1,\dots,t_k),C_k) 
    \}, 
\end{align*}
where, for every $\vt_k$, $C_k$ is defined as follows:
\begin{align*}
    C_k = \{(&y_{t_{k-1}^{k-1}+1,\varnothing}, \ldots, y_{t_{k}^1,\varnothing}^{u_1},y_{t_k^1+1,u_1},\ldots,\\
    &y_{t_k^2,u_1}^{u_2}\ldots,y_{t_k^k,u_{<k}}^{u_k}): u\in\{0,1\}^k\}
\end{align*}

Then we have the following lemma:
\begin{lemma}
\label{lem:winning_1}
    For any process $\{(X_i,Y_i)_{i\in \nats}\} \in \tR(\cH)$, there exists $t_0$, such that for all $t \geq t_0$, algorithm~\ref{alg:model-select-1} will not update $k$ and $U$ and for all $t<t_1<t_2<\cdots<t_k$, $(X_{t_1},\ldots,X_{t_k})$ is not shattered by $\cH^{g_U}$.
\end{lemma}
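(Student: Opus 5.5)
We prove the two assertions of Lemma~\ref{lem:winning_1} in order: first that the game is advanced only finitely often, then that after the last advancement no shattered subsequence survives.

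\textbf{Step 1: finitely many advancements.} Suppose, for contradiction, that Algorithm~\ref{alg:model-select-1} advances the game infinitely often along a realizable process $(\DataX,\DataY)\in\tR(\cH)$. Each advancement at round $k$ records a move $(\vt_k, C_k)$ of $P_A$ with $t_k^1 > t_{k-1}^{k-1}$. It also records the response $g_{U_{k-1}}(\vt_k,C_k)$ of $P_B$, which equals the true label block $(Y_{t'},\dots,Y_t)$. The recorded moves are therefore a legal infinite play of the LCLL game in which $P_B$ follows the winning strategy provided by Lemma~\ref{lem:lcl_lcll}. By the definition of a winning strategy, there is some finite $k$ with $\cH_{U_k}=\emptyset$. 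However, $\cH_{U_k}$ consists of the $h\in\cH$ agreeing with $Y_s$ for all $s\le t_k^k$. Since the process is realizable, the prefix $\{(X_s,Y_s)\}_{s\le t_k^k}$ is realizable by $\cH$, so $\cH_{U_k}\neq\emptyset$. This contradiction shows that there is a time $t_0$ after which $k$ and $U$ are frozen.

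\textbf{Step 2: no shattered subsequence after $t_0$.} Fix the final $U$ and $k$, and suppose that for some $t\ge t_0$ there are indices $t<t_1<\dots<t_k$ with $(X_{t_1},\dots,X_{t_k})$ shattered by $\cH^{g_U}$. By Definition~\ref{def:shattering}, shattering yields a level-constrained tree containing this subsequence in which every root-to-leaf path is consistent with some $h\in\cH^{g_U}$. Reading the labels of this tree along the interval from $t'$ to $t_k$ produces a legal move $C_k$ for $P_A$ with $\vt_k=(t_1,\dots,t_k)$, where $t'$ is the time of the last advancement. The strategy then outputs some path $g_U(\vt_k,C_k)\in C_k$. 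That path is consistent with some $h\in\cH^{g_U}$, because every path of the shattered tree is. But membership in $\cH^{g_U}$ means exactly that $h$ avoids the path $g_U$ selects for every admissible $\vt_k$ and $C_k$, which is a contradiction. The two assertions together give the statement of Lemma~\ref{lem:winning_1}.

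\textbf{Main obstacle.} The delicate part is matching the data exactly in Step 2. The labels on the instances between consecutive $t_i$, which are the non-branching nodes, must be fixed along each path so that $C_k$ has exactly the required form. This is needed so that $C_k$ is legal for the game and so that the definition of $\cH^{g_U}$ quantifies over it. Where a node of the shattered tree has only one child, the label is forced and the construction goes through directly. Where it has two children, I would pick one child to serve as the intermediate label.

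A second care point concerns the range of indices in Step 2. I would prove the claim for all $t_1>t'$, which covers every $t\ge t_0$ since $t_0\ge t'$. The danger is that if the algorithm would have found such a shattered set using the true labels $Y$, the game would have advanced; I would check that the frozen $\cH^{g_U}$ excludes the response regardless of which labels actually occur.
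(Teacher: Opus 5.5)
Your proof is correct, but it is organized differently from the paper's. The paper proves the lemma in one step. It invokes the fact that $P_B$'s winning strategy yields some $k$ with $\cH_{U_k}=\emptyset$, and reads this emptiness directly as ``no $(X_{t_1},\ldots,X_{t_k})$ with $t<t_1$ is shattered by $\cH^{g_{U_{k-1}}}$.'' It leaves implicit why the algorithm stops advancing.

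You separate the two claims and give each its actual source. Finiteness of the advancements comes from playing the winning strategy against the true labels: realizability keeps $\cH_{U_k}\neq\emptyset$ forever, so infinitely many advancements would contradict the winning property. Non-shattering then follows purely from the definition of $\cH^{g_U}$ as a pattern-avoidance class. A shattered tree pruned to one child at the non-branching nodes is a legal move $C_k$, and the path $g_U(\vt_k,C_k)\in C_k$ would then be realized by a member of $\cH^{g_U}$, which is impossible.

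Your route makes visible that the second claim does not use the winning property at all; all the real content is in the first claim. It also spells out the pruning step the paper skips. The paper's version is shorter, but it conflates the emptiness of $\cH_{U_k}$, which never occurs on realizable data, with the non-shattering property.

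The one point your Step~2 depends on is that the shattering tree spans the whole interval from the last advancement time $t'$ to $t_k$. This matches Definition~\ref{def:shattering}, where the root is labeled by the first element of the sequence, and it guarantees that all paths share their labels before $t_1$. If trees were rooted only at $X_{t+1}$, the witnessing functions could disagree on $X_{t'+1},\ldots,X_t$, and no single legal $C_k$ would be obtained.
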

This comes from the winning condition of $P_B$. 
We provide the complete proof in \Cref{sec:omitted-proofs}.

According to Lemma~\ref{lem:winning_1}, we know that if $\cH$ does not have an indifferent LCLL tree, after $t_0$ rounds, the game will not advance.
Therefore, for the rest of the sequence after $t_0$, the longest subsequence that is shattered by $\cH^{g_U}$ has length $d$, then we want to use this fact to prove that this partial concept is transductively online learnable.
\begin{lemma}
\label{lem:partial_concept}
    If the length of the longest subsequence that can be shattered by $\cH$ is $d$, there is a transductive online learning algorithm that can learn $\cH$ with $O(\log T)$ mistakes.
\end{lemma}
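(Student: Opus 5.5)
We will use a weighted-majority style argument driven by the weight function $w(\cH',X_{\geq t})$ defined above, exactly as in Algorithm~\ref{alg:model-select-1} restricted to the final phase. The learner predicts $\hat Y_t = \arg\max_y w(\cH_{L\cup\{(X_t,y)\}},X_{>t})$, where $\cH_{L}$ is the (partial) class consistent with the observed history. The idea is to show that each mistake shrinks the weight by a constant factor, while the weight is initially at most $\mathrm{poly}(T)$ and stays at least an inverse polynomial in $T$ as long as the true concept remains consistent. Hence the number of mistakes is $O(\log T)$, with the constant depending on $d$.

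\textbf{Step 1: bounds on the weight.} Restrict attention to the horizon $T$: use subsequences $S\subseteq (X_{t},\ldots,X_T)$, or equivalently note that the tail sum over $n(S)>T$ is summable because of the exponent $d+1$. Since every shattered $S$ has length at most $d$, the number of shattered $S$ with $n(S)=n$ is at most $\binom{n}{\leq d-1}\le n^{d-1}$. This gives the upper bound
\[
w \le \sum_n n^{d-1}/n^{d+1} = O(1),
\]
or at worst $O(\log T)$. For the lower bound, as long as the class is nonempty, the empty sequence or a single instance counts as shattered, so $w\ge T^{-(d+1)}$. Therefore $\log(w_{\text{start}}/w_{\text{end}}) = O(d\log T)$.

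\textbf{Step 2: halving on mistakes (the main obstacle).} We need to show that if $\hat Y_t\neq Y_t$, then $w(\cH_{L\cup\{(X_t,Y_t)\}},X_{>t}) \le \tfrac12 w(\cH_L,X_{\geq t})$, up to a harmless constant. The key combinatorial fact is a level-constrained analogue of the Sauer/halving argument. Suppose $S$ is shattered both by $\cH_{L\cup\{(X_t,y_1)\}}$ and by $\cH_{L\cup\{(X_t,y_2)\}}$ with $y_1\neq y_2$. Then $(X_t)\circ S$ is shattered by $\cH_L$, because we can put $X_t$ at the root with two children labeled $y_1,y_2$ and hang the two embedding trees of $S$ below them. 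Consequently, each $S$ counted in the weight of the class with the true label $Y_t$ falls into one of two cases:
\begin{itemize}
\item[(i)] $S$ is shattered by no other label's class. Such $S$ contribute to the true label's weight but not to the predicted label's weight.
\item[(ii)] $S$ is shattered by the class for another label as well. Then $(X_t)\circ S$ is a new shattered set for $\cH_L$.
\end{itemize}
Since $\hat Y_t$ maximizes the weight, the mass of type (i) for $Y_t$ is at most half of the total. The sets of type (ii) need the denominators $n(S)^{d+1}$ to be compared carefully: $n((X_t)\circ S)=n(S)$, so the weights match. The difficulty is also that the maximal length $d$ may change. I would handle this by fixing $d$ as the global bound, so the exponent stays constant throughout. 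This bookkeeping, showing that $w(\cH_L)\ge$ (type-(i) mass for the true label) $+$ (type-(ii) mass) $+$ (mass of the predicted label), is where I expect the most care is needed. In particular, I need to ensure that sets counted in $w(\cH_L)$ are not double counted; using the distinct root $X_t$ separates the two families.

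\textbf{Step 3: conclusion.} Combining Steps 1 and 2, every mistake multiplies the weight by at most $1/2$, possibly after a constant-factor loss from tie-breaking. The weight cannot drop below $T^{-(d+1)}$ while the target remains consistent, which holds in the realizable case. Therefore the number of mistakes up to time $T$ is at most $(d+1)\log_2 T+O(1)=O(\log T)$.
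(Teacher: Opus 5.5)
Your proposal is correct and is essentially the paper's own proof. It uses the same weight $\sum_S n(S)^{-(d+1)}$ with the argmax prediction rule. It relies on the same observation: if $S$ is shattered under two different labels for $X_t$, then $X_t\circ S$ is shattered by $\cH_L$ with $n(X_t\circ S)=n(S)$, so on a mistake, writing $w_y$ for $w(\cH_{L\cup\{(X_t,y)\}},X_{>t})$, you get $w(\cH_L,X_{\ge t})\ge w_{Y_t}+w_{\hat Y_t}\ge 2w_{Y_t}$. Combined with the same $\sum_n n^{d-1}/n^{d+1}\le\pi^2/6$ upper bound, this gives $(d+1)\log T+O(1)$ mistakes.

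The one step to tighten is the lower bound, because a nonempty class need not shatter any singleton. The paper instead notes that at the last mistake $t'\le T$ the singleton $\{X_{t'}\}$ is shattered by $\cH_L$, so the weight just before that mistake is at least $T^{-(d+1)}$. If you prefer to count the empty sequence, give it a window-dependent weight such as $t^{-(d+1)}$ for $X_{\ge t}$. Your identity $n(X_t\circ S)=n(S)$ fails for $S=\emptyset$, and a constant weight would break the halving step. Done this way, it also forces the argmax onto labels consistent with $\cH_L$, which the singleton argument tacitly needs when all candidate weights are zero.
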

The idea of this proof comes from the work of \citet{alon2022theory}. Briefly, we define a weight function and make sure that every time the algorithm makes a mistake, the weight function decreases by half. Then we provide an upper bound of the weight at the beginning, and a lower bound after $t$ rounds. Combining these two bounds, we have the number of mistakes made by the algorithm before round $t$. For brevity, the complete proof is provided in \Cref{sec:omitted-proofs}.

\begin{lemma}
\label{lem:logT_lower}
    If $\cH$ has an infinite indifferent Littlestone tree, the adversary may force any learning algorithm to make $\Omega(\log T)$ mistakes.
\end{lemma}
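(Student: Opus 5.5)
We would mirror the lower-bound construction of Lemma~\ref{lem:all_necessity}, but now using an indifferent Littlestone tree instead of an LCLL tree, and choosing the instance sequence so that only logarithmically many of the first $T$ rounds are ``informative''. Fix an infinite indifferent Littlestone tree $\{x_{\vu}\}$ with labeling functions $\fy_i$ and an indifferent consistent collection $\cH_T=\{h_{\vu}\}$. The idea is to reveal the tree in Breadth-First-Search order, level by level: level $k$ consists of the $2^k$ nodes $x_{\vu}$ with $|\vu|=k$. The instance sequence $\DataX$ is the concatenation of the levels in BFS order, so that level $k$ occupies rounds $2^k,\dots,2^{k+1}-1$. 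In particular, by time $T$ roughly $\log_2 T$ full levels have been presented. The sequence $\DataX$ is fixed in advance, as the transductive setting requires, and it does not depend on the learner.

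Next we define the labels by a random walk. We draw i.i.d.\ uniform bits $u_1,u_2,\dots$ and let the random branch be $\vu^{(k)}=(u_1,\dots,u_k)$. For an instance $x_{\vv}$ with $|\vv|=k$, the label is set as follows.
\begin{itemize}
\item If $\vv=\vu^{(k)}$ is on the branch, its label is $\fy_{k+1}(\vu^{(k)},u_{k+1})$.
\item If $\vv$ is off the branch, its label is $h_{\vw}(x_{\vv})$ for any branch node $\vw$ appearing after $\vv$ in BFS order.
\end{itemize}
Indifference (Definition~\ref{def:indifference}) guarantees that all such $h_{\vw}$ agree on $x_{\vv}$, and the same property guarantees consistency with the on-branch labels. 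Consequently, for each $n$ the labels of the first $n$ rounds coincide with those of the single concept $h_{\vu^{(K)}}$ for $K$ large enough. Hence $(\DataX,\DataY)\in\tR(\cH)$ for every realization of the random bits.

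The mistake count then follows from the randomness of the walk. The off-branch labels at level $k$ are determined by $u_1,\dots,u_k$ together with later bits, so some care is needed here. The point to verify is that the off-branch labels preceding the on-branch node at level $k$ in BFS order, as well as the earlier levels, reveal at most $u_1,\dots,u_k$ and not $u_{k+1}$. This is again an application of indifference: every descendant of the on-branch node agrees on every node earlier in BFS order, so those labels are the same whichever value $u_{k+1}$ takes.

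The delicate step is the off-branch nodes at level $k$ that come \emph{after} the branch node $\vu^{(k)}$ in BFS order. Their labels may depend on $u_{k+1}$, but they are revealed only after the prediction at $\vu^{(k)}$ is made, so they cannot help that prediction. Since the learner knows $\DataX$ but not future labels, at the round of $x_{\vu^{(k)}}$ its prediction is independent of $u_{k+1}$. The two candidate labels $\fy_{k+1}(\vu^{(k)},0)\neq\fy_{k+1}(\vu^{(k)},1)$ differ, so the learner errs there with probability at least $1/2$.

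Summing over the levels completed by time $T$ gives an expected number of mistakes of at least $\tfrac12\lfloor\log_2 T\rfloor-1=\Omega(\log T)$. To obtain a single adversary per learner, the expectation bound yields, for each $T$, a realization with at least that many mistakes. For the universal statement, namely that no algorithm achieves $o(\log T)$ on every realizable sequence, we would apply a Fatou/reverse-Fatou argument as in Lemma~\ref{lem:all_necessity}. Specifically, $\E[\limsup_T M_T/\log T]\ge \limsup_T \E[M_T]/\log T\ge 1/(2\log 2)>0$, using that $M_T/\log T$ is bounded by $T/\log T$ pointwise. Since this needs a uniform integrable bound, we would instead use that $M_T\le$ (number of levels) $+$ (off-branch mistakes) and argue level-wise. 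This yields a realizable sequence on which the learner's mistakes are not $o(\log T)$. The main obstacle is the careful verification, via indifference, that BFS-earlier labels carry no information about $u_{k+1}$.
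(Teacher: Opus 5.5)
Your proposal is correct and is essentially the paper's proof: list the nodes of the indifferent Littlestone tree in BFS order, choose labels by a random walk with off-branch nodes filled in via indifference, and collect a probability-$\frac{1}{2}$ mistake at the on-branch node of each of the roughly $\log_2 T$ levels completed by time $T$. The paper stops at this expectation bound. Your extra step toward a single deterministic sequence does go through, but apply reverse Fatou to the on-branch mistake count $N_T \le M_T$ rather than to $M_T/\log T$. Since $N_T \le \log_2 T + 1$, the sequence $N_T/\log T$ is uniformly bounded, which is the domination you need. The bound you wrote, $M_T \le$ (number of levels) $+$ (off-branch mistakes), does not supply it, because off-branch mistakes can grow linearly in $T$.
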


The idea of the proof is to run the random walk from the root of the infinite indifferent Littlestone tree and use the indifferent property to label all the nodes not visited by the walk.
Therefore, the adversary may push the learner to make a mistake at each depth, and that pushes the learner to make $\Omega(\log T)$ mistakes. 
For brevity, the complete proof is provided in the appendix.

\subsection{Constant Mistake Bound}
In this subsection, we prove the constant mistake bound
\begin{lemma}
\label{lem:littlestone}
    If $\cH$ does not have an infinite indifferent Littlestone tree, there is an algorithm $\Alg$ such that, for every $(\DataX,\DataY)\in\tR(\cH)$, $M(\Alg,(\DataX,\DataY),T) = O(1)$.
\end{lemma}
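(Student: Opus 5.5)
We follow the same template as for Lemma~\ref{lem:all_sufficiency}, but with a Gale-Stewart game adapted to (indifferent) Littlestone trees instead of LCLL trees. We define the \emph{Littlestone game}. At round $k$, $P_A$ proposes an index $t_k > t_{k-1}$ together with a set
\[
C_k=\{(y_{t_{k-1}+1},\ldots,y_{t_k-1},y^{0}_{t_k}),\ (y_{t_{k-1}+1},\ldots,y_{t_k-1},y^{1}_{t_k})\},\qquad y^0_{t_k}\neq y^1_{t_k}.
\]
This is a single labeling of the interval $X_{t_{k-1}+1},\ldots,X_{t_k-1}$ followed by two distinct labels for $X_{t_k}$. $P_B$ then picks one element of $C_k$, and $P_B$ wins at round $k$ if $\cH_{U_k}=\emptyset$. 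Winning for $P_B$ is witnessed at a finite time, so the game is an open game and is determined (Gale-Stewart, \Cref{sec:g-s-game}).

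The first step is to show that if $P_A$ has a winning strategy, then $\cH$ has an infinite indifferent Littlestone tree. Because the strategy is a function of the history, it assigns to each $u\in\{0,1\}^k$ a node $X_{t_{k+1}(u)}$. Define $x_u:=X_{t_{k+1}(u)}$, with edge labels $y^{0},y^{1}$. Since $P_A$ never loses, every finite play yields a nonempty $\cH_{U_k}$, and we let $h_u$ be any element of it. The point of having $P_A$ propose the full labeling of the intermediate interval is that every descendant $w$ of $u$ extends the same history. Hence $h_w$ agrees with $h_u$ on all instances $X_s$ with $s\le t_{k}(u)$, including the nodes on the path and the filler instances.

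The second step is the main obstacle: indifference requires agreement on \emph{all} nodes $v$ earlier than $u$ in BFS order, including off-path nodes in other branches, not only on ancestors. The sequence $\DataX$ is not fixed across branches, so $x_v$ for an off-branch $v$ need not appear in $u$'s history. To fix this, we build the tree from the game in BFS order rather than directly, in the spirit of the pruning argument of \citet{BHMST23}. We also modify the game so that at round $k$, $P_A$ must also commit labels for a finite set of previously named instances. Concretely, in the proof of Lemma~\ref{lem:lcl_lcll} (whose argument we reuse), $P_A$'s interval labelings may include the instances of all earlier BFS nodes, so that descendants inherit those labels. We would first try to copy that proof verbatim with $k=1$ node per level. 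If that fails, we would instead pass to a subtree by a compactness/diagonal selection: we process the nodes in BFS order, and at each node we fix a label of $x_v$ that is consistent with infinitely rich subtrees of the later nodes, using the winning strategy to guarantee that the subtrees remain infinite Littlestone-type.

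The third step uses that $P_B$ has a winning strategy $g_U$ when there is no infinite indifferent Littlestone tree, by the contrapositive and determinacy. The learner runs $P_B$'s strategy online, analogously to Algorithm~\ref{alg:model-select-1}. Whenever the observed labels of an interval ending at $t$ are such that some $C$ containing them with a different alternative at $X_t$ would be answered by $g_U$ with the observed labels, we advance the game. The learner predicts $\hat Y_t$ as the label $y$ such that feeding the two-element $C$ (observed past interval, $\{y,\cdot\}$) to $g_U$ would \emph{not} select $y$; concretely, it predicts the label rejected by $g_U$ for the current pair. Each mistake then forces a round of the game that is consistent with the true realizable labels. Since $P_B$ wins, after finitely many rounds $\cH_{U_k}$ would be empty, contradicting realizability. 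The number of mistakes is therefore bounded by the (sequence-dependent) finite length of the game, which gives $O(1)$. The main technical care is in handling the case where $y$ ranges over infinitely many labels: we predict using $g_U$ applied to candidate pairs, and argue that at most one candidate label can be "accepted," so that we predict it.
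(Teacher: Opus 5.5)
\textbf{The gap is in Step 2.} Your game, the determinacy step, and Step 3 are fine. Your prediction rule works cleanly: predict the unique label that $g_U$ rejects against every alternative, and predict arbitrarily if none exists. It is unique because $g_U$ must select one of any two such labels. A mistake then means $Y_t$ is selected by $g_U$ against some alternative, so the game advances along a realizable play, and a winning $P_B$ allows only finitely many advances.

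Step 2, which you call the main obstacle, is left as a hedge (``we would first try \ldots\ if that fails \ldots''), and the diagnosis is off. $\DataX$ is fixed in this game, so every history labels an initial segment $X_1,\ldots,X_{t_k}$ of the same sequence. The real problem with reading the tree directly off $P_A$'s strategy is different: different branches may propose indices out of BFS order. An earlier BFS node $v$ in another branch can then have $t(v)>t(u)$, and $X_{t(v)}$ may even become a branching node below $u$, which breaks indifference.

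Neither of your remedies works as stated. A game rule forcing $P_A$ to label instances ``previously named'' in other branches is not expressible, since a single play sees only one branch. Changing the game would also decouple Step 1 from Step 3, which runs $P_B$'s strategy in the original game. The compactness/diagonal fallback, which fixes a label of $x_v$ that keeps infinitely rich subtrees, is exactly the binary-label pruning argument that fails when $\cY$ is infinite. There may be infinitely many candidate labels for $x_v$, none of which retains an infinite subtree. This is precisely why an infinite Littlestone tree need not yield an indifferent one (cf.\ \Cref{eg:inf_LCLL} and the remark after it).

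\textbf{The missing idea needs no change to the game.} Build the tree in BFS order and force the indices of the chosen nodes to increase in BFS order. To place the $i$-th node as the $b$-child of an already placed node, let $P_B$ answer $b$. Then keep playing $P_A$'s winning strategy against an arbitrary $P_B$ reply (say, always the first branch) until $P_A$ proposes an index larger than $t_{i-1}$, the index of the previous node. Use that proposal, with its two edge labels, as $v_i$; the skipped rounds simply become part of the filler labeling.

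Since $P_A$ never loses, every finite history has nonempty $\cH_U$, which supplies the consistent functions. Each round labels the whole interval $(t_{k-1},t_k]$, so the history at $v_i$ fixes the labels of all $X_s$ with $s<t(v_i)$, including every earlier BFS node. All descendants of $v_i$ extend that history and hence agree there, which is indifference. This is the paper's proof of \Cref{lem:b_l}, and with it your Steps 1 and 3 complete the argument.
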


To prove this lemma, we first define the following Littlestone game. 
In this game, there are two players, $P_A$ and $P_B$, and a fixed instance sequence $\DataX$. 
At each round $k$, each player can take the following actions:
\setlist{nolistsep}
\begin{itemize}
    \item $P_A$ chooses two branches $B_k = \{(y_{t_{k-1}+1},\ldots, y_{t_k-1}, y_{t_k}^0),(y_{t_{k-1}+1},\ldots, y_{t_k-1},y_{t_k}^1)\}$ and $t_k > t_{k-1}$.
    \item $P_B$ choose a branch $(Y_{t_{k-1}+1},\ldots, Y_{t_k}) \in B_k$
    \item Update $U_k = U_{k-1}\cup\{(t_k, B_k,(Y_{t_{k-1}+1},\ldots, Y_{t_k}))\}$.
    \item $P_B$ wins the game in round $k$, if $\cH_{U_k} = \emptyset$. Here $\cH_{U_k} = \{h\in\cH: h(X_{t}) = Y_{t}, \forall t \leq t_k\}$.
\end{itemize}
This game is fully determined, as the membership of the winning set of $P_B$ is witnessed by a finite subsequence. 
Then we need the following lemma.
\begin{lemma}
\label{lem:b_l}
    If $\cH$ has no infinite indifferent Littlestone tree, $P_B$ has a winning strategy.
\end{lemma}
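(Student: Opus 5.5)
We prove the contrapositive. The Littlestone game is a Gale--Stewart game: the winning set of $P_B$ is open, since $P_B$ wins as soon as some finite $U_k$ has $\cH_{U_k}=\emptyset$. By the Gale--Stewart / Borel determinacy theorem recalled in \Cref{sec:g-s-game}, exactly one of $P_A$, $P_B$ has a winning strategy. So it suffices to show that if $P_A$ has a winning strategy, then $\cH$ has an infinite indifferent Littlestone tree. The construction will mirror the one used for Lemma~\ref{lem:lcl_lcll}, specialised to binary branching, so the main work is to say precisely how the tree is built.

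\textbf{Step 1: the tree of plays.} Fix a winning strategy $\sigma$ of $P_A$. Because the game is played on the fixed sequence $\DataX$, we can unfold $\sigma$ into a binary tree indexed by $\vu\in\{0,1\}^*$. At node $\vu$, with $|\vu|=k$, the strategy, given $P_B$'s previous choices $u_1,\dots,u_k$, proposes a time $t_{\vu}>t_{\vu_{<k}}$ and two branches $(y_{t+1},\dots,y_{t_{\vu}-1},y^0_{t_{\vu}})$ and $(y_{t+1},\dots,y_{t_{\vu}-1},y^1_{t_{\vu}})$ with $y^0_{t_\vu}\neq y^1_{t_\vu}$. We set the node label $x_{\vu}:=X_{t_{\vu}}$ and the edge labels $\fy(\vu,0)=y^0_{t_\vu}$, $\fy(\vu,1)=y^1_{t_\vu}$, which gives condition 1 of the Littlestone tree definition. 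Each finite path $\vu$ corresponds to a finite play in which $P_A$ follows $\sigma$. Since $\sigma$ is winning, $P_B$ has not won, so $\cH_{U}\neq\emptyset$. Pick any $h_{\vu}\in\cH_U$. This $h_\vu$ agrees with the entire label prefix $(Y_1,\dots,Y_{t_\vu})$ determined by the play, not only with the labels at the times $t_{\vu_{\le i}}$. In particular $h_\vu$ is consistent with the path, which gives condition 2.

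\textbf{Step 2: indifference.} Let $\vv$ come before $\vu$ in BFS order and let $\vw$ be a descendant of $\vu$. We need $h_\vu(x_\vv)=h_\vw(x_\vv)$. The point is that the game reveals \emph{all} labels of the intermediate instances, not only those of the branching instances. Hence $h_\vw$ and $h_\vu$ agree on every $X_t$ with $t\le t_{\vu}$, since the play ending at $\vu$ is a prefix of the play ending at $\vw$. It therefore suffices to check $t_\vv\le t_\vu$ whenever $\vv$ precedes $\vu$ in BFS order.

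This inequality holds automatically only along a single branch; it can fail across different branches at the same depth. This is the main obstacle. Our fix is to re-index the tree so that times increase in BFS order. Order the nodes BFS and build the strategy tree incrementally. When node $\vu$ is processed, we want every already-constructed node $\vv$ to satisfy $t_\vv<t_\vu$. To arrange this, we let $P_B$, along the path to $\vu$, make "dummy" choices that force $\sigma$ to advance past $\max_{\vv}t_\vv$.

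This needs care, because every move of $P_B$ creates a branching node. Instead we use the fact that the set of branching times $\{t_{\vv}\}$ produced by $\sigma$ cannot be bounded: along any branch the times strictly increase. Using this, we will extract a subtree (a "compressed" tree) by skipping levels in the spirit of the "modify the tree" step of Lemma~\ref{lem:all_necessity}. We keep only nodes whose times exceed those of all earlier BFS nodes, and we collapse the intermediate branching nodes into the fixed-label prefix. This collapse is legitimate because $P_B$'s intermediate choices just become part of the revealed labels.

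\textbf{Step 3: conclude.} With the BFS-monotone time assignment, Step 2 gives an infinite indifferent Littlestone tree, which contradicts the hypothesis. Therefore $P_A$ has no winning strategy, and by determinacy $P_B$ has one.

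I expect the compression/re-indexing argument in Step 2 to be the delicate part. If it proves awkward, the alternative I would try first is to change the game slightly so that $P_A$'s new time must exceed all times appearing anywhere in its previously committed strategy responses. This is harmless for the learner's use of the game in Lemma~\ref{lem:littlestone}, and it builds BFS monotonicity in directly.
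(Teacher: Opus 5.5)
Your proposal is correct and follows the paper's route. Like the paper, you invoke Gale--Stewart determinacy, build an infinite indifferent Littlestone tree in BFS order from $P_A$'s winning strategy, and get indifference from two facts: the game fixes the labels of all intermediate instances, and node times increase in BFS order.

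Your device of collapsing dummy $P_B$ rounds into the fixed-label prefix supplies the justification for the paper's unexplained claim that $P_A$ ``can propose $t_k > t_{i-1}$.'' Note that your ``dummy moves'' and ``compression'' descriptions are the same device, not two alternatives. One indexing fix: take $h_{\vu}$ from a play that includes $P_A$'s proposal at $\vu$, not just the $|\vu|$ rounds leading to it. Only then does it agree with every label before $t_{\vu}$, which your indifference argument needs.
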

The proof is similar to the proof of \Cref{lem:lcl_lcll}, and the main tool is also the Borel Determinacy Theorem. 
For brevity, the complete proof is presented in the appendices.

Notice that $P_B$'s winning strategy is fully determined by $U$, so we use $g_U$ to stand for it.
Then we can use $P_B$'s winning strategy to get Algorithm~\ref{alg:constant} that makes only finite mistakes for every realizable sequence $(\DataX,\DataY)\in\tR(\cH)$. Here, $\cH^{g_U}$ is the partial concept class induced by the winning strategy $g_U$, which is defined as follows. 
\begin{align*}
    \cH^{g_U} = \{h:\forall t_k, \forall B_k, (h(X_{t_{k-1}}),\ldots,h(X_{t_k})) \neq g_U(t_k,B_k)\},
\end{align*}
where, for every $t_k$, $B_k$ is defined as follows:
\begin{align*}
    B_k = \{(y_{t_{k-1}+1},\ldots, y_{t_k-1}, y_{t_k}^0),(y_{t_{k-1}+1},\ldots, y_{t_k-1},y_{t_k}^1)\}
\end{align*}

\begin{algorithm}
\caption{Learning algorithm for $\cH$ with no infinite indifferent Littlestone tree}
\label{alg:constant}
\begin{algorithmic}
    \STATE $U \gets \{\}$. $t'\gets 0$.
    \FOR{$t = 1,2,3,\dots$}
        \IF{$t\geq t'$ and there exists $B_k = \{(y_{t_{k-1}+1},\ldots, y_{t_k-1}, y_{t_k}^0),(y_{t_{k-1}+1},\ldots, y_{t_k-1},y_{t_k}^1)\}$, such that $g_U(t,B_k) = (Y_{t'+1},\ldots,Y_t)$}
            \STATE Update the game:
            \STATE $U \gets U\cup \{(t,B_k,(Y_{t'+1},\ldots,Y_t))\}$. $t'\gets t$.
        \ENDIF
        \IF{$\exists h \in \cH^{g_U}, h(X_t) = y$.}
            \STATE Predict $\hat{Y_t} = y$.
        \ENDIF
    \ENDFOR   
\end{algorithmic}
\end{algorithm}

\begin{lemma}
\label{lem:constant}
    For every sequence $(\DataX,\DataY)\in\tR(\cH)$, if $\cH$ does not have an infinite indifferent Littlestone tree, there exists a constant $t_0$, such that for every $t \geq t_0$, $X_t$ is not shattered by $\cH^{g_U}$.
\end{lemma}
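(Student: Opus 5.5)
We would prove \Cref{lem:constant} by mirroring the argument for \Cref{lem:winning_1}. The only ingredients are \Cref{lem:b_l}, which says $P_B$ has a winning strategy $g_U$ in the Littlestone game, and the fact that the realizable data sequence $(\DataX,\DataY)\in\tR(\cH)$ can never make $\cH_{U_k}$ empty. Fix such a sequence and run Algorithm~\ref{alg:constant}. Each time the game is advanced at some round $t$, it is because there is a pair of branches $B_k$ ending at $t$ (with a disagreement at $X_t$) for which $g_U(t,B_k)=(Y_{t'+1},\ldots,Y_t)$ coincides with the true labels. In that case the recorded play $U_k$ is a legal play of the Littlestone game in which $P_B$ follows $g_U$.

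\textbf{Step 1: the game advances only finitely often.} Suppose it advanced infinitely often, at rounds $t_1<t_2<\cdots$. Then we obtain an infinite play of the game, consistent with $P_B$'s winning strategy, in which every move of $P_B$ equals the true label segment. Since $g_U$ is winning, there is a finite $k$ with $\cH_{U_k}=\emptyset$. However, $\cH_{U_k}$ consists of the $h\in\cH$ with $h(X_t)=Y_t$ for all $t\le t_k$, and this set is nonempty by realizability of the prefix $\{(X_i,Y_i)\}_{i\le t_k}$. This is a contradiction. So there is a last advancement time $t^\ast$, and after it $U$ is frozen.

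\textbf{Step 2: no shattered instance after freezing.} Take $t_0>t^\ast$ and suppose some $X_t$ with $t\ge t_0$ is shattered by $\cH^{g_U}$. This means there are $h^0,h^1\in\cH^{g_U}$, consistent with the labels since the last advancement, that disagree on $X_t$. From them we form the pair $B=\{(y_{t'+1},\ldots,y_{t-1},y_t^0),(y_{t'+1},\ldots,y_t^1)\}$, a legal move for $P_A$. The strategy $g_U$ picks one of the two branches, say the one given by $h^j$. By the definition of $\cH^{g_U}$, which removes every $h$ whose label segment on $(X_{t'+1},\ldots,X_{t})$ equals $g_U(t,B)$, we get $h^j\notin\cH^{g_U}$, a contradiction.

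\textbf{Main obstacle.} The delicate point is making "shattered by $\cH^{g_U}$" precise enough that the two witnesses share a common prefix on $X_{t'+1},\ldots,X_{t-1}$, as required for a legal $B_k$. I would define shattering of $X_t$ to mean the existence of two functions in $\cH^{g_U}$ that agree on all earlier instances since $t'$ and disagree at $X_t$, and check that this matches the definition of $\cH^{g_U}$. A second point needs care: the true labels $Y_{t'+1},\ldots$ also belong to $\cH^{g_U}$ restricted to these instances, so the algorithm's predictions remain coherent. With these definitions aligned, Steps 1 and 2 give the required $t_0$.
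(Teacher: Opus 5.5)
Your proposal is correct and follows the same route as the paper, which just invokes $P_B$'s winning strategy from \Cref{lem:b_l} and its winning condition in two lines. You spell out what the paper leaves implicit. By realizability, the simulated play can never reach $\cH_{U_k}=\emptyset$, so the game advances only finitely often; once $U$ is fixed, a shattered $X_t$ would give a legal move $B$ whose $g_U$-response removes one of the two witnesses from $\cH^{g_U}$.
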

\begin{proof}
    By its definition, a winning strategy of $P_B$ leads to the winning condition of $P_B$. By the definition of $P_B$'s winning condition, we know that there is a constant $k$, such that for every $t_k > t_{k-1}$, $X_{t_k}$ is not shattered by $\cH^{g_U}$. That finishes the proof.
\end{proof}

According to Lemma~\ref{lem:constant} and Definition~\ref{def:shattering}, we know that for all $t> t_0$, every $h\in \cH^{g_U}$ satisfies $h(X_t) = Y_t$.
Thus, the prediction is the true label $Y_t$. 
Therefore, the algorithm makes a finite number of mistakes.

\section{Agnostic Setting}
\label{sec:ag}

In this section, we present the high-level proof ideas of the learnability results in the agnostic setting, which is formally stated as Theorem \ref{thm:agnostic}.  
All the complete proofs are postponed to \Cref{sec:agnostic} for brevity. 

To prove the upper bound, we need to first construct the experts based on the learning algorithm for the realizable case. 
An expert here is an algorithm with two hardcoded inputs, $I$ and $J$. $I= (X_{\leq k}, Y^*_{\leq k})$ for a constant $k\in \nats$, which is a hallucinated sequence for the game updating part, and $J\subseteq \nats$ marks the indices of the mistakes made by the realizable algorithm when $Y_t = \Y^*_t$ during learning $(\DataX,\DataY^*)$ after the game updating part. 
When the time is in the set of $I$, the prediction of the expert is given by the hallucinated label stored by $I$. 
For the rest, the expert will predict a random label if the index of the round is in the set $J$. 
We can prove the following lemma for the experts defined above. 
\begin{lemma}
    If $\cH$ has no infinite indifferent LCLL tree, for every realizable sequence $(\DataX,\DataY)\in\tR(\cH)$, we have a sequence $\{j_T\}_{T\in\nats}$ satisfies $\log j_T = O((\log T)^2)$, such that for every large enough time $T$, we have an expert $e_{i,j}$ with $j\leq j_T$, such that for all $t\leq T$, $Y_t = e_{i,j}(X_t)$ except for at most $O(\log T)$ times.
\end{lemma}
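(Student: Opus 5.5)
The plan is to build the good expert by simulating Algorithm~\ref{alg:model-select-1} on the true realizable sequence and splitting its run into the game-updating phase and the post-game phase. Fix $(\DataX,\DataY)\in\tR(\cH)$. By Lemma~\ref{lem:winning_1} there is a finite $t_0$ (depending only on the sequence) after which Algorithm~\ref{alg:model-select-1} never advances the game. So $U$ and $k$ are frozen from $t_0$ on, and no subsequence of $X_{>t_0}$ of length $k$ is shattered by $\cH^{g_U}$. The index $i$ of the expert will encode the hallucinated prefix $I=(X_{\le t_0},Y_{\le t_0})$. Since $\cY$ is countable and $\DataX$ is known, the set of such prefixes (over all lengths) is countable, so $i$ ranges over $\nats$. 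Because $t_0$ does not depend on $T$, the index $i$ is a fixed constant. Running the game-updating part on this prefix reproduces exactly the state $U_{t_0}$ reached on the true sequence, and the expert predicts $Y_t$ correctly for all $t\le t_0$.

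For $t>t_0$, the expert runs the prediction rule of Algorithm~\ref{alg:model-select-1} with $U$ frozen. By Lemma~\ref{lem:partial_concept}, applied to the partial class $\cH^{g_U}$ restricted to $X_{>t_0}$ (whose longest shattered subsequence has length $d<k$), the realizable algorithm makes at most $m_T=O(\log T)$ mistakes up to time $T$ on $(\DataX,\DataY)$. The target lies in $\cH^{g_U}$ on this suffix, since the true labels never matched $g_U$ after $t_0$. The set $J$ records exactly those mistake times, and on $J$ the expert is told to deviate. To make the expert agree with $Y_t$ at those rounds, rather than output a random label, I would have $J$ store the pairs $(t,Y_t)$. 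The simulated learner then updates $L$ exactly as the real one, so the two trajectories coincide. With this choice the expert matches $Y_t$ at every $t\le T$, and the ``except $O(\log T)$ times'' slack absorbs the version where the label at a $J$-time is randomized instead of stored.

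The counting step bounds the index $j$. The number of subsets $J\subseteq\{1,\dots,T\}$ of size at most $m_T$ is $\sum_{s\le m_T}\binom{T}{s}\le (T+1)^{m_T}$, so its logarithm is $O(\log T\cdot\log T)=O((\log T)^2)$. If labels are stored, they must also be enumerated. To avoid dependence on the size of the label space, I will rely on the transductive structure: at a mistake time there are only countably many alternatives, ranked by the weight $w$, and I expect the true label to be among the top $O(1)$ or $\mathrm{poly}(T)$ ranked candidates given the halving argument in Lemma~\ref{lem:partial_concept}. That gives a further factor of $T^{O(\log T)}$, still within $\log j_T=O((\log T)^2)$. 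Fixing a canonical enumeration ordered first by $T$ and then by $J$, the number $j_T$ of experts with ``budget'' at most $T$ satisfies the required bound. Large enough $T$ means $T\ge t_0$.

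The main obstacle is the label-encoding issue at mistake times when $\cY$ is infinite. An expert that merely outputs a random label on $J$ is wrong there, which is harmless since there are only $O(\log T)$ such times. The real difficulty is keeping the simulated trajectory aligned with the real one afterwards, because the set $L$ must contain the true $(X_t,Y_t)$. My first approach is to show that the weight-based prediction after $t$ depends on $L$ only through $\cH^{g_U}_L$ restricted to $X_{>t}$. Then I will argue that replacing $Y_t$ by any label consistent with the eventual target yields the same future behavior. Failing that, I will use the $w$-ranking enumeration described above.
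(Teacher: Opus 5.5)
Your overall structure matches the paper's proof: the index $i$ encodes the prefix up to the last game update (countably many, by Lemma~\ref{lem:winning_1}), $j$ encodes the mistake set $J$ of the frozen realizable learner, and $\sum_{s\le m_T}\binom{T}{s}\le (T+1)^{m_T}$ gives $\log j_T=O((\log T)^2)$.

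\textbf{The gap.} It lies in how the expert keeps its simulated $L$ aligned with the real run. You assume the expert cannot see $Y_t$, so it must either store the labels at $J$-times or reconstruct them. Neither of your fixes works.
\begin{itemize}
\item Storing labels breaks the count when $\cY$ is infinite. The labels at mistake times can have arbitrarily large index in any fixed enumeration of $\cY$, so no $j_T$ with $\log j_T=O((\log T)^2)$ covers them.
\item The hope that the true label is among the top $O(1)$ or $\mathrm{poly}(T)$ candidates ranked by $w$ is false. The inequality $w(\cH'_{L\cup\{(X_t,y)\}})+w(\cH'_{L\cup\{(X_t,y')\}})\le w(\cH'_L)$ holds only for pairs of labels, not for the sum over all labels. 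For example, suppose every label $y\neq y_0$ leaves exactly two concepts that disagree only on $X_{t+1}$, so all these labels have the same positive weight. Suppose the target is the unique concept with value $y_0$ at $X_t$. Then $Y_t=y_0$ has weight $0$ and no finite rank.
\item Replacing $Y_t$ by ``any label consistent with the eventual target'' is circular, since generically $Y_t$ is the only such label.
\end{itemize}

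\textbf{The missing idea.} In the experts framework, and in Algorithm~\ref{alg:expert}, an expert is an online procedure that, like the learner, receives the true label $Y_t$ at the end of round $t$. The paper's expert therefore never stores labels. At $t\in J$ it makes its prediction, which is wrong or random and either way harmless, and then executes $L\gets L\cup\{(X_t,Y_t)\}$ using the revealed label. Its $L$ and its predictions then coincide with those of Algorithm~\ref{alg:model-select-1} after the game freezes. It errs only on $J$, and $J$ need only be a set of indices.

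With this change your counting argument goes through unchanged. The one remaining detail is that the enumeration of finite index sets must be fixed independently of the sequence, because $d$ is unknown. One choice is to order sets by $|J|\,\lceil\log_2(\max J+1)\rceil$. At most $(B+1)2^B$ sets have value at most $B$, so every $J\subseteq\{1,\dots,T\}$ with $|J|=O(\log T)$ gets index $2^{O((\log T)^2)}$.
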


Then, we can use the \emph{Squint} algorithm from the work of \citet{KE15} with non-uniform initial weights. 
For each expert $e_{i,j}$, we set its initial weight as $\pi_{i,j} = \frac{1}{i(i+1)j(j+1)}$ and this forms a distribution, as $\pi_{i,j} = \frac{1}{i(i+1)j}-\frac{1}{i(i+1)(j+1)}$ and $\sum_{j=1}^{\infty} p_{i,j} = \frac{1}{i} -\frac{1}{i+1}$, the sum of $\pi_{i,j}$ reaches $1$ when $i$ and $j$ goes to infinity. 
According to Theorem 3 in the work of~\citet{KE15}, we have the following upper bound for the regret
    \begin{align*}
        \sum_{t=1}^T \indic{\hat{Y}_t \neq Y_t}- \sum_{t=1}^T \indic{e_{i,j}(X_t)\neq Y_t}\\
        \leq O\left(\sqrt{V_{i,j}\log \frac{\log V_{i,j}}{\pi_{i,j}}+\log \frac{1}{\pi_{i,j}}}\right).
    \end{align*}
Here, the $V_{i,j}$ is the sum of the squares of the difference between the algorithm's mistake and the expert $e_{i,j}$'s mistake in each round. In other words, we have
    \begin{equation*}
        V_{i,j} = \sum_{t = 1}^T \left(\indic{\hat{Y}_t \neq Y_t} - \indic{e_{i,j}(X_t) \neq Y_t}\right)^2.
    \end{equation*}
Notice that $\left(\indic{\hat{Y}_t \neq Y_t} - \indic{e_{i,j}(X_t) \neq Y_t}\right)^2$ is either $1$ or $0$, we have $V_{i,j} \leq T$. The regret of this algorithm is upper bounded by:
\begin{align*}
    &O\left(\sqrt{V_{i,j}\log \frac{\log V_{i,j}}{\pi_{i,j}}+\log \frac{1}{\pi_{i,j}}}\right)\\
    = &O\left(\sqrt{T\log \log T + T (\log i+ \log j) +(\log i+\log j)}\right).
\end{align*}

On the other hand, we also prove a lower bound in the agnostic setting, that is, no learning algorithm for a concept class that includes more than two concepts can ensure a $o(\sqrt{T})$ number of mistakes for every sequence $(\DataX,\DataY,\DataY^*)$. Formally, 
\begin{lemma}
    For every concept class $\cH$ containing two concepts $h_1,h_2$ and we have $x$, $h_1(x) \neq h_2(x)$, for every learning algorithm $\Alg$, there is a sequence $(\DataX,\DataY,\DataY^*)$ such that $\regret(\Alg,(\DataX,\DataY,\DataY^*),T) \neq o(\sqrt{T})$.
\end{lemma}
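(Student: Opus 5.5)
We prove this with the classical randomized-label argument for experts with two constant predictors, adapted to the transductive setting. Since the learner knows the instance sequence in advance, we fix it in a way that gives the learner no information: take $\DataX = (x,x,x,\dots)$, where $x$ is the point with $h_1(x)\neq h_2(x)$. Write $a = h_1(x)$ and $b = h_2(x)$. Then $(\DataX, (a,a,\dots))$ and $(\DataX,(b,b,\dots))$ both lie in $\tR(\cH)$, so $\DataY^*$ may be chosen as either constant sequence after the fact.

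First, the randomized lower bound. Let $Y_1,\dots,Y_T$ be i.i.d.\ uniform on $\{a,b\}$, independent of the learner's randomness. At round $t$, the prediction $\hat{Y}_t$ depends only on $\DataX$ (which is fixed) and on $Y_{<t}$. Hence
\[
\P(\hat{Y}_t\neq Y_t)\geq \tfrac12 ,
\]
and the expected number of mistakes is at least $T/2$. The comparator can pick the better of the two constant sequences, so its loss is $\min(N_a,N_b)$, where $N_a$ and $N_b$ count the occurrences of $a$ and $b$ among $Y_1,\dots,Y_T$. Since $N_a\sim\mathrm{Bin}(T,1/2)$,
\[
\E\min(N_a,N_b) = \tfrac{T}{2} - \E\left|N_a - \tfrac{T}{2}\right| \leq \tfrac{T}{2} - c\sqrt{T}
\]
for an absolute constant $c>0$, by the central limit theorem or Khintchine's inequality. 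Therefore $\E\big[\sum_t \indicin{\hat{Y}_t\neq Y_t} - \min(N_a,N_b)\big]\geq c\sqrt{T}$ for every $T$.

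Next, we turn this into a single sequence, since the regret is defined for a fixed triple $(\DataX,\DataY,\DataY^*)$. The main obstacle is that the bound above is an average over label sequences and also lets $\DataY^*$ depend on $T$. We deal with this using the same Fatou-type argument as in the proof of \Cref{lem:all_necessity}.

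1. Draw an infinite i.i.d.\ uniform sequence $\DataY$ on $\{a,b\}$ and fix $\DataY^*\equiv a$, which does not depend on $T$.
2. Write the regret against $\DataY^*\equiv a$ as
\[
\sum_{t\le T}\indicin{\hat{Y}_t\neq Y_t} - N_b = \left(\sum_{t\le T}\indicin{\hat{Y}_t\neq Y_t} - \tfrac{T}{2}\right) + \left(\tfrac{T}{2}-N_b\right).
\]
3. For the first term, $\indicin{\hat{Y}_t\neq Y_t}-\frac12$ is a martingale difference sequence whose conditional mean is $\geq 0$. For the second term, $\frac{T}{2}-N_b$ is a simple random walk.
4. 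By the law of the iterated logarithm, almost surely $\frac{T}{2}-N_b \geq \sqrt{T}$ for infinitely many $T$.
5. The first term is a submartingale-type sum with bounded increments, so it is almost surely not below $-\sqrt{T}$ eventually in a way that cancels step 4. Making this cancellation argument rigorous is the delicate step.
6. A cleaner route avoids the joint limsup. Use $\DataY^*\equiv a$ when $N_a\ge N_b$ infinitely often at the relevant times, and $b$ otherwise, and apply the expectation bound together with the reverse Fatou lemma to $\regret/\sqrt{T}$. This gives $\limsup_T \regret/\sqrt{T}\geq c>0$ for some realization, and we fix that realization as the deterministic sequence.

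If the statement is read with $\DataY^*$ allowed to depend on the horizon, as in the usual minimax reading, the expectation bound already suffices. The almost-sure and Fatou step is only needed to fix a single sequence, and it is where I expect the most care to be required.
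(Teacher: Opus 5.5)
Your first part is correct: it is the Khinchine computation the paper does blockwise, showing that at a fixed horizon the expected regret against the better of the two constant sequences is at least $c\sqrt T$. The gap is in passing to one deterministic triple, which is the real content of the statement. There $\regret$ is a function of $T$ for a single fixed $(\DataX,\DataY,\DataY^*)$, so the ``minimax reading'' with a horizon-dependent comparator is not what is asked.

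Steps 1--5 cannot be repaired. For the learner that always predicts $a$, the regret against $\DataY^*\equiv a$ is exactly $0$ for every $\DataY$, so the two terms in step 2 cancel identically. The comparator therefore cannot be fixed in advance; it has to be selected afterwards, as in step 6. Step 6 has the right shape: it is the paper's pigeonhole over $h^*_i$, provided ``relevant times'' means the times where the regret against the best constant exceeds $c\sqrt T$. (Otherwise recurrence of $N_a-N_b$ makes your rule pick $a$ against the always-$a$ learner.)

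However, the reverse Fatou lemma needs an integrable upper envelope for $\regret/\sqrt T$, and none exists. For the always-$a$ learner the normalized regret against the best constant is $(N_b-N_a)^+/\sqrt T$, whose supremum over $T$ is a.s.\ infinite by the law of the iterated logarithm. A bound $\E[W_T]\ge c$ for every $T$ does not by itself give $\limsup_T W_T>0$ with positive probability. For example, $W_T=\sqrt T\,\indicin{U<1/\sqrt T}$ with $U$ uniform has mean $1$ for every $T$ but tends to $0$ almost surely.

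The missing ingredient is a per-horizon \emph{probability} bound, $\P[\text{regret against the best constant at } T\ge c\sqrt T]\ge p_0>0$. Given this, reverse Fatou applied to indicators (which are bounded by $1$) gives probability at least $p_0$ that the event occurs for infinitely many $T$, and on that event your pigeonhole choice of $\DataY^*$ finishes the proof. The paper obtains the probability bound with Azuma's inequality on blocks of length $2^{4^i}$, chosen so that the prefix $T_{i-1}=o(\sqrt{T_i})$ is negligible when comparing on block $i$. With your cumulative best-constant comparator the blocks are unnecessary.

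A direct way to get the bound: let $p_t,q_t$ be the learner's probabilities of predicting $a,b$ given $Y_{<t}$, and let $\sigma_t=\pm1$ according as $Y_t=a$ or $b$. The expected loss at round $t$ is then $\frac12+\frac{1-p_t-q_t}{2}+\sigma_t\frac{q_t-p_t}{2}$. Hence the regret against the best constant at $T$ is at least $V_T=\frac12\sum_{t\le T}\sigma_t(q_t-p_t)+\frac12|N_a-N_b|$. This satisfies $\E V_T\ge\sqrt{T/8}$ and $\E V_T^2\le T$, so Paley--Zygmund gives $\P[V_T>\sqrt{T/32}]\ge 1/32$.

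If you use Azuma instead, apply it to $V_T$, whose Doob martingale has bounded increments, not to the full regret. A learner that reacts to a single label by switching to a third label for the rest of the horizon makes the full regret's Doob increments of order $T$.
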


To prove this lemma, we need to construct an infinite sequence such that no online learning algorithm can transductively learn it with regret of $o(\sqrt{T})$. The sequence $\DataX$ is an infinite sequence of $X_t = x$, such that $h_1(x)\neq h_2(x)$. Then we use some probabilistic statements, that is, if we independently uniformly randomly pick $h_1(x)$ or $h_2(x)$ as $Y_t$, there exists an infinite sequence $\DataY$ and $\DataY^* = \{h_1(X_i)\}_{i\in \nats}$ or $\{h_2(X_i)\}_{i\in \nats}$ such that the regret of any online learning algorithm is not $o(\sqrt{T})$ with probability more than $0$. To prove this, we divide the infinite sequence into blocks of increasing size and use Khinchine's Inequality on each block to show the expected regret on that block is $\Omega(\sqrt{T})$. Then, by Azuma's inequality, we can bound the probability of generating the sequence that pushes any algorithm to suffer a $\Omega(\sqrt{T})$ regret. Then we can extend this result on blocks to the result on the infinite sequence by the reversed Fatou's lemma. For brevity, the complete proof is provided in the Appendix \ref{sec:agnostic}.
\section{Conclusion and Future Directions}

In this paper, we investigate universal multiclass transductive online learning. 
We prove a trichotomy for the realizable setting. 
To describe the trichotomy, we define a new combinatorial structure, the Level-Constrained Littlestone-Littlestone tree, and emphasize the indifferent property of trees.
We then provide the $\tilde{O}(\sqrt{T})$ regret learning algorithm for the agnostic case when $\cH$ has no infinite indifferent LCLL tree. 
We also extend the learnability result to the case of learners with knowledge only of a stochastic process from which the instance sequence is sampled, describing the condition in which all processes admit universal multiclass online learning.

Finally, we want to provide several interesting future directions:
\begin{itemize}
    \item First, for the agnostic case, there is a poly-logarithmic gap between the upper and lower bounds of the regret. Is it possible to tighten this gap?
    \item Second, is it possible to characterize the universal transductive online learnability in other settings? For example, multiclass classification with bandit feedback, real-valued function regression, etc. As we have shown, this setting is closely related to the condition where all processes admit universal online learning; this question itself is also very interesting.
\end{itemize}
\section*{Acknowledgements}
We thank the reviewers who provided useful suggestions on improving the quality of this paper. \textbf{SH} acknowledges support by grant no. 2024243 from the United States - Israel Binational Science Foundations (BSF).
\section*{Impact Statement} \label{Impact Statement}

This paper presents work whose goal is to advance the field of Learning Theory. There are many potential societal consequences of our work, none of which we feel must be specifically highlighted here.

\bibliography{reference}
\bibliographystyle{icml2026}






\newpage
\appendix
\onecolumn
\section{Gale-Stewart Game}
\label{sec:g-s-game}

In this section, we very briefly review the basic notions from the classical theory of infinite games. 

In a two-player game, there are two players, $A$ and $B$. The players have an action space $\mathcal{A}$. At round $t$, the player $A$ chooses an action from the action space, and then the player $B$ also chooses an action from the action space. Thus, each play of the game can be expressed as a sequence of actions, $g =(a_1,a_2,a_3,a_4,\ldots)$, where $a_i \in \mathcal{A}$ for every $i$. The set of all such sequences is denoted as $\mathcal{A}^{\omega}$. Here $\mathcal{A}$ is equipped with the discrete topology and $\mathcal{A}^{\omega}$ is equipped with the generated product topology. 
A fixed set determines the result of the game, known as the winning set, $W$. If the play of the game $g \in W$, the player $A$ wins, and player $B$ wins otherwise. 

A \textbf{strategy} is a function that generates the next action for the player based on the states of the game (which can be thought of as the history of actions). If the strategy of player $A$ leads to the winning of player $A$ regardless of the actions of player $B$ is called the \textbf{winning strategy} of player $A$. 
If one player or the other has a winning strategy and no other cases, the game is \textbf{determined}.
Then we have the formal theorem about the determinacy of the game.
\begin{theorem}[\citet{gale1953}]
    If $W$ is an open set, then the game is determined.
\end{theorem}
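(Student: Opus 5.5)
The plan is the classical ``non-losing position'' argument, for the case where $W$ is the winning set of player $A$ and is open in the product topology on $\mathcal{A}^{\omega}$. First, determinacy has two parts: at least one player has a winning strategy, and not both do. The second part is immediate. If $A$ and $B$ both had winning strategies, playing them against each other would produce a single play $g$ with $g\in W$ and $g\notin W$, which is a contradiction. So the work is to show that if $A$ has no winning strategy, then $B$ has one.

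Next I would define what a position is. A \emph{position} is a finite history $p=(a_1,\dots,a_{2n})$ ending after $B$'s move. Call $p$ \emph{winning for $A$} if $A$ has a strategy from $p$ onward that forces the play into $W$ against every continuation by $B$. The key step is the following one-step claim.

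\textbf{Claim.} If $p$ is not winning for $A$, then for every action $a$ of $A$ there is an action $b$ of $B$ such that $p\,a\,b$ is again not winning for $A$.

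To prove the claim, argue by contradiction. Suppose that for some $a$, every extension $p\,a\,b$ is winning for $A$. Then $A$ has a winning strategy from $p$: play $a$, and after $B$ answers $b$, follow a winning strategy for $p\,a\,b$, chosen for each $b$ using the axiom of choice. This contradicts $p$ not being winning.

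With the claim in hand, $B$'s strategy is as follows. The hypothesis says the empty position is not winning for $A$. At each round, $B$ answers $A$'s move with a $b$ supplied by the claim, so every position reached along the play is not winning for $A$.

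It remains to show this strategy really wins for $B$, and this is the step where openness is used and where I expect the only real care to be needed. Suppose the resulting play $g$ lies in $W$. Since $W$ is open, some basic cylinder around $g$ is contained in $W$, so some finite prefix $q$ of $g$ has the property that every infinite continuation of $q$ lies in $W$. Extending $q$ by one move if necessary, we may take $q$ to end after a move of $B$. Then $A$ wins from $q$ with any strategy at all, so $q$ is a position that is winning for $A$. But $q$ is one of the positions reached under $B$'s strategy, all of which are not winning for $A$. This contradiction shows $g\notin W$, so $B$'s strategy is winning, which completes the proof of determinacy.
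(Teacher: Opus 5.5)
Your proposal is correct: the paper gives no argument of its own for this theorem (it only refers the reader to Gale and Stewart), and what you have written is precisely the classical non-losing-position proof behind that citation. The same argument with the roles of the players exchanged (now $A$ keeps the play in positions from which $B$ has no winning strategy) gives determinacy when $W$ is closed, which is the case the paper actually invokes for the LCLL and Littlestone games, where it is $P_B$'s winning condition that is witnessed at a finite stage.
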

The proof is short and intuitive. We would like to point the readers to the original paper for the proof. This theorem is called the Borel Determinacy Theorem, because Borel sets are the smallest $\sigma$-algebra of subsets of $\mathcal{A}^{\omega}$ that contains all open sets.

\section{Omitted Proofs for the Realizable Setting}
\label{sec:omitted-proofs}
\subsection{Proof of Lemma~\ref{lem:all_necessity}}
\begin{proof}
    We construct a sequence based on the properties of an infinite indifferent LCLL tree, so any learning algorithm cannot make sublinear expected mistakes. 

    First, we start from an infinite indifferent LCLL tree and then modify it such that the length of the sequence in each node increases exponentially. 
    In other words, we hope the $k$-th node in the Breadth-First-Search (BFS) order contains a $2^{k-1}$ long sequence. 
    We may reach this target by recursively modifying the chosen indifferent LCLL tree.
    Starting from the root of the tree, for each node that does not satisfy our requirement, we promote one of its descendants to replace that node, such that the number of elements in that node is large enough.

    Then the data sequence $\DataX$ we look at is the following:
    For the modified indifferent LCLL tree, starting from the root with $\DataX = \emptyset$, traverse the tree in Breadth-First-Order and every time reach a new node, append the sequence in that node to $\DataX$.

    Next, we choose the true label $Y_t$ for each $X_t$ and make $(\DataX,\DataY)$ a realizable sequence.
    First, we take a random walk on the modified indifferent LCLL tree.
    For every $X_t$ that is in a node visited by that random walk (in-branch node), we decide $Y_t$ by the label of the edge visited and the function $\fy_n$. 
    For every $X_t$ that is not in a node visited by that random walk (out-branch node), we will find the closest in-branch node after this out-branch node and label $X_t$ by the function agreed on by all descendants of that in-branch node. 
    As the tree is indifferent, we can find such a function.

    Then we prove this is a sequence that pushes any transductive online learning algorithm to make more than $o(T)$ mistakes in expectation.
    Consider the node in depth $d$ visited by the random walk, which is the $K_d$-th node in the BFS order and labeled with a $2^{{K_d}-1}$ length sequence. 
    Due to the property of the indifferent LCLL tree, we know that each $X_t$ in the $K_d$-th node can take any label sequence on the edge adjacent to that node. 
    Thus, the best algorithm still makes $2^{{K_d}-2}$ mistakes in expectation.
    Due to the definition of the sequence, we know the length of the sequence after the $K_d$-th node is $n_{K_d} = \sum_{i = 1}^{K_d} 2^{i-1} = 2^{K_d} - 1$.
    Thus, we have the following inequality for every $d$,
    \begin{equation}
        \E\left[\left.\frac{1}{n_{K_d}} \sum_{t=1}^{n_{K_d}} \indic{\hat{h}_{t-1}(X_t) \neq Y_t}\right|K_d\right]\geq \frac{1}{4}.
    \end{equation}
    Thus, we have
    \[\limsup_{d\rightarrow \infty} \E\left[\left.\frac{1}{n_{K_d}} \sum_{t=1}^{n_{K_d}} \indic{\hat{h}_{t-1}(X_t) \neq Y_t}\right|{K_d}\right] \geq \frac{1}{4}.\]

    Then consider the expected number of mistakes, which is 
    \begin{align*}
        &\E\left[\limsup_{n\rightarrow \infty} \frac{1}{n} \sum_{t=1}^n \indic{\hat{h}_{t-1}(X_t) \neq Y_t} \right]\\
        &\geq \E\left[\limsup_{d\rightarrow \infty} \frac{1}{n_{K_d}} \sum_{t=1}^{n_{K_d}} \indic{\hat{h}_{t-1}(X_t) \neq Y_t} \right]\\
        &= \E\left[\E\left[\limsup_{d\rightarrow \infty} \left.\frac{1}{n_{K_d}} \sum_{t=1}^{n_{K_d}} \indic{\hat{h}_{t-1}(X_t) \neq Y_t}\right|{K_d}\right] \right]\\
        &\geq \E\left[\limsup_{d\rightarrow \infty} \E\left[\left.\frac{1}{n_{K_d}} \sum_{t=1}^{n_{K_d}} \indic{\hat{h}_{t-1}(X_t) \neq Y_t}\right|{K_d}\right] \right]\\
        &\geq \frac{1}{4}.
    \end{align*}
    For the sequence of the average number of mistakes, we take the sub-sequence that only contains the $K_d$-th element, then the limit superior of the sub-sequence is smaller than the limit superior of the whole sequence. 
    That is the second line in the equations above. 
    The third line is due to the law of total expectation. 
    Then, as the average number of mistakes is always positive and bounded by $1$, we can use the reversed Fatou's lemma to get the inequality in the fourth line. 

    Therefore, there exists a realizable sequence such that any realizable online learner will make more than a sublinearly expected number of mistakes. 
\end{proof}

\subsection[Logarithmic Upper Bound]{Proof of $O(\log T)$ Upper Bound}
In this section, we provide the complete proof of the $O(\log T)$ upper bound, when $\cH$ does not have an infinite indifferent LCLL tree. 

\begin{proof}[Proof of Lemma~\ref{lem:lcl_lcll}]
    Due to the Borel Determinacy Theorem, we know that if $P_A$ does not have a winning strategy, $P_B$ has a winning strategy. 
    Thus, we only need to build an infinite indifferent LCLL tree based on the fact that $P_A$ has a winning strategy. 
    This is a constructive proof. 
    We show how to recursively build an infinite indifferent LCLL tree. 
       
    We build the infinite indifferent LCLL tree in the Breadth-First-Search (BFS) order from the root. 
    We first take the depth-$1$ LCL tree proposed by $P_A$ and use it as the root and its two edges.
    
    Then for the $i$-th node $v_i$, suppose the $(i-1)$-th node at depth $k$ is labeled by $(X_{j_1},\ldots,X_{j_{k}})$. 
    We also suppose $v_i$'s parent node is labeled by $(X_{\ell_1},\ldots,X_{\ell_{k-1}})$.
    Because $P_A$ has a winning strategy, no matter which branch is chosen by $P_B$, for any $n\in\nats$, $P_A$ can propose a sequence $(X_{i_1},\ldots, X_{i_n})$ shattered by $\cH_{(X_{\leq i_1}, Y_{\leq i_1})}$.
    This comes from $P_A$'s action and Definition~\ref{def:shattering}. 
    Here, the labels $Y_i$ for $\ell_{k-1} \leq i \leq i_1$ are proposed by $P_A$ but make sure it is realizable by $\cH$, as otherwise $P_B$ wins the game.

    Then we can take the first $k$ instances in the sequence $(X_{i_1},\ldots, X_{i_n})$ and use it to label $v_i$. 
    By the definition of shattering, we know that we can build a depth-$k$ LCL tree with label sequences $\{(y_{i_1,\varnothing}^{u_1},\ldots,y_{i_k,u_{<k}}^{u_k}):u\in\{0,1\}^k\}$, where $u_{<k} = (u_1,u_2,\ldots,u_{k-1})$. 
    Thus, we can use these label sequences to label the edges connecting $v_i$ and its children. 
    
    Notice that the sequence $(X_{i_1},\ldots,X_{i_k})$ is shattered by $\cH_{(X_{\leq i_1}, Y_{\leq i_1})}$, all functions of its descendants agree on all the instances before $i_1$. 
    All nodes before the $i$-th node in BFS order are labeled by instances before $X_{i_1}$. 
    Thus, all the functions of the descendants of $v_i$ agree on all nodes before. 
    Therefore, $P_A$'s winning strategy leads to an infinite indifferent LCLL tree for $\cH$.
\end{proof}

\begin{proof}[Proof of Lemma~\ref{lem:winning_1}]
    By the definition of the winning strategy, it leads to a winning condition for the player $P_B$. 
    By the definition of $P_B$'s winning condition, we know that there exists a $k$ such that $\cH_{\vt_1,C_1,g_1,\dots,\vt_k,C_k,g_k} = \emptyset$, which means for all $t<t_1<t_2<\cdots<t_k$, the sequence $(X_{t_1},\ldots,X_{t_k})$ is not shattered by $\cH^{g_{U_{k-1}}}$. That finishes the proof.
\end{proof}

We then prove that if the length of the longest subsequences shattered by $\cH'$ is $d$, there is a learning algorithm that can learn it with $O(\log T)$ mistakes.
For completeness, we provide the subroutine as an algorithm and restate the lemma here.
\begin{algorithm}
\caption{Subroutine for learning at most length-$d$ subsequences are shattered by $\cH'$.}
\label{alg:subroutine-1}
\begin{algorithmic}
     \STATE $L\gets\emptyset$.
     \FOR{$t = 1,2,3,\dots$}
        \STATE Predict $\hat{Y_t} = \arg\max_y w(\cH'_{L\cup\{(X_t,y)\}},X_{\geq t})$.\\
        \IF{$Y_t \neq \hat{Y}_t$}
            \STATE $L \gets L\cup \{(X_t,Y_t)\}$.
        \ENDIF
    \ENDFOR
\end{algorithmic}
\end{algorithm}

In the algorithm~\ref{alg:subroutine-1}, we define the weight function as follows.
\begin{equation}
    w(\cH',X_{\geq t}) = \sum_{\substack{S: S\subseteq X_{\geq T}\\ \text{ such that }S\text{ shattered by }\cH'}}\frac{1}{n(S)^{d+1}},
\end{equation}
where $n(S)$ is the index of the last element in $S$, $d$ is the length of the longest subsequences that are shattered by the partial concept class $\cH'$, and $X_{>t} = \{X_i\}_{i>t}$.
For example, if $S = \{X_{t_1},\ldots,X_{t_k}\}$, then $n(S) = t_k$.
Then we can restate and prove Lemma~\ref{lem:partial_concept} here.
\begin{lemma}[Restate of Lemma~\ref{lem:partial_concept}]
    If the length of the longest subsequences that are shattered by the partial concept class $\cH'$ is $d$, Algorithm~\ref{alg:subroutine-1} can learn it with $O(\log T)$ mistakes.
\end{lemma}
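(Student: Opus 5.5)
We will follow the halving-potential argument of \citet{alon2022theory}, with the potential being the weight $w$ defined above. Write $\cH'_t$ for the version space $\cH'_{L}$ at the start of round $t$, and $W_t = w(\cH'_t, X_{\geq t})$. We want to show three things: the initial weight is bounded, the weight drops by at least a factor of two at every mistake, and the weight never falls below an inverse polynomial in $T$ as long as the sequence stays realizable. Since there are exactly $\sum_{k\le d}\binom{n-1}{k-1}\le n^{d-1}$ shattered candidate subsequences whose last index is $n$, each carrying weight $n^{-(d+1)}$, we get
\[
W_1 \le \sum_{n\ge1} n^{d-1}\cdot n^{-(d+1)} = \sum_n n^{-2} = O(1).
\]
This is exactly why the exponent is chosen as $d+1$.

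For the lower bound we use realizability. The true labels are consistent with some element of the version space, so the version space is never empty. The empty subsequence, or any singleton $\{X_s\}$ with $s$ near $T$, is trivially shattered by the nonempty class, so it contributes at least $T^{-(d+1)}$. Hence $W_t \ge T^{-(d+1)}$ for every $t\le T$. (If the empty set is not counted, we use a singleton at an index $s \le T$. This needs a little care about which $S$ survive restriction, but a singleton at the current position always does.)

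The main step, and the main obstacle, is showing that every mistake halves the weight. Fix round $t$. Every $S \subseteq X_{>t}$ shattered by $\cH'_{(X_t,y)}$ for \emph{two} distinct labels $y\neq y'$ yields a longer subsequence $(X_t)\cup S$ shattered by $\cH'_t$: put $X_t$ at the root with edges $y,y'$, and hang the two shattering trees below. This is Definition~\ref{def:shattering}. We then charge weights:
\[
\sum_y w(\cH'_{(X_t,y)}) \le W_t + \#\{\text{double-counted }S\}\text{-terms},
\]
and argue that each $S$ can be counted for at most two labels. Suppose some $S$ were shattered for three labels. Pick two of them, and then use that the length bound $d$ prevents unbounded growth; more carefully, we intend to use the standard trick that a subsequence shattered under two labels already has its extension counted in $W_t$ with a comparable index. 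Since $n((X_t)\cup S)=n(S)$, the extension carries the same weight.

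We expect the delicate part to be turning this into the inequality $w(\cH'_{(X_t,Y_t)}) \le W_t/2$ whenever $Y_t\neq \hat Y_t$. The inequality follows once we know that the minority label gets at most half the total weight. So we first try to prove
\[
w(\cH'_{(X_t,\hat Y_t)}) + w(\cH'_{(X_t,y)}) \le W_t \quad \text{for all } y\neq\hat Y_t .
\]
This uses the injection $S\mapsto S$ or $S\mapsto (X_t)\cup S$, so that sets shattered under both labels are mapped to distinct shattered sets of $\cH'_t$. Combined with the $\arg\max$ choice, this gives $w(\cH'_{(X_t,Y_t)})\le W_t/2$. Also, $W$ is nonincreasing in rounds without mistakes, since $X_{\ge t+1}\subseteq X_{\ge t}$ and $L$ is unchanged. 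We must also handle the fact that $L$ contains only mistake rounds, so the class is not the full version space; the above injection argument applies verbatim to $\cH'_L$.

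Putting the pieces together: with $M_T$ mistakes by round $T$,
\[
T^{-(d+1)} \le W_T \le 2^{-M_T} W_1 = 2^{-M_T}O(1),
\]
so $M_T \le (d+1)\log_2 T + O(1) = O(\log T)$. This bound holds deterministically, and hence also in expectation over the random tie-breaking.
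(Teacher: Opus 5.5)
\paragraph{Verdict.} Your route is the paper's: the same potential $w$, the same count $W_1\le\sum_n n^{-2}\le\pi^2/6$, and the same pairwise injection ($S\mapsto S$, and additionally $S\mapsto\{X_t\}\cup S$ when $S$ is shattered under both labels). That injection gives $w(\cH'_{(X_t,\hat Y_t)})+w(\cH'_{(X_t,Y_t)})\le W_t$, and hence halving at every mistake via the $\arg\max$. You can drop the detour claiming each $S$ is shattered under at most two labels: it is false, since a set can be shattered under arbitrarily many labels, and you never need it.

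\paragraph{The gap.} The step that fails is the lower bound $W_t\ge T^{-(d+1)}$ for every $t\le T$.
By Definition~\ref{def:shattering}, a singleton $\{X_s\}$ is shattered only if the class realizes two \emph{distinct} labels on $X_s$. A nonempty version space does not give this, and no singleton ``at the current position'' is guaranteed to survive.

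The empty set has no last index, so it is not a term of $w$. Giving it a constant weight $c$ would break your halving step. For example, if $\cH'_t=\{h_1,h_2\}$ with $h_1,h_2$ differing only at $X_t$, the two restrictions contribute $2c$, which exceeds $W_t=c+t^{-(d+1)}$ once $t^{-(d+1)}<c$.

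The inequality is in fact false. Take $\cH'=\{h_1,h_2\}$ differing only at $X_1$: after a mistake at round $1$ a single concept remains and $W_t=0$ for all $t\ge 2$. So the chain $T^{-(d+1)}\le W_T\le 2^{-M_T}W_1$ is invalid.

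\paragraph{The fix.} This is exactly the paper's argument: evaluate the lower bound at the round $t'<T$ of the \emph{last mistake}, not at $T$.

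At a mistake round, both $Y_{t'}$ and $\hat Y_{t'}$ are realized by the current class $\cH'_L$. The first holds by realizability. The second holds provided the $\arg\max$ ranges only over labels $y$ with $\cH'_{L\cup\{(X_{t'},y)\}}\neq\emptyset$. The paper uses this convention implicitly, and you need it too: otherwise, when all weights vanish, tie-breaking could output an inconsistent label at every round.

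Hence $\{X_{t'}\}$ is shattered by $\cH'_L$, so $W_{t'}\ge (t')^{-(d+1)}\ge T^{-(d+1)}$. Only $M_T-1$ halvings precede round $t'$, so
$T^{-(d+1)}\le W_{t'}\le 2^{-(M_T-1)}\pi^2/6$, i.e.\ $M_T\le (d+1)\log_2 T+1+\log_2(\pi^2/6)$.

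Your empty-set idea can also be rescued, but only explicitly. Give $\emptyset$ the position-dependent weight $t^{-(d+1)}$ in $w(\cdot,X_{\ge t})$, counted whenever the class is nonempty. This adds the same amount to every consistent label, so the $\arg\max$ is unchanged. It is also compatible with the injection, because $\{X_t\}$ itself is shattered at a mistake round and lies outside the injection's image.
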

\begin{proof}
    Notice that if $S$ is shattered by $\cH_{\{(X_i,Y_i)\}_{i<t} \cup \{(X_t,y_t^0)\}}$ and $\cH_{\{(X_i,y_i)\}_{i<t} \cup \{(X_t,y_t^1)\}}$, $S\cup \{X_t\}$ is shattered by $\cH_{\{(X_i,Y_i)\}_{i<t}}$. Thus, if the subset $S$ contributes twice on the right-hand side of the inequality, it also contributes at least twice on the left-hand side. (One comes from $S$, the other comes from $S\cup\{X_t\}$.) Thus, we have the following inequality for $\DataX$,
    \begin{align*}
        &w(\cH_{(X_{<t},Y_{<t})}, X_{\geq t}) \geq w(\cH_{(X_{<t},Y_{<t})\cup\{(X_t,y_t^0)\}}, X_{> t}) + w(\cH_{(X_{<t},Y_{<t})\cup\{(X_t,y_t^1)\}}, X_{> t}),
    \end{align*}
    where $(X_{<t},Y_{<t}) = \{(X_i,Y_i)\}_{i<t}$.
    Every time when the algorithm makes a mistake, we have $w(\cH_{(X_{<t},Y_{<t})\cup\{(X_t,\hat{Y}_t)\}}, X_{> t}) \geq w(\cH_{(X_{<t},Y_{<t})\cup\{(X_t,Y_t)\}}, X_{> t})$.
    Thus, we have
    \begin{equation*}
        w(\cH_{(X_{<t},Y_{<t})\cup\{(X_t,Y_t)\}}, X_{> t}) \leq \frac{1}{2} w(\cH_{(X_{<t},Y_{<t})}, X_{\geq t})
    \end{equation*}
    Then we need to get an upper bound of $w(\cH', \DataX)$, notice that for the entire sequence, we are taking the sum over all possible $n$, then we need to bound the number of the sub-sequences ending at $X_n$ and can be shattered by $\cH'$. We know that it is the number of subsets of $\{1,2,\ldots,n\}$, who contains $n$ and has less than or equal to $d$ elements, that is 
    \begin{equation*}
        \sum_{i = 1}^{d-1} \binom{n-1}{i} \leq n^{d-1}.
    \end{equation*}
    Thus, we have
    \begin{equation*}
        w(\cH',\DataX) \leq \sum_{n=1}^{\infty} \frac{n^{d-1}}{n^{d+1}} = \sum_{n=1}^{\infty} \frac{1}{n^2} \leq \frac{\pi^2}{6}.
    \end{equation*}
    Then we provide the lower bound of the weight function at each round $T$. 
    Consider the last mistake the algorithm made before round $T$, before that mistake, we know at least a singleton $\{x_{t'}\}$ is shattered by the concept class.
    Thus, the weight function at round $T$ is greater than $\frac{1}{T^{d+1}}$.
    Assume the algorithm makes $m(T)$ number of mistakes before round $T$.
    We have 
    \begin{equation*}
        \frac{\pi^2}{6} \geq w(\cH',\DataX)\geq 2^{m(T)-1} \frac{1}{T^{d+1}}.
    \end{equation*}
    Therefore, we have $u(T) \leq (d+1)\log T + 1$, which is $O(\log T)$.
\end{proof}

\subsection{Proof of $\Omega(\log T)$ lower bound}
\begin{proof}[Proof of \Cref{lem:logT_lower}]
    We show the lower bound by using the property of an infinite indifferent Littlestone tree.
    The instance sequence we take is the sequence of all instances that label the node. 
    We list them in the BFS order.

    Then we choose the true label $Y_t$ for each instance $X_t$ as follows. 
    We take a random walk starting from the root of the tree. 
    For each node visited by that random walk (in-branch node), let $Y_t$ be the label of the edge visited after that node.
    For those nodes that are not visited, we take the in-branch node that comes after it, and label the node by the function agreed on by all the descendants of that in-branch node.
    We can always find that function due to the indifferent property.

    Then we can prove this sequence pushes any transductive online learning algorithm to make $\Omega(\log T)$ mistakes. 
    Notice that for this Littlestone tree, every level, there is a node (visited by the random walk) such that any algorithm will make a mistake with probability $\frac{1}{2}$.
    Notice that for every $T$, there are at least $\lfloor\log T\rfloor$ levels. 
    Thus, there are $\lfloor\log T\rfloor$ nodes where the algorithm will make a mistake with probability half.
    Therefore, the number of mistake in expectation is $\frac{\lfloor\log T\rfloor}{2} \geq \frac{\log T}{2} - 1$, which is $\Omega(\log T)$.
\end{proof}

\subsection{Proof of Constant Upper Bound}
\begin{proof}[Proof of \Cref{lem:b_l}]
    Due to the Borel Determinacy Theorem, we know that if $P_A$ does not have a winning strategy, $P_B$ has a winning strategy.
    Thus, we only need to show how to build an infinite indifferent Littlestone tree from the winning strategy of $P_A$.
    We can recursively build an infinite indifferent Littlestone tree in the Breadth-First-Search (BFS) order from the winning strategy of $P_A$.

    Starting from the root, we can label the root and its two edges by the instance $X_{t_1}$ and two labels $\{y_{t_1}^0,y_{t_1}^1\}$ proposed by $P_A$.
    Then for the $i$-th node $v_i$ at depth $k$, assume the $(i-1)$-th node is labeled by $X_{t_{i-1}}$, and $v_i$'s parent is labeled by $X_{t_{k-1}}$. 
    
    Because $P_A$ has a winning strategy, no matter which branch $P_B$ takes, $P_A$ can propose a $t_k > t_{i-1}$ and two branches $\{(y_{t_{k-1}+1},\ldots, y_{t_k-1}, y_{t_k}^0),(y_{t_{k-1}+1},\ldots, y_{t_k-1},y_k^1)\}$, such that $X_{t_k}$ is shattered by $\cH_{(X_{< t_k},Y_{< t_k})}$. 
    Here the sequence $((X_{t_{k-1}+1},Y_{t_{k-1}+1}),\ldots,(X_{t_{k}-1},Y_{t_{k}-1}))$ is chosen by $P_A$, but need to ensure that it is realizable.
    Thus, we use $X_{t_k}$ to label $v_i$ and $\{y_{t_k}^0,y_{t_k^1}\}$ to label its two edges connecting its children.

    We know all the nodes come before $v_i$ in BFS order is labeled by an instance before $X_{t_{i-1}}$
    Thus, all the functions of the descendants of $X_{t_k}$ are in the concept class $\cH_{(X_{< t_k},Y_{< t_k})}$, where $t_{i-1} < t_k$. 
    Thus, they agree on all nodes before $v_i$ in BFS order.
    Therefore, the winning strategy of $P_A$ leads to an infinite indifferent Littlestone tree.
\end{proof}

\section{Agnostic Setting}
\label{sec:agnostic}
In this part, we show that whether $\cH$ has an infinite indifferent LCLL tree characterizes the universal transductive online learnability in the agnostic setting as well. We use the learning with experts algorithm to handle this problem. This is a traditional way to expand the learning algorithm for the realizable setting to a learning algorithm for the agnostic setting. Specifically, we use the algorithm called \emph{Squint} from the work of \citet{KE15} with non-uniform initial weights.
    As we mentioned in Section \ref{sec:ag}, for each expert $e_{i,j}$, we set its initial weight as $\pi_{i,j} = \frac{1}{i(i+1)j(j+1)}$.
    
And this provides the upper bound of the regret of this algorithm as 
    \begin{align}
    \label{eq:re_ex}
        &\sum_{t=1}^T \indic{\hat{Y}_t \neq Y_t}- \sum_{t=1}^T \indic{e_{i,j}(X_t)\neq Y_t} \nonumber\\
        \leq &O\left(\sqrt{V_{i,j}\log \frac{\log V_{i,j}}{\pi_{i,j}}+\log \frac{1}{\pi_{i,j}}}\right) = O\left(\sqrt{T\log \log T + T (\log i+ \log j) +(\log i+\log j)}\right).
    \end{align}

Then we only need to build the experts and show that for any realizable sequence $(\DataX,\DataY^*)\in\tR(\cH)$, there is an expert whose prediction at time $t$, when $Y^*_t = Y_t$, is different from $Y^*_t$ for at most $\log T$ different time $t$, for all $t\leq T$.

An expert here is an algorithm with two hardcoded inputs, $I$ and $J$. $I= (X_{\leq k}, Y^*_{\leq k})$ for a constant $k\in \nats$, which is a hallucinated sequence for the game updating part, and $J\subseteq \nats$ marks the indices of the mistakes made by the realizable algorithm when $Y_t = \Y^*_t$ during learning $(\DataX,\DataY^*)$ after the game updating part.
The expert is defined as follows.
\begin{algorithm}
\caption{Expert $I,J$ (indexed by $e_{i,j}$).}
\label{alg:expert}
\begin{algorithmic}
    \STATE $U \gets \{\}$. $t'\gets 0$.$k \gets 1$.$L\gets \{\}$.
    \FOR{$t = 1, 2, 3, \ldots$}
        \IF{$t\leq k$}
            \IF{$\exists t' \leq t_1<t_2<\cdots<t_k \leq t$ and $C_k = \{(y_{t_{k-1}^{k-1}+1,\varnothing}, \ldots, y_{t_{k}^1,\varnothing}^{u_1},y_{t_k^1+1,u_1},\ldots, y_{t_k^2,u_1}^{u_2}\ldots,y_{t_k^k,u_{<k}}^{u_k}): u\in\{0,1\}^k\}$ such that $g_U((t_1,\dots,t_k),C_k) = (Y_{t'},\dots,Y_t)$} 
                \STATE Advance the game:
                \STATE $U\gets U\cup\{((t_1,\dots,t_k,C_k),g_U((t_1,\dots,t_k),C_k))\}$.
                \STATE $k\gets k+1$.
                \STATE $t'\gets t$.
            \ENDIF
            \STATE Predict $\hat{Y}_t = Y^*_t$.
        \ELSE
            \STATE Predict $\hat{Y}_t = \arg\max_y w(\cH^{g_U}_{L\cup\{(X_t,y)\}},X_{\geq t})$.\\
            \IF{$t \in J$}
                \STATE $L\gets L\cup \{(X_t,Y_t)\}$.
            \ENDIF
        \ENDIF
    \ENDFOR
\end{algorithmic}
\end{algorithm}
In Algorithm~\ref{alg:expert}, $Y^*_t$ is the hallucinated label stored by $I$ and $Y_t$ is the true label given by the adversary.
Then we need the following lemma about the expert.
\begin{lemma}
\label{lem:expert}
    If $\cH$ has no infinite indifferent LCLL tree, for every realizable sequence $(\DataX,\DataY)\in\tR(\cH)$, we have a sequence $\{j_T\}_{T\in\nats}$ satisfies $\log j_T = O((\log T)^2)$, such that for every large enough time $T$, we have an expert $e_{i,j}$ with $j\leq j_T$, such that for all $t\leq T$, $Y_t = e_{i,j}(X_t)$ except for at most $O(\log T)$ times.
\end{lemma}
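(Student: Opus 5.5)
Fix a realizable sequence $(\DataX,\DataY)\in\tR(\cH)$ and run Algorithm~\ref{alg:model-select-1} on it. The plan is to let the expert reproduce this run exactly. The first hardcoded input $I$ copies the finite game-updating prefix, and the second input $J$ records the mistake times of the realizable learner after that prefix. Counting the possible $J$'s then controls $j_T$.

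First, by Lemma~\ref{lem:lcl_lcll}, $P_B$ has a winning strategy $g_U$. By Lemma~\ref{lem:winning_1}, there is a finite $t_0$ after which Algorithm~\ref{alg:model-select-1} never advances the game. Let $d$ be the length of the longest subsequence of $X_{>t_0}$ shattered by the final partial class $\cH^{g_U}$. Choose $I=(X_{\le t_0},Y_{\le t_0})$, i.e.\ the true labels on the prefix. Enumerating all finite labelled prefixes gives a countable family, and we let $i$ be the index of this $I$ (it does not depend on $T$). On rounds $t\le t_0$ the expert predicts $Y^*_t=Y_t$, so it makes no mistakes there. It also replays the game updates and ends with the same $U$, and hence the same $\cH^{g_U}$, as the realizable algorithm.

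For $t>t_0$, the expert runs the subroutine of Algorithm~\ref{alg:subroutine-1} on $\cH^{g_U}$. Take $J$ to be the set of rounds $t>t_0$ at which this subroutine, run on the true labels, errs. Then the expert's list $L$ coincides with the subroutine's list at every round, by induction on $t$. So its predictions coincide too, and it errs exactly on $J$. By the restated Lemma~\ref{lem:partial_concept}, $|J\cap[1,T]|\le (d+1)\log T+1=O(\log T)$, with $d$ fixed by the sequence. One point needs care: the weight function must be shift-invariant, or else $n(S)$ must be measured from $t_0$. Otherwise the constant would involve $t_0$, but this only changes lower-order terms. Ties in $\arg\max$ must be broken deterministically (say by a fixed ordering on $\cY$) so that the expert's run is identical to the subroutine's.

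The remaining step is indexing $J$. Only $J\cap[1,T]$ matters up to time $T$, so we enumerate $j$ over finite subsets of $\nats$ in an order grouped by the largest element $n$ and the size $m$. The number of subsets of $[1,T]$ of size at most $m=c\log T$ is at most $\sum_{m'\le m}\binom{T}{m'}\le T^{m+1}$. Hence the truncated $J\cap[1,T]$ has index $j\le j_T$ with $\log j_T=O(\log T\cdot\log T)=O((\log T)^2)$, provided the enumeration lists all sets of size at most $m$ within $[1,n]$ before larger ones. A concrete choice is to order the pairs $(n,m)$ diagonally; the resulting overhead is polynomial in $T$ and is absorbed. The expert with truncated $J$ agrees with the true-$J$ expert on all $t\le T$.

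The main obstacle is the mismatch between a fixed expert and a $T$-dependent index. The statement allows the expert to depend on $T$, and this is exactly why $J$ is truncated: each $T$ gets its own small-index expert. The other delicate point is checking that feeding only the mistake rounds into $L$ reproduces the realizable learner. This holds because Algorithm~\ref{alg:subroutine-1} updates $L$ only on mistakes, together with the deterministic tie-breaking above.
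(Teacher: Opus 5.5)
Your expert construction matches the paper's. $I$ copies the prefix up to the last game update, so the expert replays the same $U$ and $\cH^{g_U}$. $J$ records the subroutine's mistake rounds afterwards. Your additions are correct and more careful than the paper: deterministic tie-breaking, so the expert reproduces the subroutine exactly, and truncation to $J\cap[1,T]$.

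The step that fails is the one meant to give $\log j_T=O((\log T)^2)$. You group finite sets by (largest element $n$, size $m$), order the pairs $(n,m)$ diagonally, and claim the ``overhead is polynomial in $T$''. But a set's index counts the \emph{sets} in all earlier groups, not the earlier pairs. Take $J\subseteq[1,T]$ with $\max J\approx T$ and $|J|=c\log T$. The groups before $(T,c\log T)$ in diagonal order include $(n',m')=(\lfloor 2T/3\rfloor,\lfloor T/3\rfloor)$, since $n'+m'\le T$. That group alone has $\binom{n'-1}{m'-1}=2^{\Theta(T)}$ sets, so you only get $\log j_T=\Theta(T)$.

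Your proviso (list all sets of size at most $m$ within $[1,n]$ before larger ones) cannot fix this. It cannot hold for all $(n,m)$ at once, because these families are not nested. Also, the relevant $m=(d+1)\log T+1$ depends on the sequence through $d$, while the enumeration of experts must be fixed in advance.

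What you need is an enumeration giving every finite $J$ index $\exp(O(|J|\log\max J))$. For instance, put $\emptyset$ first and order nonempty finite sets by the key $(\max J)^{|J|}$, breaking ties arbitrarily. A set with key at most $K\ge 2$ has $(\max J,|J|)=(n,m)$ with $m\le n$ and $n^m\le K$. There are at most $1+K\log_2 K$ such pairs, and each contributes at most $\binom{n-1}{m-1}\le K$ sets, so at most $K^3$ sets in total. Hence any $J\subseteq[1,T]$ with $|J|\le (d+1)\log T+1$ has index at most $1+T^{3((d+1)\log T+1)}$. This gives $\log j_T=O((\log T)^2)$, with the constant depending only on $d$.

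The paper's own proof just asserts $j_T=T^{O(\log T)}$ without fixing an enumeration, so the key genuinely matters. For example, the $|J|\max J$ ordering the paper uses later for the $o(T)$ case would only give $\log j_T=O(\sqrt{T\log T})$ here.
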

\begin{proof}
    Firstly, because $\cY$ is countable and $k$ is finite, the number of different $I$'s is countable. 
    Referring to Lemma~\ref{lem:winning_1}, we know that for every realizable sequence $(\DataX,\DataY)\in\tR(\cH)$, the game only updates finitely many times. 
    Thus, there is an $I = (X_{\leq k},Y_{\leq k})$, such that after $k$, the game does not update anymore, and we have all experts $e_{i,*}$ make no mistake during the game updating period.

    Then we look at the second stage. 
    For every large enough $T$, due to Lemma~\ref{lem:partial_concept}, there is an algorithm that only makes $O(\log T)$ number of mistakes. 
    Thus, we have $j_T = T^{O(\log T)}$, which is the number of all possible subsets whose size is $O(\log T)$. 
    So, there is a $j < j_T$ such that the input $J$ contains all the indices of the mistakes made by Algorithm\ref{alg:subroutine-1} when learning $(\DataX,\DataY)$ after the game updating. 
    Therefore, for every $t\leq T$, we have $Y_t = e_{i,j}(X_t)$ if $t\notin J$.
    That finishes the proof.
\end{proof}

Then we need to extend this result to the agnostic case. 
By the definition of regret, we have
\begin{align*}
    &\regret(\Alg,(\DataX,\DataY,\DataY^*),T) = \E\left[\sum_{t = 1}^T \left(\indic{Y_t \neq \hat{Y}_t} - \indic{Y_t \neq Y^*_t}\right)\right]\\
    &= \E\left[\sum_{t = 1}^T \left(\indic{Y_t \neq \hat{Y}_t} - \indic{Y_t \neq e_{i,j}(X_t)} + \indic{Y_t \neq e_{i,j}(X_t)} - \indic{Y_t \neq Y^*_t}\right)\right]\\
    &= \E\left[\sum_{t = 1}^T \left(\indic{Y_t \neq \hat{Y}_t} - \indic{Y_t \neq e_{i,j}(X_t)}\right)\right] + \E\left[\sum_{t = 1}^T \left(\indic{Y_t \neq e_{i,j}(X_t)} - \indic{Y_t \neq Y^*_t}\right)\right].
\end{align*}
Notice that the first part is bounded by inequality \ref{eq:re_ex}, we only need to bound the second part. 
For a sequence $(\DataX,\DataY,\DataY^*)$, we can take the subsequence $(\DataX',\DataY',\DataY'^*) = \{(X_t,Y_t,Y^*_t): Y_t = Y^*_t\}$. 
We can take $T'\leq T$, such that $\E[\sum_{t = 1}^T \left(\indic{Y_t \neq e_{i,j}(X_t)} - \indic{Y_t \neq Y^*_t}\right)] = \E[\sum_{t = 1}^{T'} \indic{Y'_t \neq e_{i,j}(X'_t)}]$. 
By Lemma~\ref{lem:expert}, we know there is an expert $e_{i,j}$ for $j<j_{T'}<j_T$ such that $\sum_{t = 1}^{T'} \indic{Y'_t \neq e_{i,j}(X'_t)} = O(\log T')$. 
Therefore, we have $\E[\sum_{t = 1}^T \left(\indic{Y_t \neq e_{i,j}(X_t)} - \indic{Y_t \neq Y^*_t}\right)] = O(\log T)$. 
Combining with the bound for the first part, we have 
\begin{align*}
    \regret(\Alg,(\DataX,\DataY,\DataY^*),T) &= O\left(\sqrt{T\log \log T + T (\log i+ \log j) +(\log i+\log j)}\right) + O(\log T)\\
    &= O\left(\sqrt{T\log \log T + T \log j_T + \log j_T}\right)\\
    &= O(\sqrt{T}\log T).
\end{align*}
Then we show the following lower bound.
\begin{lemma}
\label{lem:agnostic_lower}
    For every concept class $\cH$ containing two concepts $h_1,h_2$ and we have $x$, $h_1(x) \neq h_2(x)$, for every learning algorithm $\Alg$, there is a sequence $(\DataX,\DataY,\DataY^*)$ such that $\regret(\Alg,(\DataX,\DataY,\DataY^*),T) \neq o(\sqrt{T})$.
\end{lemma}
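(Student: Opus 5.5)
The plan is a randomized adversary on a constant instance sequence, followed by a derandomization step that produces a single sequence.

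\textbf{Setup.} Fix $x$ with $h_1(x)\neq h_2(x)$ and take $X_t=x$ for all $t$. Then transductive knowledge of $\DataX$ is useless, and the comparator sequences $\DataY^*=\{h_1(x)\}$ and $\DataY^*=\{h_2(x)\}$ are both realizable. Draw $Y_t$ i.i.d.\ uniformly from $\{h_1(x),h_2(x)\}$, independently of the learner's internal randomness.

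\textbf{Regret on one block.} Whatever $\hat Y_t$ is, it matches $Y_t$ with probability at most $1/2$ given the past. Hence $\E\bigl[\sum_{t\in B}\indic{\hat Y_t\neq Y_t}\bigr]\geq |B|/2$ on any block $B$ of rounds. Let $D_B=\sum_{t\in B}(\indic{Y_t=h_1(x)}-\indic{Y_t=h_2(x)})$. The best comparator on the block suffers $|B|/2-|D_B|/2$ mistakes. By Khinchine's inequality, $\E|D_B|\geq c\sqrt{|B|}$. Choosing the better of $h_1,h_2$ per realization therefore gives expected block regret $\Omega(\sqrt{|B|})$.

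\textbf{From blocks to a single sequence.} The statement needs one infinite sequence, with one fixed $\DataY^*$ for all $T$. Use blocks of geometrically increasing length $|B_m|=2^{m}$, and let $T_m$ be the end of block $m$. Decompose the regret up to $T_m$ against $h_b$ into two parts:
\begin{itemize}
\item the learner's excess, $\sum_{t\le T_m}(\indic{\hat Y_t\ne Y_t}-1/2)$, which is a martingale with bounded increments;
\item the comparator's advantage, $\tfrac12 |D_{[1,T_m]}|$, taken over the better of $h_1,h_2$.
\end{itemize}
The choice between $h_1$ and $h_2$ may depend on $T$, so I fix $\DataY^*$ to be whichever of $h_1,h_2$ wins for infinitely many $m$. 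By the law of the iterated logarithm, or by anti-concentration, $|D_{[1,T_m]}|\geq c\sqrt{T_m}$ with probability bounded below for each $m$. Reverse Fatou then gives that this holds for infinitely many $m$ with positive probability. By symmetry one sign, say $D>0$ (favouring $h_1$), occurs infinitely often with positive probability. Azuma's inequality shows the learner's excess is not below $-c'\sqrt{T_m}$ except with small probability. A more careful route takes the expectation over the algorithm's randomness of the conditional success probability, using that $\P(\hat Y_t=Y_t\mid\text{past})=1/2$ exactly, so the expected excess is $0$.

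\textbf{Main obstacle.} The delicate point is that regret is defined as an expectation over the learner's randomness for a fixed $\DataY$. I would therefore work with $\E[\indic{\hat Y_t\ne Y_t}\mid Y_{<t}]$, which equals $1/2$ for fixed $\DataY$ drawn by the randomized adversary? It does not equal exactly $1/2$ for a fixed sequence. Instead, I would argue that $\E_{\DataY}\bigl[\limsup_T \regret(\Alg,(\DataX,\DataY,h_1),T)/\sqrt{T}\bigr]>0$, via reverse Fatou applied to the blocks as in the proof of Lemma~\ref{lem:all_necessity}. Concretely, the conditional expected regret up to $T_m$ on the event $\{D_{[1,T_m]}\geq c\sqrt{T_m}\}$ is at least $\tfrac c2\sqrt{T_m}$, and this event has probability bounded away from $0$ uniformly in $m$. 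Hence some fixed realization $\DataY$ satisfies $\regret\ge c''\sqrt{T_m}$ for infinitely many $m$, so the regret is not $o(\sqrt{T})$.
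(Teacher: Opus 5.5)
There is a genuine gap at the point where you recombine the two parts of your decomposition. Write the regret against $h_1$ (resp.\ $h_2$) up to $T$ as $A_T+\tfrac12 D_{[1,T]}$ (resp.\ $A_T-\tfrac12 D_{[1,T]}$). Here $A_T=\E[\sum_{t\le T}\indic{\hat Y_t\neq Y_t}\mid\DataY]-T/2$ is the learner's excess. You only know $\E A_T\ge 0$ unconditionally. But $A_T$ is driven by the same labels as $D_{[1,T]}$ and can be perfectly anticorrelated with it. For the learner that always predicts $h_1(x)$, one has $A_T=-\tfrac12 D_{[1,T]}$ identically, so its regret against $\DataY^*=h_1$ is $0$ on every label sequence. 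Hence your concrete claim is false: the conditional expected regret against $h_1$ on $\{D_{[1,T_m]}\ge c\sqrt{T_m}\}$ need not be at least $\tfrac c2\sqrt{T_m}$. So is the target $\E_{\DataY}[\limsup_T\regret(\Alg,(\DataX,\DataY,h_1),T)/\sqrt T]>0$. The \emph{by symmetry, say $D>0$} reduction fails because the learner need not treat $h_1$ and $h_2$ symmetrically. Keeping $\tfrac12|D_{[1,T]}|$ but controlling the two terms separately does not rescue it either. You would need $A_T>-c'\sqrt T$ and $|D_{[1,T]}|\ge 2c\sqrt T$ simultaneously, with $c'<c$. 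Azuma only bounds the failure probability of the first event by $e^{-2c'^2}$, which exceeds the limiting probability $\P(|N(0,1)|\ge 2c)\le e^{-2c^2}$ of the second, so the union bound gives nothing.

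The missing idea is to treat the regret against the better comparator as a single random variable. Your first instinct is the right one: choose $\DataY^*$ per realization as whichever of $h_1,h_2$ is better on infinitely many good prefixes. What must then be shown is that $F_T=A_T+\tfrac12|D_{[1,T]}|$ is $\Omega(\sqrt T)$ with probability bounded below. Its mean is at least $\tfrac12\E|D_{[1,T]}|\ge\tfrac12\sqrt{T/2}$ by Khinchine. For the lower tail, let $\epsilon_t=\pm1$ according to whether $Y_t=h_1(x)$ or $h_2(x)$. Let $q_t$ be half the difference between the learner's probabilities, given $Y_{<t}$, of predicting $h_1(x)$ and $h_2(x)$. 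Then $|q_t|\le\tfrac12$ and $A_T\ge-\sum_{t\le T}\epsilon_tq_t$; predicting a third label only helps you. The Doob martingale of $-\sum_{t\le T}\epsilon_tq_t+\tfrac12|D_{[1,T]}|$ has increments bounded by $1$. Azuma therefore gives $F_T\ge\tfrac14\sqrt{T/2}$ with probability at least $1-e^{-1/64}$. This joint treatment is what the paper does on blocks. Its $R_i$ already contains the minimum over $h\in\{h_1,h_2\}$, $\E R_i$ comes from Khinchine, and the constant-probability bound comes from Azuma on $Z_{i,j}=\E[R_i\mid Y_{\le T_{i-1}+j}]$. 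It then uses reverse Fatou and a pigeonhole to fix $\DataY^*$. Once repaired, your prefix version is slightly cleaner than the paper's block version. There is no $-T_{i-1}$ term to absorb, so geometric $T_m$ suffice instead of block lengths $2^{4^i}$.
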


To prove this lemma, we need the following lemma.
    

\begin{lemma}[Khinchine's Inequality; Lemma 8.2 in \citep{Cesa-Bianchi_Lugosi_2006}]
\label{lem:khinchine}
Let $T\in \nats$, Let $\sigma_1,\ldots,\sigma_T$ be random variables sampled independently and uniform randomly from $\{\pm1\}$. We have the following inequality:
\begin{equation*}
    \E\left[\left|\sum_{t=1}^T \sigma_t\right|\right] \geq \sqrt{\frac{T}{2}}.
\end{equation*}
    
\end{lemma}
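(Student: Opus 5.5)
The plan is to compute $\E|S_T|$ for $S_T=\sum_{t=1}^T\sigma_t$ essentially exactly, via a one-step recursion, and then lower bound the resulting central binomial quantity. A fourth-moment/Hölder argument, $\E[S_T^2]\le(\E|S_T|)^{2/3}(\E[S_T^4])^{1/3}$ with $\E[S_T^4]\le 3T^2$, would only give $\sqrt{T/3}$. So it does not reach the stated constant $\sqrt{T/2}$, and I would use the exact route instead.

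\textbf{Step 1 (recursion).} Conditioning on $S_n$: if $S_n\neq 0$, then $|S_n+\sigma_{n+1}|=|S_n|\pm 1$ with equal probability, so its conditional mean is $|S_n|$. If $S_n=0$, then $|S_{n+1}|=1$. Hence
\begin{equation*}
\E|S_{n+1}|=\E|S_n|+\P(S_n=0),
\qquad\text{so}\qquad
\E|S_T|=\sum_{j=0}^{T-1}\P(S_j=0).
\end{equation*}
Since $\P(S_j=0)=0$ for odd $j$, this gives $\E|S_{2m+1}|=\E|S_{2m+2}|$ and $\E|S_{2m}|=\sum_{k=0}^{m-1}p_k$, where $p_k=\binom{2k}{k}4^{-k}$.

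\textbf{Step 2 (closed form).} I would prove $\sum_{k=0}^{m-1}p_k=2m\,p_m$ by induction on $m$. The base case is $m=1$: $p_0=1=2p_1$. For the inductive step, use $p_{m+1}=p_m\frac{2m+1}{2m+2}$, which gives
\begin{equation*}
2(m+1)p_{m+1}=(2m+1)p_m=2m\,p_m+p_m .
\end{equation*}
Therefore $\E|S_{2m}|=2m\,p_m$.

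\textbf{Step 3 (central binomial bound).} I would show $p_m^2\ge \frac{1}{4m}$ by induction. The base case is $p_1^2=\frac14$. For the inductive step,
\begin{equation*}
p_{m+1}^2=p_m^2\frac{(2m+1)^2}{4(m+1)^2}\ge\frac{(2m+1)^2}{16m(m+1)^2}\ge\frac{1}{4(m+1)},
\end{equation*}
where the last inequality is equivalent to $(2m+1)^2\ge 4m(m+1)$, which holds trivially. Thus $\E|S_{2m}|=2m\,p_m\ge 2m\cdot\frac{1}{2\sqrt m}=\sqrt m=\sqrt{2m/2}$. This settles even $T$.

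For odd $T=2m+1$, Step 1 gives $\E|S_T|=\E|S_{T+1}|\ge\sqrt{(T+1)/2}\ge\sqrt{T/2}$. The only delicate point is getting the sharp constant. That is handled by the exact identity in Step 2 combined with the elementary product bound in Step 3, in place of moment inequalities. The remaining steps are routine.
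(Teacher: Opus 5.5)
Your proof is correct. The recursion $\E|S_{n+1}|=\E|S_n|+\P(S_n=0)$ holds because $S_n$ is integer-valued, so $|S_n|\ge 1$ whenever $S_n\neq 0$, and because $\sigma_{n+1}$ is independent of $S_n$. The identity $\sum_{k=0}^{m-1}p_k=2m\,p_m$ and the bound $p_m\ge 1/(2\sqrt m)$ both check out, and the odd case follows from $\E|S_{2m+1}|=\E|S_{2m+2}|$.

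The paper gives no argument for this lemma. It cites Cesa-Bianchi and Lugosi and uses the result once, in the proof of Lemma~\ref{lem:agnostic_lower}, for a block of $2^{4^i}$ unweighted signs. Your route is therefore a genuinely different, self-contained elementary derivation specialized to equal coefficients, which is exactly the case the paper needs. It also gives more than the inequality: the exact value $\E|S_{2m}|=2m\binom{2m}{m}4^{-m}$ shows the constant $1/\sqrt2$ is attained at $T=2$, and that $\E|S_T|\sim\sqrt{2T/\pi}$ asymptotically.

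What the citation buys instead is generality. The $L^1$ Khinchine inequality $\E|\sum_t a_t\sigma_t|\ge \frac{1}{\sqrt2}(\sum_t a_t^2)^{1/2}$ holds for arbitrary real coefficients. There the sharp constant is a genuinely harder result (Szarek), and no exact central-binomial computation is available.

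You are also right that the H\"older/fourth-moment interpolation only yields $\sqrt{T/3}$. Some exact or sharper argument is needed to reach the stated constant.
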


\begin{proof}[Proof of Lemma~\ref{lem:agnostic_lower}]
    Consider the instance sequence $\DataX = \{X_t\}_{t\in\nats}$, such that for all $t$, $X_t = x$. In the true label sequence $\DataY = \{Y_t\}_{t\in \nats}$, for every $t$, $Y_t$ takes $h_1(x)$ or $h_2(x)$ independently uniformly randomly. As for $\DataY^* = \{Y^*_t\}_{t\in \nats}$, $Y^*_t = h_1(x)$ for every $t$, or $Y^*_t = h_2(x)$ for every $t$. 

    Then consider the following random variable: 
    
    \begin{equation*}
        R_i = \E\left[\left.\sum_{t = T_{i-1}+1}^{T_i} \indic{Y_t \neq \hat{Y}_t} - \min_{h\in\{h_1,h_2\}}\left(\sum_{t = T_{i-1}+1}^{T_i} \indic{Y_t \neq h(X_t)}\right)\right|\DataY\right],
    \end{equation*}
    where $T_i = \sum_{j=1}^i 2^{4^j}$. 
    And we have $T_i - T_{i-1} = 2^{4^i}$. 
    Then, we consider the expectation of $R_i$, 
    \begin{align*}
        \E\left[R_i\right] &= \E\left[\E\left[\left.\sum_{t = T_{i-1}+1}^{T_i} \indic{Y_t \neq \hat{Y}_t} - \min_{h\in\{h_1,h_2\}}\left(\sum_{t = T_{i-1}+1}^{T_i} \indic{Y_t \neq h(X_t)}\right)\right|\DataY\right]\right]\\
        &= \E\left[\E\left[\left.\sum_{t = T_{i-1}+1}^{T_i} \indic{Y_t \neq \hat{Y}_t}\right|\DataY\right]\right] - \E\left[\min_{h\in\{h_1,h_2\}}\left(\sum_{t = T_{i-1}+1}^{T_i} \indic{Y_t \neq h(X_t)}\right)\right]\\
        &= \frac{2^{4^i}}{2} - \E\left[\min\{r_i,2^{4^i} - r_i\}\right], 
    \end{align*}
    where $r_i = \sum_{t = T_{i-1}+1}^{T_i} \indicin{Y_t \neq h_2(X_t)}$. 
    This comes from the linearity of expectation and the fact that the algorithm is independent of the true label sequence. 
    Therefore, 
    \begin{equation*}
        \E\left[R_i\right] = \E\left[\left|\frac{2^{4^i}}{2} - r_i\right| \right] = \E\left[\left|\sum_{\ell=1}^{2^{4^i}} \frac{1}{2} - \left(\frac{1}{2} + \frac{\sigma_i^{\ell}}{2}\right)\right|\right] = \frac{1}{2} \E\left[\left|\sum_{\ell=1}^{2^{4^i}}  \sigma_i^{\ell}\right|\right], 
    \end{equation*}
    where $\sigma_i^{\ell} = 1$ if $\indicin{Y_t \neq h_2(X_t)}$ for $t = T_{i-1}+\ell$, and $\sigma_i^{\ell} = -1$ if $\indicin{Y_t \neq h_1(X_t)}$ for $t = T_{i-1}+\ell$. 
    Due to Lemma~\ref{lem:khinchine}, we have 
    \begin{equation*}
        \E\left[\left|\sum_{\ell=1}^{2^{4^i}}  \sigma_i^{\ell}\right|\right] \geq \sqrt{\frac{2^{4^i}}{2}}. 
    \end{equation*}
    Thus, $\E[R_i] \geq \sqrt{\frac{2^{4^i}}{8}}$. 
    Then consider the following sequence: 
    $Z_{i,j} = \E[R_i|Y_{\leq T_{i-1}+j}]$. 
    Notice that for every $i$, for every $j \leq T_{i}-T_{i-1}$, we have:
    \begin{equation*}
        \E[|Z_{i,j}|]\leq \E[|\E[R_i|Y_{\leq T_{i-1}+j}]|] \leq \E[\E[|R_i||Y_{\leq T_{i-1}+j}]] = \E[|R_i|] \leq T_{i} - T_{i-1} = 2^{4^i} < \infty.
    \end{equation*}
    and 
    \begin{align*}
        &\E[Z_{i,j+1}|Y_{T_{i-1}+1},\ldots,Y_{T_{i-1}+j}] = \E[\E[R_i|Y_{\leq T_{i-1}+j+1}]|Y_{T_{i-1}+1},\ldots,Y_{T_{i-1}+j}]\\
        =&\E[R_i|Y_{T_{i-1}+1},\ldots,Y_{T_{i-1}+j}] = \E[R_i|Y_{\leq T_{i-1}+j}] = Z_{i,j}.
    \end{align*}
    Therefore, for every $i$, the sequence $Z_{i,j}$ is a martingale indexed by $j$. 
    By the definition of regret, we have $-1\leq Z_{i,j+1} - Z_{i,j}\leq 1$. 
    Then we have 
    \begin{align*}
        \P\left[R_i \leq \sqrt{\frac{2^{4^i}}{32}}\right] &= \P\left[R_i \leq \sqrt{\frac{2^{4^i}}{8}} - \sqrt{\frac{2^{4^i}}{32}} \right]\\
        &\leq \P\left[R_i \leq \E[R_i] - \sqrt{\frac{2^{4^i}}{32}}\right]\\
        &\leq e^{-\frac{1}{16}}
    \end{align*}
    The last inequality comes from Azuma's Inequality. 
    Thus, we have 
    \begin{equation*}
        \P\left[R_i \geq \sqrt{\frac{2^{4^i}}{32}}\right] \geq 1-e^{-\frac{1}{16}},
    \end{equation*}
    for every $i$. 
    Thus, for every $i$, 
    if we choose $h^*_i = \arg\min_{h\in\{h_1,h_2\}}\left(\sum_{t = T_{i-1}+1}^{T_i} \indic{Y_t \neq h(X_t)}\right)$ 
    to generate $Y^{*,i}$, 
    i.e., for every $t$, $Y^{*,i}_t = h^*_i(x)$, 
    we will have \begin{equation*}
        \regret(\Alg,(\DataX,\DataY,\DataY^{*,i}),T_i) \geq \sqrt{\frac{2^{4^i}}{32}} - T_{i-1} \geq 2^{4^{i-1}}\left(2^{4^{i-1}-\frac{5}{2}} - 2\right) = \Omega(\sqrt{T_i}), 
    \end{equation*}
    with probability at least $1-e^{-\frac{1}{16}}$. 
    Then because $h^*_i = h_1$ or $h_2$, due to pigeonhole principle, there is an $h^*$, such that there are infinitely many $i$, $h^*_i = h^*$. 
    Let $Y^*_t = h^*(x)$ for every $t$. 
    Let event $E_i$ be that $\regret(\Alg,(\DataX,\DataY,\DataY^*),T_i) = \Omega(\sqrt{T_i})$. 
    By the definition of the limit of superior of events, the fact that the indicator function is either $0$ or $1$ and the reversed Fatou's lemma, we have 
    \begin{equation*}
        \P\left[\limsup_{i\to\infty} E_i \right] = \E\left[\limsup_{i\to\infty} \indic{E_i \text{ happens.}}\right] \geq \limsup_{i\to\infty} \E\left[\indic{E_i \text{ happens.}}\right] \geq 1-e^{-\frac{1}{16}}.
    \end{equation*}
    Thus, 
    \begin{equation*}
        \limsup_{i\to\infty} \frac{1}{\sqrt{T_i}}\regret(\Alg,(\DataX,\DataY,\DataY^*),T_i) = c
    \end{equation*}
    for some constant $c > 0$ with probability greater than $0$. 
    Therefore, 
    \begin{equation*}
        \limsup_{T\to\infty} \frac{1}{\sqrt{T}}\regret(\Alg,(\DataX,\DataY,\DataY^*),T) \geq \limsup_{i\to\infty} \frac{1}{\sqrt{T_i}}\regret(\Alg,(\DataX,\DataY,\DataY^*),T_i) = c
    \end{equation*}
    with probability greater than $0$. 
    Thus, for any learning algorithm $\Alg$, there is a deterministic sequence $(\DataX,\DataY,\DataY^*)$ such that $\regret(\Alg,(\DataX,\DataY,\DataY^*),T) \neq o(\sqrt{T})$. 
    That finishes the proof.
\end{proof}
\section{Learnability when the Stochastic Process is Known}
In this section, we discuss the universal online learnability when the stochastic process generating the instance sequence is known to the learner instead of the instance sequence.
This setting is equivalent to the condition when all processes admit universal online learning as asked in the work of \citet{hanneke2024a}. 
As we introduced stochastic processes in this model, several definitions are slightly different from the original transductive online learning with deterministic sequences. We define the changed definition for the stochastic process case again here. 

Let $(\cX,\cB)$ be a measurable space, where $\cX$ is a non-empty set and $\cB$ is a Borel $\sigma$-algebra generated by the separable metrizable topology $\cT$. 
This is called \emph{instance space}. 
And $\cY$ is also a non-empty measurable space called \emph{label space}. 
Here we also focus on the learning on $0$-$1$ loss.
A stochastic process $\DataX = \{X_t\}_{t\in \nats}$ is a sequence of $\cX$-valued random variables, which is called instance process and a stochastic process $\DataY = \{Y_t\}_{t\in \nats}$ is a sequence of $\cY$-valued random variables, which is called label process.
The concept class $\cH \subseteq \cY^{\cX}$ is a non-empty set of measurable functions from $\cX$ to $\cY$. In order to avoid the complicated discussion of the measurability issue, we assume the instance space $\cX$ and the label space $\cY$ are both countable (countably infinite).

We also need to redefine the online learning algorithm for the stochastic process setting, which is as follows. 
The online learning rule is a sequence of measurable functions: $f_t: \cX^{t-1}\times\cY^{t-1}\times\cX \rightarrow \cY$, where $t$ is a non-negative integer. For convenience, we also define $\hat{h}_{t-1} = f_{t}(X_{<t},Y_{<t})$, here $(X_{<t},Y_{<t}) = \{(X_i,Y_i)\}_{i<t}$ is the history before round $t$.

Then we need to define the realizable setting, which follows the tradition of universal learning.
\begin{definition}
\label{def:realizable-s}
    For every concept class $\cH$, we can define the following set of processes $\tR(\cH)$:
    \begin{equation*}
        \tR(\cH) := \left\{ (\DataX,\DataY) = \left\{(X_i,Y_i)\right\}_{i\in \nats}: \text{with probability }1, \forall n < \infty, \left\{(X_i,Y_i)\right\}_{i\leq n} \text{ realizable by }\cH\right\}.
    \end{equation*}
\end{definition}
In the same way, the set of realizable label processes:
\begin{definition}
    For every concept class $\cH$ and data process $\DataX$, define a set $\tR(\cH,\DataX)$ of label processes:
        \begin{equation*}
        \tR(\cH,\DataX) := \left\{ \DataY = \left\{Y_i\right\}_{i\in \nats}: (\DataX,\DataY) \in \tR(\cH) \text{ and } \exists \text{ a non-random function }f \text{ s.t.\ } Y_i = f(X_i) \right\}.
        \end{equation*}
\end{definition}
In other words, $\tR(\cH,\DataX)$ are label processes $\DataY = f(\DataX)$ s.t.\ $(\DataX,f(\DataX)) \in \tR(\cH)$.
Importantly, while every $f \in \cH$ satisfies $f(\DataX) \in \tR(\H,\DataX)$, there can exist $f \notin \cH$ for which this is also true, due to $\tR(\cH)$ only requiring realizable \emph{prefixes} (thus, in a sense, $\tR(\cH,\DataX)$ represents label sequences by functions in a \emph{closure} of $\cH$ defined by $\DataX$).\footnote{For instance, for $\cX=\nats$, for the process $X_i=i$, and for $\cH = \{ \ind_{\{i\}} : i \in \cX \}$ (singletons), the all-$0$ sequence is in $\tR(\cH,\DataX)$ though the all-$0$ function is not in $\cH$.}

Then we define the universal consistency under $\DataX$ and $\cH$ in the realizable case. An online learning rule is universally consistent under $\DataX$ and $\cH$ if its long-run average loss approaches $0$ almost surely 
for all realizable label processes. Formally, 
\begin{definition}
An online learning rule is \emph{universally consistent} under $\DataX$ and $\cH$ for the realizable case, if for \emph{every} $\DataY \in \tR(\cH,\DataX)$, $\limsup_{T\rightarrow \infty}\frac{1}{T} \sum_{t = 1}^T \indic{Y_t \neq \hat{h}_{t-1}(X_t)} = 0$ a.s.
\end{definition}
We also define the universal consistency under $\DataX$ and $\cH$ for the agnostic case. Here, we release the restrictions that $\DataY \in \tR(\cH,\DataX)$; instead, the label process $\DataY$ can be set in any possible way, even dependent on the history of the algorithm's predictions. Thus, the average loss may be linear and inappropriate for defining consistency. Therefore, we compare the performance of our algorithm with the performance of the best possible $\DataY^* \in \tR(\cH,\DataX)$, which is usually referred to as \emph{regret}. We say an online algorithm is universally consistent under $\DataX$ and $\cH$ for the agnostic case if its long-run average regret is low for every label process. Formally, 
\begin{definition}
    An online learning rule is \emph{universally consistent} under $\DataX$ and $\cH$ for the agnostic case, if for \emph{every} $\DataY^* \in \tR(\cH,\DataX)$ and for \emph{every} $\DataY$, $\limsup_{n\rightarrow \infty}\frac{1}{n} \sum_{t = 1}^n \left(\indic{Y_t \neq \hat{h}_{t-1}(X_t)} - \indic{Y_t \neq Y^*_t}\right) \leq 0$ a.s. 
\end{definition}

Then we provide the following definition to describe our main theorem in this section.
\begin{definition}
    We say a process $\DataX$ admits \emph{universal online learning} if there exists an online learning rule that is universally consistent under $\DataX$ and $\cH$.
\end{definition}
Then comes the main theorem of this section.
\begin{theorem}
    \emph{All} processes admit universal online learning, if and only if $\cH$ has no infinite indifferent LCLL tree.
\end{theorem}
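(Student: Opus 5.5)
The plan has two directions, each reduced to the deterministic results already proved. The bridge is that the learner knows the law of $\DataX$ but not the realization in advance. Since $\cX$ is countable, every realization of $\DataX$ is a countable sequence.

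\textbf{Sufficiency.} Suppose $\cH$ has no infinite indifferent LCLL tree. I would first handle the realizable case. Fix a process $\DataX$ and $\DataY = f(\DataX) \in \tR(\cH,\DataX)$. With probability one, each realization $\mathbf{x}$ gives a deterministic sequence $(\mathbf{x}, f(\mathbf{x}))$ in $\tR(\cH)$ in the sense of \Cref{def:realizable}. A transductive learner would run Algorithm~\ref{alg:model-select-1} on $\mathbf{x}$ and, by \Cref{lem:winning_1} and \Cref{lem:partial_concept}, make $O(\log T)$ mistakes, hence sublinearly many. The difficulty is that the online rule $f_t$ only sees $X_{\le t}$, whereas Algorithm~\ref{alg:model-select-1} uses the whole future sequence in two places: to detect when the game advances (which only needs the past, so it is fine) and in the weight $w(\cH', X_{>t})$. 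The plan is to replace the future by knowledge of the law. At round $t$, I would predict $\arg\max_y \E[w(\cH^{g_U}_{L\cup\{(X_t,y)\}}, X_{>t}) \mid X_{\le t}]$, using the known conditional distribution of the future. The halving inequality from the proof of \Cref{lem:partial_concept} is pointwise in the future sequence, so it survives conditional expectation. The initial bound $\pi^2/6$ also holds in expectation. The lower bound $T^{-(d+1)}$ at the last mistake holds pointwise. Combining these gives $O(\log T)$ mistakes in expectation after the game stabilizes. By Markov's inequality along a geometric sequence of times plus Borel--Cantelli, I expect this to upgrade to $o(T)$ almost surely. The game itself stabilizes almost surely, because \Cref{lem:winning_1} applies realization by realization.

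\textbf{Agnostic case.} I would lift the realizable rule exactly as in \Cref{sec:agnostic}. The experts $e_{i,j}$ (Algorithm~\ref{alg:expert}) are countable because $\cX$ and $\cY$ are countable, and their predictions are now computed online from $X_{\le t}$ and the law. Running Squint with prior $\pi_{i,j}$ gives regret $O(\sqrt{T}\log T)$ per realization by the argument following \Cref{lem:expert}. A high-probability version of the expert bound, together with Borel--Cantelli, would then give $\limsup \frac1n(\cdot)\le 0$ almost surely.

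\textbf{Necessity.} Suppose $\cH$ has an infinite indifferent LCLL tree. I would take $\DataX$ to be the deterministic process from the proof of \Cref{lem:all_necessity}: the BFS listing of the node sequences of the modified tree, with exponentially growing node lengths. For a deterministic process, knowing the process is the same as knowing the sequence, so any online rule is a transductive learner. The random walk on the tree, together with indifference, gives a random label process. Since each walk is a deterministic function of the path, I would fix the walk and view labels as $f(\DataX)$; note $\DataX$ has no repeated instances after making tree nodes distinct, so such an $f$ exists. The reversed-Fatou computation in that proof shows $\E[\limsup \frac1n \sum_t \mathbb{I}[\cdot]] \ge 1/4$. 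Hence, for every rule, some fixed walk yields $\DataY\in\tR(\cH,\DataX)$ on which the rule is not consistent. So $\DataX$ does not admit universal online learning, even in the realizable sense, and therefore not in the agnostic sense either.

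\textbf{Main obstacle.} I expect the hardest step to be the replacement of the transductive weight by its conditional expectation. Two things need care. First, the argmax over possibly infinitely many $y$ must be attained; I would handle this because only countably many $y$ have positive weight and the weights are summable. Second, the mistake bound must be converted from expectation to an almost-sure statement. If that conversion fails, I would fall back to the block-doubling argument used in the proof of \Cref{lem:agnostic_lower}.
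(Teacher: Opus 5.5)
Your necessity direction matches the paper's. For a deterministic $\DataX$, an online rule is just a transductive learner, and Lemma~\ref{lem:all_necessity} supplies a fixed walk whose labels defeat it. You need not make nodes distinct: realizability of every prefix already forces $Y_s=Y_t$ whenever $X_s=X_t$, so the labels are automatically of the form $f(\DataX)$. Your agnostic paragraph is not needed here; the paper handles that case separately in Theorem~\ref{thm:all_sufficient_ag}.

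For sufficiency you take a genuinely different route. The paper keeps the game stage but swaps in Algorithm~\ref{alg:subroutine}. It uses undiscounted counts of shattered subsets in finite windows ending at triangular-number times, and predicts the label whose conditional probability of halving the count is smallest. Lemma~\ref{lem:prob_bound} shows at most one label has that probability below $1/2$. This gives a per-batch bound $4d\log k$ on expected mistakes, and Azuma plus Borel--Cantelli yield $O(T^{3/4}\sqrt{\log T})$ mistakes a.s. You instead keep the discounted infinite-horizon weight of Lemma~\ref{lem:partial_concept} and predict $\arg\max_y\E[w_y\mid X_{\le t}]$, where $w_y=w(\cH^{g_U}_{L\cup\{(X_t,y)\}},X_{>t})$.

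\textbf{The step that does not follow as written} is ``combining these gives $O(\log T)$ mistakes in expectation.'' Halving now holds only in conditional expectation, at random times, so the pathwise chain of the deterministic proof is gone. Adding the one-step inequalities only yields $\sum_t t^{-(d+1)}\Pr[\text{mistake at } t]=O(1)$, which is useless. The missing idea is a multiplicative supermartingale, set up as follows.

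\emph{Work phase by phase.} A phase starts at a game advance, which is a stopping time; ``after the game stabilizes'' is not.

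\emph{Fix the prediction rule.} Let $\Psi_t=\E[w(\cH^{g_U}_{L_{t-1}},X_{\ge t})\mid X_{\le t}]$. Predict the $y$ with $\E[w_y\mid X_{\le t}]>\Psi_t/2$ if one exists; it is unique by the pairwise inequality. Otherwise predict any label realized at $X_t$. Your summability claim is false in general, since one set can be shattered by $\cH^{g_U}_{L\cup\{(X_t,y)\}}$ for infinitely many $y$, but attainment of the argmax is not needed. With this rule every mistake has $\{X_t\}$ shattered, so $\Psi_t\ge t^{-(d+1)}$ at mistakes.

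\emph{The supermartingale.} Let $M_{t-1}$ be the phase's mistakes before $t$, and set $Q_t=2^{M_{t-1}}\Psi_t$, stopped at the next advance. Monotonicity of $w$, your conditional halving, and $X_{\le t}$-measurability of $L_t$, $M_t$ and $Y_t=f(X_t)$ make $Q_t$ a nonnegative supermartingale. It starts below $\pi^2/6$.

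\emph{Conclusion.} Using the last mistake before $T$, $M_T\le 1+(d+1)\log_2T+\log_2\sup_tQ_t$. Ville's maximal inequality gives $\sup_tQ_t<\infty$ a.s. A.s.\ there are finitely many phases (Lemma~\ref{lem:winning_1}), so you get $O(\log T)$ mistakes a.s.\ with a random constant. No Markov/Borel--Cantelli step is needed.

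\emph{Knowing $d$.} The algorithm must know $d$, and the realized maximal length is random. Use the current game round $k$: by construction $\cH^{g_U}$ shatters no length-$k$ subsequence after $t'$.

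Completed this way, your route gives a stronger rate than the paper's. The paper's route, in exchange, needs only conditional probabilities of finite-window events, and its algorithm never uses $d$.
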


The necessity of this theorem is shown as Lemma~\ref{lem:all_necessity}. Then we only need to prove the sufficiency.
The proof of the sufficiency is very similar to the proof of Lemma~\ref{lem:all_sufficiency}. 
The algorithm also contains two stages: first, we update the LCLL game, and after a finite number of updates, we reach a condition where with probability $1$, the length of the longest subsequences that are shattered by the concept class $\cH'$ is at most $d$.
Second, we design a learning algorithm that can learn the concept class $\cH'$ with at most $o(T)$ number of mistakes. 
Algorithm~\ref{alg:model-select} is our algorithm.
The only change from Algorithm~\ref{alg:model-select-1} is the prediction rule. 
We modify the prediction rule such that it suits the stochastic process setting.

\begin{algorithm}
\caption{Learning algorithm from winning strategy}
\label{alg:model-select}
    \begin{algorithmic}
        \STATE $k = 1$, $U = \{\}$, $t'\gets 0$, $t_0 \gets 0$.
        \FOR{$t = 1,2,3,\dots$}
            \IF{$\exists t' \leq t_1<t_2<\cdots<t_k \leq t$ and $C_k = \{(y_{t_{k-1}^{k-1}+1,\varnothing}, \ldots, y_{t_{k}^1,\varnothing}^{u_1},y_{t_k^1+1,u_1},\ldots, y_{t_k^2,u_1}^{u_2}\ldots,y_{t_k^k,u_{<k}}^{u_k}): u\in\{0,1\}^k\}$ such that $g_U((t_1,\dots,t_k),C_k) = (Y_{t'},\dots,Y_t)$}
                \STATE Advance the game:
                \STATE $U\gets U\cup\{((t_1,\dots,t_k,C_k),g_U((t_1,\dots,t_k),C_k))\}$.
                \STATE $k\gets k+1$.
                \STATE $L\gets\emptyset$.
                \STATE $m\gets 1$.
                \STATE $t'\gets t$.
            \ENDIF
            \STATE Predict 
                    \begin{equation*}
                    \hat{Y_t} = \arg\min_y\Pr\left[\left. w(\cH^{g_U}_{L\cup{(X_t,y)}},X_{[t, t(m)+t']})\leq \frac{1}{2} w(\cH^{g_U}_L,X_{[t, t(m)+t']})\right|X_{\leq t}\right]
                    \end{equation*}
            \IF{$Y_t \neq \hat{Y}_t$}
                \STATE $L \gets L\cup \{(X_t,Y_t)\}$.
            \ENDIF
            \IF{$t\geq \frac{m(m+1)}{2}+t'$.}
                \STATE $m \gets m+1$.
            \ENDIF
        \ENDFOR
    \end{algorithmic}
\end{algorithm}

As the first stage of the algorithm is exactly the same, Lemma~\ref{lem:winning_1} still holds. 
We only need to show that we can learn the partial concept with $o(T)$ number of mistakes, which is Lemma~\ref{lem:partial_concept_1}.

\begin{algorithm}
\caption{Subroutine for learning under the condition that the length of the longest subsequence shattered by $\cH$ is at most $d$.}
\label{alg:subroutine}
\begin{algorithmic}
    \STATE $L\gets\emptyset$.
    \STATE $m\gets 1$.
    \FOR{$t = 1,2,3,\dots$}
        \STATE Predict 
                \begin{equation*}
                    \hat{Y_t} = \arg\min_y\Pr\left[\left. w(\cH_{L\cup{(X_t,y)}},X_{[t,t(m)]})\leq \frac{1}{2} w(\cH_L,X_{[t,t(m)]})\right|X_{\leq t}\right]
                \end{equation*}
        \IF{$Y_t \neq \hat{Y}_t$}
            \STATE $L \gets L\cup \{(X_t,Y_t)\}$.
        \ENDIF
        \IF{$t\geq \frac{m(m+1)}{2}$.}
            \STATE $m \gets m+1$.
        \ENDIF
    \ENDFOR
\end{algorithmic}
\end{algorithm}
\begin{lemma}
\label{lem:partial_concept_1}
    For any process $\DataX$, if the length of the longest subsequence shattered by $\cH'$ is at most $d$ with probability $1$, Algorithm~\ref{alg:subroutine} only makes $o(T)$ mistakes almost surely when $T\rightarrow\infty$.
\end{lemma}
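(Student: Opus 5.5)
Fix a realization of $\DataX$ in the probability-one event on which every subsequence shattered by $\cH'$ has length at most $d$. Partition time into blocks $B_m=\{\frac{(m-1)m}{2}+1,\ldots,\frac{m(m+1)}{2}\}$ of length $m$, matching the counter update in Algorithm~\ref{alg:subroutine}, so that $t(m)=\frac{m(m+1)}{2}$ is the end of the current block. Within each block I would repeat the halving argument from the proof of Lemma~\ref{lem:partial_concept}, but applied to the \emph{finite window} $X_{[t,t(m)]}$ instead of the infinite tail. This choice is what lets the learner, who only knows $X_{\le t}$ and the law of the process, work with a random quantity whose distribution it can compute. The target bound is that the number of mistakes in block $B_m$ is $O(d\log m)$ on a good event. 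Summing over blocks up to time $T$ (about $\sqrt{2T}$ blocks) then gives $O(\sqrt{T}\,d\log T)=o(T)$.

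\textbf{Step 1 (combinatorial halving, deterministic).} For any finite window and any $t$, I would reuse the inequality from the proof of Lemma~\ref{lem:partial_concept}. It gives
\[
w(\cH_L,X_{[t,t(m)]})\ \ge\ w(\cH_{L\cup\{(X_t,y^0)\}},X_{(t,t(m)]})+w(\cH_{L\cup\{(X_t,y^1)\}},X_{(t,t(m)]})
\]
for any two distinct labels $y^0\neq y^1$. Consequently, for each realization of the future window, at most one label $y$ can fail the halving condition $w(\cH_{L\cup\{(X_t,y)\}},\cdot)\le\frac12 w(\cH_L,\cdot)$. Within a window of length $m$ the weights are bounded above by about $m^{d}$: I would count shattered subsets of size at most $d$, using unnormalized or $1/n(S)^{d+1}$ weights relative to the block start. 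They are bounded below by a positive quantity, namely at least one singleton term, whenever a mistake is still possible. So at most $O(d\log m)$ halvings can occur in the block.

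\textbf{Step 2 (prediction under uncertainty).} The learner picks the $y$ minimizing the conditional probability that $y$ \emph{does} halve. By Step 1, for every realization of the future the events ``$y$ halves'' are pairwise compatible only in the sense that all but one label halve. Hence, for the minimizing $y=\hat Y_t$ and any $Y_t\neq\hat Y_t$, I want to show that the true label halves with conditional probability at least $1/2$. I would get this by arguing that $\Pr[\text{$\hat Y_t$ fails}]+\Pr[\text{$Y_t$ fails}]\le 1$, since two distinct labels cannot both fail on the same realization, and that $\hat Y_t$ fails with the largest probability among all labels. Thus each mistake halves the weight with conditional probability at least $1/2$.

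\textbf{Step 3 (concentration and summation).} Let $M_m$ be the number of mistakes in block $m$ and $H_m$ the number of mistakes at which halving occurs. Step 1 gives $H_m\le c\,d\log m$ deterministically, and Step 2 says $H_m$ dominates a sum of conditionally Bernoulli$(\ge 1/2)$ indicators over mistakes. By a martingale (Azuma/Freedman) bound, $\Pr[M_m> 4c\,d\log m+m^{2/3}]$ is summable in $m$. Borel–Cantelli then gives $M_m=O(m^{2/3})$ for all large $m$ almost surely, so cumulative mistakes up to $T$ are $O(T^{5/6})=o(T)$.

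The main obstacle is Step 2. The halving event for the chosen label and for the true label refer to the same random future window, but $Y_t$ may itself be a function of $X_t$ correlated with that future. Justifying the ``at least $1/2$'' bound conditionally on $X_{\le t}$ for an arbitrary realizable $Y_t$, and making $M_m$ a martingale-compatible count, is the delicate part. If it fails I would fall back on using Markov-type averaging over the window rather than per-step conditional probabilities.
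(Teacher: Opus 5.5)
The paper's proof uses the same blocks, windows and halving argument as your Steps~1 and~2; the problem is your Step~3, which does not go through as written.

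Steps~1 and~2 match the paper's ingredients: the two-label subadditivity of the weight, and Lemma~\ref{lem:prob_bound} together with the $\arg\min$ rule. The worry you raise in Step~2 is not an obstacle. For $\DataY\in\tR(\cH,\DataX)$ we have $Y_t=f(X_t)$ with $f$ non-random. So conditioning on $X_{\le t}$ (and the learner's past coins) fixes $L$, $\hat Y_t$ and $Y_t$, and your disjointness argument gives $\Pr[\text{$Y_t$ halves}\mid X_{\le t}]\ge\frac12$ at every mistake.

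The genuine gap is in Step~3. The halving event attached to a mistake at time $t$ depends on the entire remaining window $X_{(t,t(m)]}$, so it is resolved only at the end of the block. The halving indicators of all mistakes in one block are functions of the same future and can be strongly correlated. Consequently $\indicin{\text{halve at }t}-\Pr[\text{halve at }t\mid X_{\le t}]$ is not adapted to any filtration indexed by $t$, and Azuma/Freedman do not apply as you describe. The domination claim is also false in general: two mistakes whose halving events are complementary, each with conditional probability $\frac12$, give exactly one halving on every path. So $H_m$ need not dominate a $\mathrm{Bin}(M_m,\frac12)$ variable. Your fallback does not fix this either: a per-block Markov bound only gives $\Pr[M_m>m^{2/3}]\lesssim d\log m/m^{2/3}$, which is not summable, so Borel--Cantelli over blocks fails.

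The missing idea is to use Step~2 only in conditional expectation, and to concentrate across blocks rather than within them. The mistake indicator at $t$ is $X_{\le t}$-measurable, so the tower property turns Step~2 into $\E[M_m\mid X_{\le t(m-1)}]\le 2\,\E[H_m\mid X_{\le t(m-1)}]\le 2c\,d\log m$. The paper writes this as $\E[Z_k\mid\cdot]\le 2d\log k+\frac12\E[Z_k\mid\cdot]$.

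Then $V_m=M_m-\E[M_m\mid X_{\le t(m-1)}]$ is a martingale difference sequence for the block filtration, with $|V_m|\le m$. Azuma gives $\sum_{m\le K}M_m\le 4dK\log K+\sqrt{4K^3\log K}$ with probability at least $1-K^{-2}$. Borel--Cantelli plus interpolation between block ends then yields $O(T^{3/4}\sqrt{\log T})=o(T)$ almost surely, which is exactly the paper's argument.

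If you want to keep a per-block high-probability bound, one repair is the Doob martingale $G_t=\E[\#\{\text{halving mistakes at times }s\le t\}\mid\mathcal F_t]$. It stays in $[0,C]$ with $C=O(d\log m)$ the deterministic halving cap. Write $G_t-G_{t-1}=\Pr[\text{mistake and halving at }t\mid\mathcal F_t]+D_t$. The $D_t$ are genuine martingale differences bounded by $C+1$, and $M_m\le 2C-2\sum_t D_t$, so Azuma applies to $\sum_t D_t$. This is extra machinery that the across-block argument avoids.
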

To prove the property of the algorithm, we first need a lemma about the prediction rule.
\begin{lemma}
\label{lem:prob_bound}
    If $w(\cH',X_{\leq T}) = |\{S: S\subseteq \{X_i\}_{i\leq T} \text{ such that }S\text{ shattered by }\cH'\}|$, for every sequence $\DataX$, and every partial concept $\cH$, we have at most one $y$ such that
    \begin{equation*}
        \Pr\left[\left. w(\cH_{L\cup{(X_t,y)}},X_{[t,t(m)]})\leq \frac{1}{2} w(\cH_L,X_{[t,t(m)]})\right|X_{\leq t}\right] < \frac{1}{2}.
    \end{equation*}
\end{lemma}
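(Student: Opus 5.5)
The plan is to prove a deterministic pointwise inequality and then average it under the conditional law of the future instances. The pointwise fact needed is the one already used in the proof of Lemma~\ref{lem:partial_concept}: for any fixed realization of the future segment $X_{[t,t(m)]}$ and any two distinct labels $y\neq y'$,
\begin{equation*}
w(\cH_{L\cup\{(X_t,y)\}},X_{[t,t(m)]}) + w(\cH_{L\cup\{(X_t,y')\}},X_{[t,t(m)]}) \leq w(\cH_L,X_{[t,t(m)]}).
\end{equation*}
With the counting weight of the statement, this is the usual shattering injection. Any $S$ shattered by both restricted classes is shattered by $\cH_L$, and so is $S\cup\{X_t\}$, with $X_t$ serving as a new root carrying the two edges $y,y'$. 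Any $S$ shattered by only one of them is shattered by $\cH_L$ by monotonicity. So the left side counts each such $S$ at most twice, and the right side covers these counts via $S$ and $S\cup\{X_t\}$.

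One point needs care in this step. Because the segment starts at $t$, the set $S$ may already contain $X_t$. I will restrict the count to $S\subseteq X_{(t,t(m)]}$ for the restricted classes. Once $X_t$ is labeled, it cannot be a branching node of any set shattered by $\cH_{L\cup\{(X_t,y)\}}$, so those weights effectively count only subsets avoiding $X_t$. The map $S\mapsto S\cup\{X_t\}$ is then injective into sets not counted on the left, and the inequality follows.

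Consequence: write $W_y$ for the restricted weight and $W$ for $w(\cH_L,\cdot)$. If $W>0$, then $W_y>\tfrac12 W$ and $W_{y'}>\tfrac12 W$ cannot both hold, since their sum would exceed $W$. Hence on every realization,
\begin{equation*}
\indic{W_y\le \tfrac12 W}+\indic{W_{y'}\le\tfrac12 W}\ge 1.
\end{equation*}
If $W=0$, both indicators equal $1$, because the restricted weights are nonnegative and bounded by $W$ through monotonicity.

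Now I take conditional expectations given $X_{\le t}$. The label $y$, the set $L$ and $m$ are all $X_{\le t}$-measurable, or fixed, so linearity gives
\begin{equation*}
\Pr[W_y\le \tfrac12 W\mid X_{\le t}]+\Pr[W_{y'}\le \tfrac12 W\mid X_{\le t}]\ge 1.
\end{equation*}
So two distinct labels cannot both have conditional probability below $\tfrac12$, which is the claim.

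The main obstacle is the combinatorial injection: making the definition of shattering (Definition~\ref{def:shattering}) for partial concepts compatible with prepending $X_t$ as a root when $S$ is an arbitrary subsequence of later indices. I would first verify that the root-prepending construction respects the level-constrained structure, with each child subtree embedding $S$ via the trees witnessing shattering by the two restricted classes. A secondary point is measurability of the events, which is automatic here because $\cX$ and $\cY$ are assumed countable.
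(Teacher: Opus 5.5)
Your proposal is correct and follows the paper's proof. The paper proves the same pointwise inequality $w(\cH_{L\cup\{(X_t,y)\}},X_{[t,t(m)]})+w(\cH_{L\cup\{(X_t,y')\}},X_{[t,t(m)]})\le w(\cH_L,X_{[t,t(m)]})$ by prepending $X_t$ as a root, and then argues by contradiction that two labels with conditional probability below $\frac12$ would make both weights exceed half the total with positive probability. That contradiction step is just the contrapositive of your averaging of $\indic{W_y\le \frac12 W}+\indic{W_{y'}\le \frac12 W}\ge 1$, and your handling of $X_t$ lying in the window and of the case $W=0$ fills in details the paper leaves implicit.
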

\begin{proof}
    We can prove this lemma by contradiction. 
    First, we have the following fact:
    For every pair $(y,y^*)$ such that $y\neq y^*$, and every sequence $\DataX$, we have:
    \begin{equation*}
        w(\cH_{L\cup{(X_t,y^*)}},X_{[t,t(m)]})+ w(\cH_{L\cup{(X_t,y)}},X_{[t,t(m)]})\leq w(\cH_L,X_{[t,t(m)]}).
    \end{equation*}
    This is because if a $k'$ tuple $(X_{t_1},\ldots,X_{t_{k'}})$ is shattered by $\cH_{L\cup{(X_t,y^*)}}$ and $\cH_{L\cup{(X_t,y)}}$, the $k'+1$ tuple $(X_t, X_{t_1},\ldots,X_{t_{k'}})$ will be shattered by $\cH_L$. 
    Thus, every tuple that contributes twice in the left part will also contribute twice in the right part. 
    Therefore, the inequality holds.
    
    Then, suppose there are $y',y''$, $y'\neq y''$, such that,
    \begin{equation*}
        \Pr\left[\left. w(\cH_{L\cup{(X_t,y')}},X_{[t,t(m)]})\leq \frac{1}{2} w(\cH_L,X_{[t,t(m)]})\right|X_{\leq t}\right] < \frac{1}{2}.
    \end{equation*}
    and
    \begin{equation*}
        \Pr\left[\left. w(\cH_{L\cup{(X_t,y'')}},X_{[t,t(m)]})\leq \frac{1}{2} w(\cH_L,X_{[t,t(m)]})\right|X_{\leq t}\right] < \frac{1}{2}.
    \end{equation*}
    Thus, with probability greater than $0$, we have 
    \begin{equation*}
        w(\cH_{L\cup{(X_t,y')}},X_{[t,t(m)]}) > \frac{1}{2} w(\cH_L,X_{[t,t(m)]}),
    \end{equation*}
    and
    \begin{equation*}
        w(\cH_{L\cup{(X_t,y'')}},X_{[t,t(m)]}) > \frac{1}{2} w(\cH_L,X_{[t,t(m)]}).
    \end{equation*}
    Thus,
    \begin{equation*}
        w(\cH_{L\cup{(X_t,y')}},X_{[t,t(m)]}) + w(\cH_{L\cup{(X_t,y'')}},X_{[t,t(m)]}) > w(\cH_L,X_{[t,t(m)]}).
    \end{equation*}
    That is a contradiction, and we finish the proof.
\end{proof}
Then we can start to prove Lemma~\ref{lem:partial_concept_1}, though the proof is very similar to the proof in the work of \citet{hanneke2024a}, we provide it for completeness.
\begin{proof}[Proof of Lemma~\ref{lem:partial_concept_1}]
    As we defined, the weight function, $w(\cH',X_{\leq T}) = |\{S: S\subseteq \{X_i\}_{i\leq T} \text{ such that }S\text{ shattered by }\cH'\}|$, which is the number of the subsequences of the sequence $X_{\leq T}$ that can be shattered by the concept class $\cH'$.

    Consider the $k$-th batch, consisting of $W_k = \{X_{\frac{k(k-1)}{2}+1},\cdots,X_{\frac{k(k+1)}{2}}\}$. Let 
    \begin{equation*}
        Z_k = \sum_{t = \frac{k(k-1)}{2}+1}^{\frac{k(k+1)}{2}}\indic{\hat{Y}_t \neq Y_t},
    \end{equation*}
    and
    \begin{equation*}
        V_k = Z_k - \E\left[Z_k\left| X_{\leq \frac{k(k-1)}{2}}\right.\right].
    \end{equation*}
    Notice that
    \begin{align*}
        &\E\left[V_k\left|X_{\leq \frac{k(k-1)}{2}}\right.\right]\\
        &= \E\left[Z_k - \E\left[Z_k\left| X_{\leq \frac{k(k-1)}{2}}\right.\right]\left|X_{\leq \frac{k(k-1)}{2}}\right.\right]\\
        &= \E\left[Z_k\left| X_{\leq \frac{k(k-1)}{2}}\right.\right] - \E\left[Z_k\left| X_{\leq \frac{k(k-1)}{2}}\right.\right] = 0. (a.s.)
    \end{align*}

    Thus, the sequence $V_k$ is a martingale difference sequence with respect to the block sequence, $W_1, W_2,\cdots$. 
    By the definition of $V_k$, we also have $-k\leq V_k \leq k$.
    Then by Azuma's Inequality, with probability $1-\frac{1}{K^2}$, we have 
    \begin{align*}
        \sum_{k=1}^K Z_k &\leq \sum_{k=1}^K \E\left[Z_k\left| X_{\leq \frac{k(k-1)}{2}}\right.\right] + \sqrt{-\log\left(\frac{1}{K^2}\right)\cdot 2 \cdot\left(\sum_{k = 1}^K k^2\right)}\\
        & \leq \sum_{k=1}^K \E\left[Z_k\left| X_{\leq \frac{k(k-1)}{2}}\right.\right] + \sqrt{4K^3\log K}.
    \end{align*}

    Then we need to get an upper bound for $\E\left[Z_k\left| X_{\leq \frac{k(k-1)}{2}}\right.\right]$. 
    According to the prediction rule and Lemma~\ref{lem:prob_bound}, every time we make a mistake, we have 
    \begin{equation}
    \label{eq:pr}
        \Pr\left[\left. w(\cH_{L\cup{(X_t,Y_t)}},X_{[t,\frac{k(k+1)}{2}]})\leq \frac{1}{2} w(\cH_L,X_{[t,\frac{k(k+1)}{2}]})\right|X_{\leq t}\right]\geq \frac{1}{2}.
    \end{equation}

    Due to the linearity of expectation, for every $k$, 
    \begin{align*}
        &\E\left[\left.\sum_{t=\frac{k(k-1)}{2}}^{\frac{k(k+1)}{2}} \indic{\hat{Y}_t \neq Y_t}\right|X_{\leq \frac{k(k-1)}{2}}\right]\\
        &= \E\left[\left.\sum_{t=\frac{k(k-1)}{2}}^{\frac{k(k+1)}{2}} \indic{\hat{Y}_t \neq Y_t}\indic{w(\cH_{L_t},X_{[t+1,\frac{k(k+1)}{2}]})\leq \frac{1}{2}w(\cH_{L_{t-1}},X_{[t,\frac{k(k+1)}{2}]})}\right|X_{\leq \frac{k(k-1)}{2}}\right]\\
        &+ \E\left[\left.\sum_{t=\frac{k(k-1)}{2}}^{\frac{k(k+1)}{2}} \indic{\hat{Y}_t \neq Y_t}\indic{w(\cH_{L_t},X_{[t+1,\frac{k(k+1)}{2}]}) > \frac{1}{2}w(\cH_{L_{t-1}},X_{[t,\frac{k(k+1)}{2}]})}\right|X_{\leq \frac{k(k-1)}{2}}\right].
    \end{align*}
    Here $L_t = \{(X_i,Y_i)\}$, where $i \leq t$ and the algorithm makes a mistake at round $i$.

    Notice the first part is the expected number of mistakes, each of which decreases the weight by half.
    For every realization of $X_{[\frac{k(k-1)}{2},\frac{k(k+1)}{2}]}$, $x_{[\frac{k(k-1)}{2}, \frac{k(k+1)}{2}]}$, let
    \begin{equation*}
        u(k) = \sum_{i = \frac{k(k-1)}{2}}^{\frac{k(k+1)}{2}} \indic{\hat{Y}_t \neq Y_t}\indic{w(\cH_{L_t},x_{[t+1,\frac{k(k+1)}{2}]})\leq \frac{1}{2}w(\cH_{L_{t-1}},x_{[t,\frac{k(k+1)}{2}]})}.
    \end{equation*}
    By the definition of the weight function and the fact that the length of the longest subsequence shattered by $\cH'$ is at most $d$, $w(\cH_{L_{\frac{k(k-1)}{2}}}, x_{[\frac{k(k-1)}{2},\frac{k(k+1)}{2}]}) \leq k^d$.
    Consider the last round $t\leq \frac{k(k+1)}{2}$ that $\hat{Y}_t \neq Y_t$, we have $w(\cH_{L_{t-1},x_{[t,\frac{k(k+1)}{2}]}}) \geq 1$, as $\{x_t\}$ must be shattered.
    Thus, we have $2^{u(k)-1}w(\cH_{L_{t-1}},x_{[t,\frac{k(k+1)}{2}]}) \leq w(\cH, x_{[\frac{k(k-1)}{2},\frac{k(k+1)}{2}]})$. 
    Therefore, $u(k) \leq d\log k + 1$, for every realization, $x_{[\frac{k(k-1)}{2},\frac{k(k+1)}{2}]}$. 
    Thus, 
    \begin{equation}
    \label{eq:part1}
        \E\left[\left.\sum_{t=\frac{k(k-1)}{2}}^{\frac{k(k+1)}{2}} \indic{\hat{Y}_t \neq Y_t}\indic{w(\cH_{L_t},X_{[t+1,\frac{k(k+1)}{2}]})\leq \frac{1}{2}w(\cH_{L_{t-1}},X_{[t,\frac{k(k+1)}{2}]})}\right|X_{\leq \frac{k(k-1)}{2}}\right] \leq 2 d \log k.
    \end{equation}

    Then consider the second part, we have
    \begin{align*}
        &\E\left[\left.\sum_{t=\frac{k(k-1)}{2}}^{\frac{k(k+1)}{2}} \indic{\hat{Y}_t \neq Y_t}\indic{w(\cH_{L_t},X_{[t+1,\frac{k(k+1)}{2}]}) > \frac{1}{2}w(\cH_{L_{t-1}},X_{[t,\frac{k(k+1)}{2}]})}\right|X_{\leq \frac{k(k-1)}{2}}\right]\\
        &= \E\left[\E\left[\left.\left.\sum_{t=\frac{k(k-1)}{2}}^{\frac{k(k+1)}{2}} \indic{\hat{Y}_t \neq Y_t}\indic{w(\cH_{L_t},X_{[t+1,\frac{k(k+1)}{2}]}) > \frac{1}{2}w(\cH_{L_{t-1}},X_{[t,\frac{k(k+1)}{2}]})}\right|X_{\leq t}\right]\right|X_{\leq \frac{k(k-1)}{2}}\right]\\
        &= \E\left[\left.\sum_{t=\frac{k(k-1)}{2}}^{\frac{k(k+1)}{2}} \indic{\hat{Y}_t \neq Y_t}\E\left[\left.\indic{w(\cH_{L_t},X_{[t+1,\frac{k(k+1)}{2}]}) > \frac{1}{2}w(\cH_{L_{t-1}},X_{[t,\frac{k(k+1)}{2}]})}\right|X_{\leq t}\right]\right|X_{\leq \frac{k(k-1)}{2}}\right]
    \end{align*}
    This is because $\hat{Y}_t$ and $Y_t$ only depend on $X_{\leq t}$. Due to the equation~\ref{eq:pr}, we have
    \begin{align*}
        &\indic{\hat{Y}_t \neq Y_t}\E\left[\left.\indic{w(\cH_{L_t},X_{[t+1,\frac{k(k+1)}{2}]}) > \frac{1}{2}w(\cH_{L_{t-1}},X_{[t,\frac{k(k+1)}{2}]})}\right|X_{\leq t}\right] \\
        &= \indic{\hat{Y}_t \neq Y_t}\Pr\left[\left.w(\cH_{L_t},X_{[t+1,\frac{k(k+1)}{2}]}) > \frac{1}{2}w(\cH_{L_{t-1}},X_{[t,\frac{k(k+1)}{2}]})\right|X_{\leq t}\right]\\
        &\leq \frac{1}{2} \indic{\hat{Y}_t \neq Y_t}.
    \end{align*}
    Thus, 
    \begin{align}
    \label{eq:part2}
        &\E\left[\left.\sum_{t=\frac{k(k-1)}{2}}^{\frac{k(k+1)}{2}} \indic{\hat{Y}_t \neq Y_t}\indic{w(\cH_{L_t},X_{[t+1,\frac{k(k+1)}{2}]}) > \frac{1}{2}w(\cH_{L_{t-1}},X_{[t,\frac{k(k+1)}{2}]})}\right|X_{\leq \frac{k(k-1)}{2}}\right]\\ \notag
        &\leq \frac{1}{2} \E\left[\left.\sum_{t=\frac{k(k-1)}{2}}^{\frac{k(k+1)}{2}} \indic{\hat{Y}_t \neq Y_t}\right|X_{\leq \frac{k(k-1)}{2}}\right]
    \end{align}
    Combining these two inequalities (\ref{eq:part1} and \ref{eq:part2}), we have
    \begin{equation}
        \E\left[\left.\sum_{t=\frac{k(k-1)}{2}}^{\frac{k(k+1)}{2}} \indic{\hat{Y}_t \neq Y_t}\right|X_{\leq \frac{k(k-1)}{2}}\right] \leq 2d \log k + \frac{1}{2} \E\left[\left.\sum_{t=\frac{k(k-1)}{2}}^{\frac{k(k+1)}{2}} \indic{\hat{Y}_t \neq Y_t}\right|X_{\leq \frac{k(k-1)}{2}}\right].
    \end{equation}
    Thus, for any $k$, we have 
    \begin{equation}
    \label{eq:batch}
        \E\left[\left.\sum_{t=\frac{k(k-1)}{2}}^{\frac{k(k+1)}{2}} \indic{\hat{Y}_t \neq Y_t}\right|X_{\leq \frac{k(k-1)}{2}}\right] \leq  4d \log k.    
    \end{equation}
    According to the inequality~\ref{eq:batch} for every $k$, $\E\left[Z_k\left| X_{\leq \frac{k(k-1)}{2}}\right.\right] \leq 4d\log k$. Thus, with probability at least $1-\frac{1}{K^2}$,
    \begin{equation*}
        \sum_{k=1}^K Z_k \leq \sum_{k=1}^K 4d\log k + \sqrt{4K^3\log K} \leq 4dK\log K + \sqrt{4K^3\log K}.
    \end{equation*}
    By the definition of $Z_k$, with probability at least $1-\frac{1}{K^2}$, 
    \begin{equation}
        \sum_{t=1}^{\frac{K(K+1)}{2}} \indic{\hat{Y}_t \neq Y_t} \leq 4dK\log K + \sqrt{4K^3\log K} \leq (4d+2)\sqrt{K^3\log K}.
    \end{equation}
    Let $T_K = \frac{K(K+1)}{2}$ be the number of instances in the sequence, with probability at least $1-\frac{1}{K^2}$
    \begin{equation}
        \sum_{t=1}^{T_K} \indic{\hat{Y}_t \neq Y_t} \leq (4d+2)T_K^{\frac{3}{4}}\sqrt{\frac{1}{2}\log T_K}.
    \end{equation}
    Define the event $E_K$ as the event that in the sequence $X_{\leq T_K}$, $\sum_{t=1}^{T_K} \indic{\hat{Y}_t \neq Y_t} > (4d+2)T_K^{\frac{3}{4}}\sqrt{\frac{1}{2}\log T_K}$. 
    Then we know $\Pr[E_K] \leq \frac{1}{K^2}$. Notice the fact that for any $K \in \nats$, $\sum_{k=1}^K \frac{1}{k^2} \leq \frac{\pi^2}{6}$. 
    By Borel-Cantelli lemma, we know that for any $T_K = \frac{K(K+1)}{2}$ large enough, $\sum_{t=1}^{T_K} \indic{\hat{Y}_t \neq Y_t} \leq (4d+2)T_K^{\frac{3}{4}}\sqrt{\frac{1}{2}\log T_K}$ happens with probability $1$.

    Then for any large enough $T$, we have $T_K\leq T \leq T_{K+1} \leq 2 T$. Thus, with probability $1$, 
    \begin{align}
        \sum_{t=1}^{T} \indic{\hat{Y}_t \neq Y_t} &\leq (4d+2)T_{K+1}^{\frac{3}{4}}\sqrt{\frac{1}{2}\log T_{K+1}}\\
        & \leq (4d+2) (2 T)^{\frac{3}{4}}\sqrt{\frac{1}{2}\log 2T}.
    \end{align}
    Therefore, for any large enough $T$ and a universal constant $c$, with probability $1$, 
    \begin{equation}
        \sum_{t=1}^{T} \indic{\hat{Y}_t \neq Y_t} \leq c T^{\frac{3}{4}}\sqrt{\log T}.
    \end{equation}
    Notice that $c T^{\frac{3}{4}}\sqrt{\log T}$ is $o(T)$. 
    That finishes the proof.
\end{proof}

We can also extend the learnability results to the agnostic case. 
Formally,
\begin{theorem}
\label{thm:all_sufficient_ag}
    If $\cH$ has no infinite indifferent LCLL tree, all processes admit universal online learning in the agnostic setting.
\end{theorem}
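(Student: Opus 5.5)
The plan is to repeat the reduction used for Theorem~\ref{thm:agnostic}, with Algorithm~\ref{alg:model-select} taking the place of the transductive realizable learner. Concretely, we build a countable family of experts and aggregate them with Squint using the non-uniform prior $\pi_{i,j}=\frac{1}{i(i+1)j(j+1)}$. Since the label process may be adaptive, the regret guarantee has to hold almost surely, and for that we will need a high-probability version of the expert bound.

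\textbf{Experts.} An expert $e_{i,j}$ is indexed by $I$ and $J$.
\begin{itemize}
\item $I=(X_{\le k},Y^*_{\le k})$ is a finite hallucinated prefix. Because $\cX$ and $\cY$ are countable, there are only countably many such prefixes. On rounds $t\le k$ the expert predicts $Y^*_t$ and advances the LCLL game exactly as Algorithm~\ref{alg:model-select} does.
\item After round $k$, the expert runs the prediction rule of Algorithm~\ref{alg:subroutine} on $\cH^{g_U}$, where $U$ is the game state frozen at $k$.
\item $J$ is a finite set of time indices. The expert adds $(X_t,Y_t)$ to $L$ exactly when $t\in J$.
\end{itemize}
Fix $\DataY^*\in\tR(\cH,\DataX)$. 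By Lemma~\ref{lem:winning_1}, applied pathwise since $(\DataX,\DataY^*)$ is realizable almost surely, there is almost surely a finite $k$ after which the game never advances. Beyond that time the induced partial class has longest shattered subsequence of length at most $d$. Take the expert whose $I$ equals the true prefix, and whose $J$ lists the rounds $t\le T$ with $Y_t=Y^*_t$ on which Algorithm~\ref{alg:subroutine}, run on $(\DataX,\DataY^*)$, errs. Then this expert disagrees with $Y^*_t$ at most as often as Lemma~\ref{lem:partial_concept_1} allows, namely $O(T^{3/4}\sqrt{\log T})$ times almost surely for all large $T$.

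\textbf{Counting the experts.} There are at most $\binom{T}{\le cT^{3/4}\sqrt{\log T}}$ admissible sets $J$. Enumerating them by $T$ and then by size gives $\log j_T=O(T^{3/4}\log^{3/2}T)$. The index $i$ does not depend on $T$.

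\textbf{Combining.} We decompose the regret against $\DataY^*$ exactly as in the proof of Theorem~\ref{thm:agnostic}.
\begin{itemize}
\item The Squint term is controlled by the bound $O\bigl(\sqrt{T(\log\log T+\log i+\log j_T)}\bigr)=O(T^{7/8}\,\mathrm{polylog}\,T)$, which is $o(T)$.
\item The expert-versus-$\DataY^*$ term is $o(T)$ almost surely. The reason is that $\indic{Y_t\ne e(X_t)}-\indic{Y_t\ne Y^*_t}\le\indic{e(X_t)\ne Y^*_t}$, and the expert errs against $Y^*$ only on rounds in $J$ or during the hallucinated prefix.
\end{itemize}

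\textbf{Main obstacle.} The Squint bound controls expected regret, because the learner randomizes. Universal consistency, however, asks for $\limsup\le 0$ almost surely against an adaptive $\DataY$. To handle this:
\begin{enumerate}
\item Treat the realized loss minus the conditional expected loss as a bounded martingale difference sequence.
\item Apply Azuma's inequality at each horizon $T$ with confidence level $1/T^2$, adding an $O(\sqrt{T\log T})$ deviation term.
\item Conclude by Borel--Cantelli.
\end{enumerate}
A second, more delicate point is that the expert with the right $J$ changes with $T$. Squint gives a simultaneous guarantee over all experts at each $T$, so taking the best expert separately at each horizon is legitimate. It remains to check that the set $J$ is well defined even though it depends on $\DataY$. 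This holds because $J$ is determined by $\DataX$, $\DataY$ and $\DataY^*$ up to time $T$, and the Squint comparison is pathwise over a fixed countable expert set.
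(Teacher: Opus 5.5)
Your architecture is the paper's: a hallucinated prefix $I$ for the game phase, an index set $J$, Squint with prior $\pi_{i,j}$, and the same regret decomposition. Your upgrade of Squint's expected-regret bound to an almost-sure one (Azuma on realized minus conditional-expected loss, then Borel--Cantelli) is correct, and the paper skips that step.

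The gap is the claim that the expert built from your $J$ ``disagrees with $Y^*_t$ at most as often as Lemma~\ref{lem:partial_concept_1} allows.'' You take $J$ to be the rounds with $Y_t=Y^*_t$ on which Algorithm~\ref{alg:subroutine}, \emph{run on} $(\DataX,\DataY^*)$, errs. Consider a round $s$ where that run errs but $Y_s\neq Y^*_s$. The run inserts $(X_s,Y^*_s)$ into $L$, but the expert cannot, since $s\notin J$ and it never observes $Y^*_s$. From then on the two runs hold different sets $L$ and make different predictions. The expert can therefore err on rounds with $Y_t=Y^*_t$ where the realizable run was correct. Lemma~\ref{lem:partial_concept_1} says nothing about the expert's run, and your assertion that the expert errs against $\DataY^*$ only on $J$ or the prefix is false. (The paper defines $J$ the same way and is equally loose here. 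Indexing experts by time--label pairs would restore exact simulation, but with $\cY$ infinite the label indices, and hence $\log j_T$, are no longer controlled by $T$.)

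The repair is to define $J$ from the expert's \emph{own} run, recursively: $t\in J$ iff $t$ is past the prefix, $Y_t=Y^*_t$, and the expert's prediction at $t$ differs from $Y_t$. That prediction depends only on $J\cap[1,t)$, so the recursion is well defined. The expert's $L$ then contains only pairs $(X_s,Y^*_s)$. With the sharper inequality $\indic{Y_t\neq e(X_t)}-\indic{Y_t\neq Y^*_t}\le\indic{e(X_t)\neq Y^*_t}\indic{Y_t=Y^*_t}$, the comparison term is at most $|J\cap[1,T]|$.

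You must then bound $|J\cap[1,T]|$ by re-running the proof of Lemma~\ref{lem:partial_concept_1} for this run, not by citing it:
\begin{itemize}
\item By Lemma~\ref{lem:prob_bound}, each $J$-round halves the window weight with conditional probability at least $1/2$. This uses that $Y_t$ does not anticipate $X_{>t}$.
\item Hence $|J\cap[1,T]|\le 2\sum_k u(k)+2\sum_k D_k$, where $u(k)$ counts the halving $J$-rounds in batch $k$ and the $D_k$ are batch-level martingale differences with $|D_k|\le k$.
\item The bound $u(k)\le d\log k+1$ is then needed only pathwise, on the event that the game freezes at the state encoded by $I$.
\end{itemize}
This also removes a second problem with citing the lemma: its hypothesis ``at most $d$ with probability $1$'' fails when the frozen state, and hence $d$, is random.

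Finally, your enumeration threshold $cT^{3/4}\sqrt{\log T}$ has $c$ depending on that random $d$, but the enumeration must be fixed in advance. Use a dominating threshold such as $T^{3/4}\log T$, or the paper's ordering by $|J|\max J$, which needs only $|J\cap[1,T]|=o(T)$.
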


The proof of this theorem is very similar to the proof in Section~\ref{sec:agnostic}. 
The only difference is that the experts simulate the running of Algorithm~\ref{alg:model-select} instead of Algorithm~\ref{alg:model-select-1}. 
For completeness, we provide the expert as Algorithm~\ref{alg:expert-1}. 
\begin{algorithm}
\caption{Expert $I,J$ (indexed by $e_{i,j}$).}
\label{alg:expert-1}
\begin{algorithmic}
    \STATE $U \gets \{\}$. $t'\gets 0$.$k \gets 1$.$L\gets \{\}$. $m\gets 1$
    \FOR{$t = 1, 2, 3, \ldots$}
        \IF{$t\leq k$}
            \IF{$\exists t' \leq t_1<t_2<\cdots<t_k \leq t$ and $C_k = \{(y_{t_{k-1}^{k-1}+1,\varnothing}, \ldots, y_{t_{k}^1,\varnothing}^{u_1},y_{t_k^1+1,u_1},\ldots, y_{t_k^2,u_1}^{u_2}\ldots,y_{t_k^k,u_{<k}}^{u_k}): u\in\{0,1\}^k\}$ such that $g_U((t_1,\dots,t_k),C_k) = (Y_{t'},\dots,Y_t)$} 
                \STATE Advance the game:
                \STATE $U\gets U\cup\{((t_1,\dots,t_k,C_k),g_U((t_1,\dots,t_k),C_k))\}$.
                \STATE $k\gets k+1$.
                \STATE $t'\gets t$.
            \ENDIF
            \STATE Predict $\hat{Y}_t = Y^*_t$.
        \ELSE
            \STATE Predict 
                \begin{equation*}
                    \hat{Y_t} = \arg\min_y\Pr\left[\left. w(\cH^{g_U}_{L\cup{(X_t,y)}},X_{[t, t(m)+t']})\leq \frac{1}{2} w(\cH^{g_U}_L,X_{[t, t(m)+t']})\right|X_{\leq t}\right]
                \end{equation*}.
            \IF{$t \in J$}
                \STATE $L\gets L\cup \{(X_t,Y_t)\}$.
            \ENDIF
            \IF{$t\geq \frac{m(m+1)}{2}+t'$.}
                \STATE $m \gets m+1$.
            \ENDIF
        \ENDIF
    \ENDFOR
\end{algorithmic}
\end{algorithm}

Thus, we need to show we can still build the sequence $j_T = o(T)$ such that for large enough $T$ there exists $j<j_T$, for every $t<T$, $e_{i,j}(X_t) = Y^*_t$ for at most $o(T)$ times. 
Formally, 
\begin{lemma}
\label{lem:expert-1}
    If $\cH$ has no infinite indifferent LCLL tree, for every realizable sequence $(\DataX,\DataY)\in\tR(\cH)$, we have a sequence $\{j_T\}_{T\in\nats}$ satisfies $\log j_T = o(T)$, such that for every large enough time $T$, we have an expert $e_{i,j}$ with $j\leq j_T$, such that for all $t\leq T$, $Y_t = e_{i,j}(X_t)$ except for at most $o(T)$ times.
\end{lemma}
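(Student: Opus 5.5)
We mirror the proof of Lemma~\ref{lem:expert} and replace the deterministic $O(\log T)$ bound of Lemma~\ref{lem:partial_concept} with the almost-sure $o(T)$ bound of Lemma~\ref{lem:partial_concept_1}. The expert index then splits into a countable part $i$, which encodes the game-updating stage $I$, and a part $j$, which encodes the finite set $J$ of mistake times up to horizon $T$.

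\textbf{Step 1 (first stage).} Since $\cX$ and $\cY$ are countable and each $I=(X_{\le k},Y^*_{\le k})$ is a finite string, the collection of all possible $I$ is countable; we enumerate it by $i$. By Lemma~\ref{lem:winning_1}, applied pathwise to the realizable sequence, which is realizable with probability one, the LCLL game is advanced only finitely often. Hence there is a finite $k$ after which $U$ is frozen. Choosing $I$ to be the true prefix up to that $k$, every expert $e_{i,\cdot}$ reproduces $Y_t$ for all $t\le k$. It also ends up in the same state $(U,t')$ as Algorithm~\ref{alg:model-select}. After $t'$, no subsequence is shattered by $\cH^{g_U}$ beyond some finite length $d$.

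\textbf{Step 2 (second stage).} Run Algorithm~\ref{alg:subroutine} on $\cH'=\cH^{g_U}$ from time $t'$, using the true labels. Lemma~\ref{lem:partial_concept_1} gives, almost surely and for all large $T$, a mistake count of at most $m(T)\le cT^{3/4}\sqrt{\log T}=o(T)$. Let $J_T$ be the set of mistake times up to $T$. The expert with input $J=J_T$ inserts into $L$ exactly the pairs that the subroutine would insert, because its update rule $t\in J$ matches the rule ``mistake at $t$''. By induction on $t$, its state $L$ and its predictions coincide with the subroutine's for all $t\le T$. Therefore it errs only at the times in $J_T$, which is $o(T)$ times.

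\textbf{Step 3 (counting).} We enumerate the finite subsets of $\{1,\dots,T\}$ of size at most $m(T)$ in order of increasing $T$. The number of such subsets is at most $\sum_{s\le m(T)}\binom{T}{s}\le T^{m(T)+1}$. We set $j_T$ to be the cumulative count, so that $\log j_T=O(m(T)\log T)=O(T^{3/4}(\log T)^{3/2})=o(T)$.

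The main obstacle is that $m(T)$ is only an almost-sure, eventual bound on a random quantity, while $j_T$ should be a fixed sequence. We handle this by defining $j_T$ from the deterministic envelope $cT^{3/4}\sqrt{\log T}$, and by stating the conclusion on the probability-one event on which the Borel--Cantelli argument of Lemma~\ref{lem:partial_concept_1} holds for all large $T$. A further subtlety is that the subroutine's prediction rule uses conditional probabilities over the future of $\DataX$. These probabilities are determined by the known process and by $X_{\le t}$, so the expert can compute them exactly as the learner does. This keeps the coupling in Step 2 valid.
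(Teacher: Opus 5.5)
Your proof is correct. Steps 1--2 match the paper: you reuse the finite game-updating stage through $I$, and you let $J$ record the subroutine's mistake times so that the expert replays Algorithm~\ref{alg:subroutine} exactly. The difference is in the counting step.

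\textbf{The paper's counting.} The paper does not use the explicit rate from Lemma~\ref{lem:partial_concept_1}. It fixes, once and for all, an ordering of the finite sets $J\subseteq\nats$ by the value $|J|\max J$ (the trick from \citet{hanneke2024a}). Under this ordering, the index of $J_T$ is at most the number of sets $J'$ with $|J'|\max J'\le |J_T|\,T$. This gives $\log j_T=O(\sqrt{|J_T|\,T})$, which is $o(T)$ as soon as $|J_T|=o(T)$, whatever the rate and whatever constants it hides.

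\textbf{Your counting.} You list, for $T=1,2,\dots$, the subsets of $\{1,\dots,T\}$ of size at most a prescribed envelope $M(T)$. This needs an explicit rate known in advance. In exchange it gives $\log j_T=O(M(T)\log T)=\tilde O(T^{3/4})$, sharper than the paper's roughly $T^{7/8}$, and this would improve the regret exponent that Squint yields.

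\textbf{One adjustment.} What must be independent of the sequence is the enumeration of the experts. The lemma lets $j_T$ itself depend on the sequence, so your worry about $j_T$ being fixed is slightly misplaced. The catch is that the constant $c$ in Lemma~\ref{lem:partial_concept_1} is really $(4d+2)$ times an absolute constant, and $d$ is determined by the final game state, hence by the realized sequence. So you should not build the enumeration from $cT^{3/4}\sqrt{\log T}$. Use a fixed envelope such as $M(T)=T^{3/4}\log T$ instead. For every fixed $d$ it eventually dominates $c(d)\,T^{3/4}\sqrt{\log T}$, and it still gives $\log j_T=O(T^{3/4}(\log T)^2)=o(T)$.
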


The proof of this lemma is also similar to the proof of Lemma~\ref{lem:expert}. 
The only difference is that we only know that for every $T$, the number of mistakes made by Algorithm~\ref{alg:model-select} is $o(T)$ instead of $O(\log T)$. 
Therefore, we need to use the trick from the work of \citet{hanneke2024a}, we get the index $j$ of Expert $I$, $J$, through the following order. 
We order it by the value of $|J|\max J$ (use $|J|$ as tie breaking). 
For example, $J = \{2,3,5\}$, the value used to index it is $|J|\max J = 3\cdot 5 = 15$. 
Therefore, we have the upper bound of $j_T$ as follows. 
\begin{align*}
    j_T &\leq |\{J: |J|\max J \leq k\}| = 1+\sum_{m=1}^k |\{J:|J|\leq \frac{k}{m}, \max J = m\}| \\
    &= 1+\sum_{m=1}^{\sqrt{k}} 2^{m-1} + \sum_{m=\sqrt{k}}^{k} \binom{m-1}{\leq (\frac{k}{m}-1)} \leq 2^{\sqrt{k}} + \sum_{m=\sqrt{k}}^{k} \left(\frac{em^2}{k}\right)^{\frac{k}{m}} \leq (k+1) e^{\sqrt{k}}.
\end{align*}
Thus, we have $\log j_T \leq \sqrt{|J|T}$. 
We know $|J| = o(T)$, thus, $\log j_T = o(T)$. 
And there is $j < j_T$ such that for all $t<T$, $e_{i,j}(X_t) \neq Y_t$ if and only if $j\in J$. 
That finishes the proof. 

Thus, the regret of the algorithm above is 
\begin{align*}
    \regret(\Alg,(\DataX,\DataY,\DataY^*),T) &= O\left(\sqrt{T\log \log T + T (\log i+ \log j) +(\log i+\log j)}\right) + o(T)\\
    &= O\left(\sqrt{T\log \log T + T \log j_T + \log j_T)}\right) + o(T)\\
    &= O\left(\sqrt{T\log \log T + T o(T) + o(T))}\right) + o(T) = o(T).
\end{align*}
Lemma~\ref{lem:all_necessity} also shows the necessity of Theorem~\ref{thm:all_sufficient_ag}. 

\subsection{Discussion}

This part provides more intuition about the optimistically universal online learning problem. 
The term ``optimistically universal online learning'' was introduced by \citet{JMLR:v22:17-298} in order to understand the learnability under the \emph{minimal assumptions}, that is, only assume that the data processes admit universal online learning. 
Intuitively, we would like to ask whether it is possible to learn whenever learning is possible. 
More recently, the work of \citet{hanneke2024a} adds the concept class restriction into the optimistically universal online learning setting and fully characterizes this problem for binary classification. 
In their work, they first figure out the minimal assumption that a process admits the universal online learning. 
Then they figure out whether there is a learning algorithm that learns all processes that admit universal online learning. 
We have defined admitting universal online learning before, so we define optimistically universally online learnable here. 
Formally,
\begin{definition}
    An online learning rule is \emph{optimistically universal} under concept class $\cH$ if it is strongly universally consistent under every process $\DataX$ that admits strongly universally consistent online learning under concept class $\cH$.

    If there is an online learning rule that is \emph{optimistically universal} under concept class $\cH$, we say $\cH$ is \emph{optimistically universally online learnable}.
\end{definition}
According to the work of \citet{hanneke2023universal}, we have the following theorem about the optimistically universal online learnability when all processes admit universal online learning. 
\begin{theorem}[\citep{hanneke2023universal}]
    If and only if $\cH$ has no infinite Littlestone tree, $\cH$ is optimistically universally online learnable.
\end{theorem}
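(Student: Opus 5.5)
Both directions go through the classical characterization of optimistically universal online learning when every process admits universal online learning. The key step is an equivalence: ``every process admits universal online learning under $\cH$'' holds if and only if $\cH$ has no infinite Littlestone tree. Once that is in place, the cited theorem follows.

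\emph{Sufficiency.} Assume $\cH$ has no infinite Littlestone tree. I would use the Littlestone Gale--Stewart game of \citet{bousquet2021theory}, adapted to multiclass labels as in \citet{hanneke2023universal}. In each round, $P_A$ proposes an instance together with two distinct labels, and $P_B$ picks one of the labels. $P_B$ wins once the version space becomes empty.

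1. The winning set for $P_B$ is open, since a win is witnessed at a finite stage. Hence the Gale--Stewart theorem from \Cref{sec:g-s-game} applies: one of the two players has a winning strategy.
2. A winning strategy for $P_A$ would unroll directly into an infinite Littlestone tree. Since no such tree exists, $P_B$ has a winning strategy $g$.
3. Build the online learner as in \Cref{alg:constant}. Run the game on the observed data, advancing it whenever the true label agrees with $g$'s response to the candidate pair. Predict with the partial class induced by $g$, which forbids the label $g$ would select.
4. Each learner mistake forces a game advance. This needs no knowledge of the future and works for any process $\DataX$. Because $P_B$ wins in finitely many rounds on every realizable sequence, the learner makes finitely many mistakes almost surely, and in particular it is universally consistent.

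The point to stress is that this argument is \emph{online}: it never uses future instances, so one rule works for every process simultaneously. That single rule is exactly what optimistic universality demands.

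\emph{Necessity.} Assume $\cH$ has an infinite Littlestone tree. I would exhibit a process $\DataX$ and show that no single rule is consistent for it. Following \citet{hanneke2023universal}, choose $\DataX$ to be a random branch of the tree, with each instance revealed only after the coin flip that selects it. This $\DataX$ admits universal learning by some rule that knows its law; the knowledge needed depends on the process, not on a single rule working for all processes. The adversary then labels each instance by the edge not predicted by the learner, which is allowed because the two edge labels are distinct. This forces a mistake at every step, so no online rule achieves sublinear loss for this labeling.

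\emph{Main obstacle.} The delicate step is reconciling necessity with \Cref{lem:all_necessity}, which says every process admits learning when $\cH$ has no infinite indifferent LCLL tree. The danger is circularity: if every process admits learning, then optimistic universality only requires a single rule that works universally. So I must show that an infinite Littlestone tree defeats any single online rule while each individual process is still learnable.

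I would try to get this via the random-branch process above. The branch-revealing process is learnable by a rule that knows the tree, since such a rule can read the next branch label from the instance. But no rule fixed in advance and ignorant of the process can be consistent under every such tree-process, because the adversary's choice of edge is decided only after the learner commits to its prediction. Making this \emph{per-process learnable} versus \emph{not uniformly learnable} gap rigorous, likely with a diagonalization over processes, is the step I expect to require the most care.
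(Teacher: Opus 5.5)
Your sufficiency direction is sound. In the classical game where $P_A$ offers an instance and two distinct labels, predict the label that $g$ never selects at $X_t$, if one exists; it is necessarily unique. Then every mistake can be recorded as a legal move, so $P_B$'s win bounds the mistakes on every realizable sequence without looking ahead. Algorithm~\ref{alg:constant} itself is transductive, since its game is played over a fixed, known $\DataX$; the right template is the learner of \citet{bousquet2021theory}.

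The gap is in necessity, and it starts with your ``key step.'' It is false that every process admits universal online learning iff $\cH$ has no infinite Littlestone tree. Take thresholds $\{x\mapsto\indicin{x\geq a}: a\in\mathbb{Q}\}$ on $\cX=\mathbb{Q}$. They have an infinite Littlestone tree (binary search). They have VC dimension $1$, hence no LCLL tree of depth $2$, so by the paper's stochastic-process theorem every process admits universal online learning. That regime is exactly where the cited theorem has content. The paper only cites it, and the sentence introducing it places it under the standing hypothesis that all processes admit universal online learning. In this paper that hypothesis comes from the absence of an infinite indifferent LCLL tree, via Lemma~\ref{lem:all_sufficiency} and its stochastic analogue. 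Lemma~\ref{lem:all_necessity}, which you invoke, is the lower bound. Per-process learnability cannot be extracted from the Littlestone tree. Without the hypothesis the ``only if'' part is false: $\cH=\cY^{\cX}$ on a countably infinite $\cX$ has an infinite Littlestone tree, yet memorization is consistent on every process that admits learning.

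Your necessity plan is also internally inconsistent, for three reasons.
\begin{itemize}
\item A process on which ``no single rule is consistent'' does not admit universal online learning, so it can never witness a failure of optimistic universality.
\item A rule that knows the tree cannot ``read the next branch label from the instance.'' The label of $X_t$ is fixed by the coin that selects the child, which is revealed only through $X_{t+1}$, after the prediction; and both children may carry the same instance.
\item Labeling ``by the edge not predicted'' is adaptive, whereas labels must be $f(X_t)$ for a fixed non-random $f$.
\end{itemize}

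Once the standing hypothesis is in place, no diagonalization over processes is needed. Given any rule, walk down the infinite Littlestone tree with fair coins and label each node by the edge taken. No indifference is needed, since a plain online rule never sees future instances. Each prediction errs with probability at least $\frac12$, because $Y_t$ is one of two distinct values decided by a coin independent of $X_{\leq t}$, $Y_{<t}$ and the rule's randomness. Reverse Fatou then gives $\E[\limsup_T \frac1T\sum_{t\leq T}\indicin{\hat{Y}_t\neq Y_t}]\geq \frac12$, so some fixed branch defeats the rule with positive probability. That branch is a deterministic process with a well-defined realizable labeling: instances along a branch are distinct, since otherwise the two child labels could not both be consistent. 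By the standing hypothesis this process admits universal online learning, so the rule is not optimistically universal.
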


Then we also want to characterize the necessary and sufficient conditions when a process admits universal online learning when $\cH$ has an infinite indifferent LCLL tree. 
A guess is the condition in the work of \citet{hanneke2024a}, which is: 
\begin{condition}[\citep{hanneke2024a}]
\label{cond:1}
For a given concept class $\cH$, there exists a countable set of experts $E = \{e_1,e_2,\dots\}$, such that $\forall \DataY^*\in \tR(\cH,\DataX)$, $\exists j_T\rightarrow \infty$, with $\log j_T = o(T)$, such that:
\begin{equation}
\label{eq:cond_1}
    \E\left[ \limsup_{T\rightarrow \infty} \min_{e_j:j\leq j_T} \frac{1}{T} \sum_{t=1}^T \indic{e_j(X_t) \neq Y^*_t} \right] = 0
\end{equation}
\end{condition}
Here, the expert is defined as a function that generates the label only based on $X_t$. 
However, this condition is not necessary. 
Consider that the concept class $\cH$ contains all measurable functions. 
We can construct the following sequence $\DataX = \{X_t\}_{t\in\nats}$, for every $i \in \nats$, $X_t$ takes the same value for every $2^i-1 \leq t \leq 2^{i+1}-1$, but takes different value for different $i$. 
No matter what realizable label sequence $\DataY^*\in \tR(\cH,\DataX)$, this sequence admits universal online learning by memory, in other words, memorizing the history of the sequence. 
However, for every countable set of experts $E = \{e_1,e_2,\ldots\}$, we can build the following realizable label sequence such that Condition~\ref{cond:1} does not hold. 
For every $2^i-1 \leq t \leq 2^{i+1}-1$ and $j\leq 2^{2^{i+1}}$, $Y_t \neq e_j(X_t)$. 
It is obvious that this sequence is realizable, however, it does not satisfy Condition~\ref{cond:1}, as for every $j_T\rightarrow \infty$, with $\log j_T = o(T)$, and for every $j<j_T$, we have $e_j$ makes a mistake at each round. 

Therefore, the condition above is not necessary for the multiclass online learning with an unbounded label space. 
Thus, we left this as an open question as well. 
\begin{openq}
    What is the sufficient and necessary condition of a process such that it admits universal online learning when the label space is unbounded (countably infinite)?
\end{openq}

\end{document}